\documentclass[twoside]{article}
\usepackage[accepted]{aistats2025}

\usepackage{microtype}
\usepackage{graphicx}
\usepackage{subfigure}
\usepackage{booktabs} %
\usepackage{algorithm}
\usepackage{algorithmic}
\usepackage{comment}
\usepackage{colortbl}
\usepackage{graphicx}
\usepackage{multirow}
\usepackage{subfigure}
\usepackage{bbm}
\usepackage{color}

\usepackage{hyperref}

\usepackage{amsmath}
\usepackage{amssymb}
\usepackage{mathtools}
\usepackage{amsthm}
\allowdisplaybreaks

\usepackage[capitalize,noabbrev]{cleveref}

\theoremstyle{plain}
\newtheorem{theorem}{Theorem}[section]
\newtheorem{proposition}[theorem]{Proposition}
\newtheorem{lemma}[theorem]{Lemma}
\newtheorem{corollary}[theorem]{Corollary}
\theoremstyle{definition}

\newtheorem{assumption}[theorem]{Assumption}

\theoremstyle{remark}

\makeatletter
\DeclareFontFamily{U}{tipa}{}
\DeclareFontShape{U}{tipa}{m}{n}{<->tipa10}{}
\newcommand{\arc@char}{{\usefont{U}{tipa}{m}{n}\symbol{62}}}%

\newcommand{\arc}[1]{\mathpalette\arc@arc{#1}}

\newcommand{\arc@arc}[2]{%
  \sbox0{$\m@th#1#2$}%
  \vbox{
    \hbox{\resizebox{\wd0}{\height}{\arc@char}}
    \nointerlineskip
    \box0
  }%
}
\makeatother

\newcommand{\argmin}{\mathop{\rm argmin}\limits}


%


\newcommand{\boldtheta}{{\boldsymbol{\theta}}}

\newcommand{\boldalpha}{{\boldsymbol{\alpha}}}

\newcommand{\boldTheta}{{\boldsymbol{\Theta}}}
\newcommand{\boldf}{{\boldsymbol{f}}}

\newcommand{\boldx}{{\boldsymbol{x}}}

\newcommand{\boldF}{{\boldsymbol{F}}}

\newcommand{\boldy}{{\boldsymbol{y}}}

\newcommand{\mathbbE}{\mathbb{E}}

\newcommand{\dx}{\mathrm{d}\boldx}



\newcommand{\vertiii}[1]{{\left\vert\kern-0.25ex\left\vert\kern-0.25ex\left\vert #1 
    \right\vert\kern-0.25ex\right\vert\kern-0.25ex\right\vert}}

\newcommand{\mb}[1]{\boldsymbol{#1}}

\newcommand{\br}[1]{\left( #1 \right)}
\newcommand{\brs}[1]{\left[ #1 \right]}
\newcommand{\brangle}[1]{\left\langle #1 \right\rangle}

\newcommand{\E}{\mathbb{E}}
\newcommand{\R}{\mathbb{R}}

\newcommand\myeq[1]{\stackrel{\mathclap{\tiny\mbox{#1}}}{=}}

\newcommand{\ba}{\boldsymbol{\alpha}}

\newcommand{\bome}{\boldsymbol{\omega}}

\newcommand{\p}{p(\mb{x}; \mb\theta)}   
\newcommand{\psmall}{p_{\mb\theta}}
\newcommand{\pbar}{\bar{p}(\mb{x}; \mb\theta)}   


\newcommand{\partialthetasmall}{\partial_t \mb{\theta}(t)}

\newcommand{\partialdthetasmall}{\partial_t^2 \mb{\theta}_t}

\newcommand{\score}[1]{{\nabla_{\mb{x}}}\log{#1}(\boldx)}

\newcommand{\scoret}[1]{\partial_t \log {#1}(\boldx)}

\newcommand{\Eqtsmall}{\E_{q_t}}

\newcommand{\model}{s(\mb{x}, t)}
\newcommand{\modelsmall}{s_{t}}
\newcommand{\modelsmallsample}[1]{s(#1)}




\newcommand{\timeintegral}{\int_{t=0}^{t=1}}

\newcommand{\xintegral}{\int_{\mathcal{X}}}

\usepackage[round]{natbib}

\bibliographystyle{apalike}

\begin{document}
\runningauthor{Daniel J. Williams$^*$, Leyang Wang$^*$, Qizhen Ying, Song Liu, Mladen Kolar}

\runningtitle{High-Dimensional Differential Parameter Inference in Exponential Family using Time Score Matching}

\twocolumn[
\aistatstitle{High-Dimensional Differential Parameter Inference in 
Exponential Family using Time Score Matching}
\aistatsauthor{
    \href{mailto:daniel.williams@bristol.ac.uk}{Daniel J. Williams}$^{*,1}$ \And
    \href{mailto:leyang.wang@bristol.ac.uk}{Leyang Wang}$^{*,1}$ \And
    \href{mailto:exet6278@ox.ac.uk}{Qizhen Ying}$^{1}$}
\vspace{6pt}
\aistatsauthor{
    \href{mailto:song.liu@bristol.ac.uk}{Song Liu}$^{1}$ \And
    \href{mailto:mkolar@marshall.usc.edu}{Mladen Kolar}$^{2}$}
\vspace{6pt}
\aistatsaddress{ $^1$School of Mathematics, University of Bristol \And $^2$USC Marshall School of Business}]

\begin{abstract}
This paper addresses differential inference in time-varying parametric probabilistic models, like graphical models with changing structures. Instead of estimating a high-dimensional model at each time point and estimating changes later, we directly learn the differential parameter, i.e., the time derivative of the parameter. The main idea is treating the time score function of an exponential family model as a linear model of the differential parameter for direct estimation. We use time score matching to estimate parameter derivatives. We prove the consistency of a regularized score matching objective and demonstrate the finite-sample normality of a debiased estimator in high-dimensional settings. Our methodology effectively infers differential structures in high-dimensional graphical models, verified on simulated and real-world datasets. The code reproducing our experiments can be found at: \url{https://github.com/Leyangw/tsm}.
\end{abstract}

\section{Introduction}

In non-stationary environments, the data-generating process varies over time due to factors like news, geopolitical events, and economic reports \citep{Lu2019,quinonero2022dataset}. 
Understanding these changes is crucial for many applications.

When probabilistic model parameters change over time, learning their time-derivative, or differential parameter, can be beneficial. Although time-varying models are well-studied \citep{kolar2011,Kolar2012,gibberd2017regularized,yang2018estimating}, few focus on differential parameters. Learning these is advantageous: they reveal the underlying dynamics of systems, such as the SIR model \citep{tang2020review}. Differential parameters can also be more interpretable; stationary parameters become zero after differentiation, making the estimation target \emph{sparse}. This helps in handling high-dimensional overparametrized models 
\citep{hastie2015statistical,Wainwright_2019} and inferring with fewer samples.

Differential model estimation has been considered in a ``discrete setting'' \citep{zhao2014,liu2017,zhao2019direct,Byol2021}, where the focus is on identifying changes in the model between two discrete time points. However, in this paper, we learn the differential parameter in a continuous setting, where the parameter is a continuous function of the time.

We propose an efficient estimator for the differential parameter in high-dimensional probabilistic models. Our method estimates the differential parameter directly, without needing to estimate the parameters themselves. By addressing an $\ell_1$-regularized objective, our estimator achieves consistency in high-dimensional contexts, with a convergence rate dependent only on the dimensionality of the time-varying parameters. The debiased estimator shows asymptotic normality, making it suitable for parameter inference, and it efficiently estimates complex models without evaluating the normalizing term. We validate our theorems through synthetic experiments and demonstrate superior performance compared to a recent time-varying parameter estimation method \citep{yang2018estimating}. This is the first work to tackle the differential parameter estimation in exponential family in a continuous setting.

\section{Background}
Let $\mb{x} \in \mathbb{R}^d$ be a random sample generated from a distribution whose density function is $q(\mb{x})$. We also define a parametric model $\p$
\begin{align}
\label{eq:exp_model}
    \p = \frac{\pbar}{z(\mb\theta)},  \; \; z(\mb\theta) = \int \pbar d\mb{x}, 
\end{align}
where $\pbar$ is known analytically and $z(\mb\theta)$ is the normalizing constant which may be hard to compute or approximate. 

Before introducing the proposed approach, we first review two important notions, high dimensional probabilistic graphical model and score matching. 

\subsection{Estimating Probabilistic Model Parameters in High-dimensional Settings}

Suppose we are interested in using samples drawn from $q$ to learn an over-parametrized model $\p$ with more parameters than  necessary to describe the data. The ``true model" generating data might be simpler with few parameters. Sparse probabilistic model estimation aims to find a \emph{sparse parameterization} of the high-dimensional model that best fits the data. Previous studies have used sparsity-inducing regularization (e.g., $\ell_1$ penalty) along with the likelihood function on the dataset $\mathcal{D}$ \citep{Tibshirani1996,hastie2015statistical,yuan2007,drton2017structure}.
Consider a pairwise graphical model
\begin{align*}
    \bar{p}(\boldx; \boldTheta) := \exp\left( \sum_{i \le j} \Theta_{i,j} f(x_i, x_j) \right), i,j \in [d].
\end{align*}
A large $d$ results in $\boldTheta$ being high-dimensional, leading to $p(\boldx; \boldTheta)$ being over-parameterized. Assuming the true model $q(\boldx) = p(\boldx; \boldTheta^*)$ is sparse with parameter $\boldTheta^*$, the regularized likelihood estimator can effectively estimate the graphical model's parameter. 
\subsection{Score Matching \citep{scorematching1}}
Score matching involves the score functions of the data which are independent of the normalising constant $z(\mb\theta)$.
The classical score matching objective involves taking the expected squared distance between target score $\score{q}$ and model score $\score{\psmall}$, given by
\begin{equation}
    \mathcal{L}_\mathrm{SM}(\mb\theta) = \mathbbE_{q} \brs{ \|\score{q} - \score{\psmall}\|^2 }.
    \label{eq:score_matching_basic1}
\end{equation}
Note that the \cref{eq:score_matching_basic1} is not tractable due to the unknown $\score{q}$. 
\citet{scorematching1} showed that integration by parts enables a tractable version of  \cref{eq:score_matching_basic1} to eliminate any unknown quantities.  

Score matching has been extended for numerous settings, which include \textit{generalised} score matching \citep{scorematching2, yu2019}, applications to high-dimensional graphical models \cite{highdgraphical, pairwisegraphical} and truncated density estimation \citep{song, williams2022}.

\section{Formulation and Motivation}

In this section, we formally introduce the problem of estimating differential parameters in exponential family distributions. 
Let's assume that the true time-varying data generating distribution has a density function $q_t(\mb{x})$.  $q_t(\mb{x})$ is in the exponential family and is parameterized as: 
\begin{equation}
q_t(\mb{x}) = q(\mb{x}; \mb\theta^*(t)) = \frac{\exp \left(\brangle{\mb\theta^*(t), \mb{f}(\mb{x})}\right)
}{z(\mb\theta^*(t))},
\label{eq:q_t}
\end{equation}
where the normalising term $z(\mb\theta^*(t))$ is defined as 
\[
z(\mb\theta^*(t)) := \int \exp \left(\brangle{\mb\theta^*(t), \mb{f}(\mb{x})} \right) \dx, 
\] 
which is generally computationally intractable. 
The density function has a natural parameter $\mb\theta^*(t)$ that changes with $t$. 
We assume that $\partial_t \mb\theta^*(t)$ is a sparse vector, i.e., only a few elements in $\mb\theta^*$ depend on $t$. 
We denote $\mb{f}:\R^d \to \R^k$ as the \textit{feature function}, which does not change over time.

\subsection{Learning Differential Parameters, not Parameters}

In this paper, we want to know how $q_t$ changes over time. 
Specifically, we want to learn the time derivative of the parameter, i.e., $\partial_t \boldtheta^*(t)$.

We could adopt the approach that approximates the density function $q(\mb{x}; \mb\theta^*(t))$ with a parametric model $p(\mb{x}; \hat{\mb\theta}(t))$ 
and subsequently, differentiates the estimated parameter $\hat{\mb\theta}(t)$ with respect to $t$. However, if only a few elements in $\boldtheta^*(t)$ change with $t$, then modelling and learning the full density, including all stationary parameters that do not change with $t$, is inefficient, especially when $\boldtheta^*(t)$ is high dimensional.

\subsection{Modeling Time Score via Differential Parameters}

In the remainder of this paper, we refer to $\partial_t \log q_t(\boldx)$, the time derivative of the log density, as the ``time score function''. 

A key observation that motivated this work is that the time score function can be expressed as a function of the differential parameter $\partial_t \boldtheta(t)$. It follows from~\cref{eq:q_t} that $\partial_t \log q_t(\boldx)$ has a simple closed-form expression given in the following proposition. 

\begin{proposition}\label{thm:score}
The time score function for $q_t$, which is defined in \cref{eq:q_t}, can be written as
\begin{align*}
    \partial_t \log q_t(\boldx)
    = \langle \partial_t\boldtheta^*(t), \boldf(\boldx) \rangle - \E_{\mb y \sim q_t} \left[ \langle \partial_t \boldtheta^*(t), \boldf(\boldy) \rangle \right].
\end{align*}
\end{proposition}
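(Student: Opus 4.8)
The plan is to take logarithms in \cref{eq:q_t} and differentiate in $t$ directly. Writing $\log q_t(\boldx) = \langle \boldtheta^*(t), \boldf(\boldx)\rangle - \log z(\boldtheta^*(t))$, I note that the feature function $\boldf$ carries no $t$-dependence, so differentiating the first term with respect to $t$ gives $\langle \partial_t \boldtheta^*(t), \boldf(\boldx)\rangle$ immediately. The entire content of the proposition then reduces to showing that $\partial_t \log z(\boldtheta^*(t))$ equals the stated expectation $\E_{\boldy \sim q_t}[\langle \partial_t \boldtheta^*(t), \boldf(\boldy)\rangle]$.

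To handle the log-normalizer, I would use $\partial_t \log z(\boldtheta^*(t)) = \partial_t z(\boldtheta^*(t)) / z(\boldtheta^*(t))$ and differentiate under the integral sign, obtaining $\partial_t z(\boldtheta^*(t)) = \int \exp(\langle \boldtheta^*(t), \boldf(\boldx)\rangle)\,\langle \partial_t \boldtheta^*(t), \boldf(\boldx)\rangle\,\dx$ via the chain rule applied to the exponent. Dividing through by $z(\boldtheta^*(t))$ recombines the exponential factor with the normalizer into the density $q_t(\boldx)$ itself, so the quotient collapses to $\int q_t(\boldx)\,\langle \partial_t \boldtheta^*(t), \boldf(\boldx)\rangle\,\dx = \E_{\boldy \sim q_t}[\langle \partial_t \boldtheta^*(t), \boldf(\boldy)\rangle]$. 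Subtracting this expression from the first term yields exactly the claimed identity, so the computation closes.

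The only non-mechanical point is justifying the interchange of $\partial_t$ and the integral defining $z(\boldtheta^*(t))$. This requires a standard Leibniz/dominated-convergence argument: it suffices that $\partial_t \boldtheta^*(t)$ is finite and that $\exp(\langle \boldtheta^*(t), \boldf(\boldx)\rangle)\,|\langle \partial_t \boldtheta^*(t), \boldf(\boldx)\rangle|$ is integrable and locally dominated in a neighborhood of each $t$, conditions that hold whenever the exponential family is well-defined (finite $z$) with a smooth parameter path $\boldtheta^*(\cdot)$. I expect this regularity verification to be the main obstacle, though a minor one; everything else is routine calculus.
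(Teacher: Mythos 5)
Your proposal is correct and follows essentially the same route as the paper's proof: take the logarithm of the density, differentiate the linear term directly, and express $\partial_t \log z(\boldtheta^*(t))$ by differentiating under the integral sign so that the quotient collapses to $\E_{\boldy \sim q_t}[\langle \partial_t \boldtheta^*(t), \boldf(\boldy)\rangle]$. Your explicit attention to the Leibniz-rule justification is a small addition the paper leaves implicit, but the argument is otherwise identical.
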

A proof is given in \cref{app:scoret}. \cref{thm:score} shows that $\scoret{q_t}$ does not involve the normalization constant explictly and depends solely on the differential parameter $\partial_t \boldtheta^*(t)$. 
This is an analogue to how the density ratio function only depends on the difference between two parameters $\boldtheta_0 - \boldtheta_1$ as we explain in \cref{sec:back:est.prob.model}. 
Inspired by \cref{thm:score}, we directly model the time score function $\scoret{q_t}$ as 
\begin{equation}
\model \coloneqq \brangle{\partialthetasmall, \mb{f}(\mb{x})} - \E_{\boldy \sim q_t} \brs{\brangle{\partialthetasmall, \mb{f}(\mb{y})}},
\label{eq:model}
\end{equation}
where $\partial_t \boldtheta(t)$ is a parameterization of the differential parameter that we will detail later. 
If we can estimate the model in \cref{eq:model}, we obtain the differential parameter $\partialthetasmall$.

\section{Estimator of Differential Parameters}

\subsection{Time Score Matching}\label{sec:estimating:obj}

Our method seeks to approximate the time score function $\scoret{q_t}$ by $\model$. 
We adopt a learning criterion called time score matching \citep{choi2022}. Specifically, we 
look for $\partial_t \mb\theta(t)$ that minimizes the weighted Fisher-Hyvärinen divergence \citep{Lyu2009}
\begin{equation}
     \mathcal{L}(\partial_t \boldtheta(t)) \coloneqq \timeintegral \Eqtsmall\brs{g(t) \| \scoret{q_t} - \model \|^2} dt.
    \label{eq:deriv:obj1} 
\end{equation} 
\citet{choi2022} uses a neural network to model the time score, which does not allow the direct inference of $\partial_t \boldtheta(t)$. 
However, in our paper, Proposition \ref{thm:score} relates $\partial_t \boldtheta(t)$ to the time score function, enabling the direct estimation of $\partial_t \boldtheta(t)$ using time score matching.

\cref{eq:deriv:obj1} is a truncated score matching problem \citep{song,yu2021}: The variable being differentiated in the log-density function, $t$ is in a bounded domain $[0,1]$. 
The following theorem provides a tractable learning objective without intractable terms such as $\scoret{q_t}$. 
\begin{theorem}
Suppose that $g(t) = 0$ for $t = 0 \text{ or } 1$.
The objective in \eqref{eq:deriv:obj1}
can be written as
\begin{align}
    & \mathcal{L}(\partial_t \boldtheta(t)) = \text{C} + \int_0^1 \Eqtsmall\left[g(t) \model^2 \right]\mathrm{d}t + \notag \\
    &\int_0^1 2 \Eqtsmall\left[  \partial_t g(t) \brangle{\partialthetasmall, \mb{f}(\mb{x})}  + g(t) \brangle {\partial^2_t \boldtheta(t), \mb{f}(\mb{x})}\right]\mathrm{d}t
    \label{eq:deriv:objfinal}
\end{align}
where $C$ is a constant independent of $s(\mb x; t)$.
\label{thm:obj_model}
\end{theorem}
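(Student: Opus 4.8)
The plan is to expand the squared norm in \eqref{eq:deriv:obj1} and isolate the only term that couples the model $\model$ to the intractable time score $\scoret{q_t}$, then remove that coupling by integrating by parts in the time variable $t$. Writing $\|\scoret{q_t} - \model\|^2 = (\scoret{q_t})^2 - 2\,\scoret{q_t}\,\model + \model^2$, the pure time-score term $\timeintegral \Eqtsmall[g(t)(\scoret{q_t})^2]\,\mathrm{d}t$ contains no $\partialthetasmall$ and is absorbed into the constant $C$, while the quadratic term $\timeintegral \Eqtsmall[g(t)\model^2]\,\mathrm{d}t$ already appears verbatim in \eqref{eq:deriv:objfinal}. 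All the work therefore lies in the cross term $-2\timeintegral \Eqtsmall[g(t)\,\model\,\scoret{q_t}]\,\mathrm{d}t$, which still involves $\scoret{q_t}$.

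To handle the cross term I would first note that the mean term in the model, $\E_{\boldy \sim q_t}[\langle\partialthetasmall,\boldf(\boldy)\rangle]$, depends on $t$ only and so factors out of $\Eqtsmall[\cdot\,\scoret{q_t}]$; it then multiplies $\Eqtsmall[\scoret{q_t}] = \xintegral \partial_t q_t(\boldx)\,\dx = \partial_t \xintegral q_t(\boldx)\,\dx = \partial_t 1 = 0$ and hence drops out. Thus $\Eqtsmall[\model\,\scoret{q_t}] = \Eqtsmall[\langle\partialthetasmall,\boldf(\boldx)\rangle\,\scoret{q_t}]$, and using the identity $q_t(\boldx)\,\scoret{q_t} = \partial_t q_t(\boldx)$ the cross term becomes $-2\timeintegral g(t)\xintegral \langle\partialthetasmall,\boldf(\boldx)\rangle\,\partial_t q_t(\boldx)\,\dx\,\mathrm{d}t$.

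Next I would invoke Fubini to exchange the order of integration and integrate by parts in $t$ for each fixed $\boldx$, taking $u = g(t)\langle\partialthetasmall,\boldf(\boldx)\rangle$ and $\mathrm{d}v = \partial_t q_t(\boldx)\,\mathrm{d}t$, so that $v = q_t(\boldx)$. The boundary term $[g(t)\langle\partialthetasmall,\boldf(\boldx)\rangle\,q_t(\boldx)]_{t=0}^{t=1}$ vanishes by the hypothesis $g(0)=g(1)=0$, and $\partial_t u = \partial_t g(t)\langle\partialthetasmall,\boldf(\boldx)\rangle + g(t)\langle\partial_t^2\boldtheta(t),\boldf(\boldx)\rangle$. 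Reassembling and swapping the order of integration back yields exactly $2\timeintegral \Eqtsmall[\partial_t g(t)\langle\partialthetasmall,\boldf(\boldx)\rangle + g(t)\langle\partial_t^2\boldtheta(t),\boldf(\boldx)\rangle]\,\mathrm{d}t$, the last term of \eqref{eq:deriv:objfinal}.

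The main obstacle is analytic rather than algebraic: I must justify differentiating under the integral over $\mathcal{X}$, applying Fubini, and, most importantly, the vanishing of the boundary term. The last point requires $g(0)=g(1)=0$ together with enough regularity ($q_t$, $\boldf$, and $\partialthetasmall$ suitably bounded and integrable near the endpoints) for $\xintegral g(t)\langle\partialthetasmall,\boldf(\boldx)\rangle q_t(\boldx)\,\dx$ to be genuinely zero at $t\in\{0,1\}$; I would record these as standing smoothness and integrability assumptions. This is the time-domain analogue of the spatial integration-by-parts trick of classical score matching, with the weight $g$ enforcing the boundary condition that a truncation boundary enforces in truncated score matching.
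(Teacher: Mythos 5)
Your proof is correct, but it is organized differently from the paper's. The paper first integrates by parts in $t$ while keeping the score model generic, obtaining an intermediate identity involving $\Eqtsmall\brs{\modelsmall}$ and $\Eqtsmall\brs{\partial_t \modelsmall}$; only then does it substitute the exponential-family model, at which point $\Eqtsmall\brs{\modelsmall}=0$ but the differentiated conditional expectation $\partial_t \E_{q_t}\brs{\brangle{\partialthetasmall, \mb f(\mb y)}}$ reintroduces a $\partial_t q_t$ term, forcing a \emph{second} integration by parts (again using $g(0)=g(1)=0$) to arrive at \cref{eq:deriv:objfinal}. You instead exploit the model structure inside the cross term from the outset: the centering term depends only on $t$, so it factors out of the expectation and is annihilated by $\Eqtsmall\brs{\scoret{q_t}} = \xintegral \partial_t q_t(\boldx)\, \dx = \partial_t 1 = 0$, after which a \emph{single} integration by parts in $t$ finishes the argument. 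Both routes rest on the same two facts---the identity $q_t\,\scoret{q_t} = \partial_t q_t$ and the boundary condition on $g$---and need the same regularity (differentiation under the integral over $\mathcal{X}$, Fubini), which you rightly flag and which the paper uses implicitly. What your version buys is economy: one integration by parts instead of two, and a cleaner isolation of where the boundary hypothesis is used. What the paper's version buys is generality: its intermediate formula holds for an arbitrary score model $s(\mb x, t)$ (it is the analogue of the result in \citet{choi2022}), with the linear-in-parameter exponential-family structure entering only afterwards, which makes explicit exactly which terms of the final objective are consequences of that structure.
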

A proof is given in~\cref{app:SMderivation_1}. 
\cref{thm:obj_model} is \emph{not} a starightforward application of Theorem 1 in \citep{choi2022}. The $\E_{\mb y \sim q_t} \left[ \langle \partial_t \boldtheta(t), \boldf(\boldy) \rangle \right]$ term of our score model requires additional steps after the initial integration by parts and the resulting objective is different from Equation (8) in \citep{choi2022}.
Noticing that the weighting technique introduced in \citep{song,yu2021}, we let $g$ be zero at the boundary $t= 0$ and 1 to ensure the boundary condition used in integration by parts is satisfied. 
Moreover, in \citep{choi2022}, authors convert \cref{eq:deriv:obj1} into a different form, which requires multiple samples of $\mb x$ at $t = 0$ and $t = 1$ for Monte Carlo approximation. However, this is not a requirement for our method. The exact form of $g$ can be found in the \cref{sec:weighting.function}. 

Note that this objective function only depends on $\partial_t \boldtheta(t)$, so no nuisance parameters are required when optimizing the above objective. 
To finally transform \cref{eq:deriv:objfinal} into a tractable optimization problem, we need to approximate it using samples and design a parametric model for $\partial_t \boldtheta(t)$. 

\subsection{Sample Approximation of Objective Function}
In this section, we consider sample approximations of \cref{eq:deriv:objfinal} under two different settings. 
In the first scenario, we assume that we have paired samples from the joint distribution of both $\mb x$ and $t$, i.e.  \[\mathcal{D}_1 := \{(t_i, \mb{x}_{i})\}_{i=1}^{n} \sim q(\mb x, t), \]
where $q(\mb x ,t) = q_t(\mb x) \times \mathrm{Uniform}(t; 0, 1)$, where we shortened $q(\mb x | t)$ as $q_t(\mb x)$ and the same below. 
Then the sample objective of \cref{eq:deriv:objfinal} (omitting the constant) can be written as
\begin{align}
    &\frac{1}{{n}} \sum^{n}_{i=1}g(t_{i}) \modelsmallsample{\mb x_i,  t_i}^2 
      + 2 \partial_t g(t_i) \brangle{\partial_t \mb\theta(t_i), \mb{f}(\mb{x}_{i})}  \notag \\
     & +  2 g(t_i) \brangle {\partial_t^2 \mb\theta(t_i), \mb{f}(\mb{x}_{i})}  \label{eq:sampleobj0}.
\end{align}

In the second scenario, we assume that we first draw samples $\{t_j\}_{j = 1}^{m} \sim \mathrm{Uniform}(t; 0, 1)$, then draw samples $\{\mb x_{ij} \}_{i=1}^{n} \sim q_{t_j}(\mb x)$ for each $j \in [m]$. Given the dataset 
\[\mathcal{D}_2 := \{(t_j, \{\mb{x}_{ij}\}_{i = 1}^{n})\}_{j=1}^{m}, \]
the sample objective of \cref{eq:deriv:objfinal} (omitting the constant) is written as
\begin{align}
    &\frac{1}{m}\sum_{j=1}^{m}\frac{1}{{n}} \sum^{n}_{i=1}g(t_{j}) \modelsmallsample{\mb x_{ij},  t_j}^2 
      + 2 \partial_t g(t_j) \brangle{\partial_t \mb\theta(t_j), \mb{f}(\mb{x}_{ij})}  \notag \\
     & +  2 g(t_j) \brangle {\partial_t^2 \mb\theta(t_j), \mb{f}(\mb{x}_{ij})}  \label{eq:sampleobj_tj}.
\end{align}

Both scenarios naturally arise in machine learning tasks. The first scenario resembles a time series setting, where we have a single sample for each time point. In contrast, the second scenario aligns more with a continuous dataset drift setting \citep{quinonero2022dataset}, where we receive a full set of samples for each time point.

\subsection{Sample Approximation of Score Model}

Now we look at the score model used in above objectives 
\[s(\boldx, t) := \brangle{\partialthetasmall, \mb{f}(\mb{x})} - \E_{\boldy \sim q_t} \brs{\brangle{\partialthetasmall, \mb{f}(\mb{y})}}.  \]
From the definition, we can see that $\model$ contains term $\E_{q_{t}} \brs{\brangle{\partialthetasmall, \mb{f}(\mb{x})}}$, which is an \textit{expectation conditioned on} $t$. 
In general, this time-conditional expectation does not have a closed-form expression. Thus, we also have to approximate it using samples. 

In the first scenario, where we only have access to a paired dataset $\mathcal{D}_1$, 
we consider using \textbf{Nadarya-Watson (NW) estimator} \citep{nadaraya1964, watson1964} to estimate the conditional expectation. 
NW estimator is a weighted sample average. In our context, we can write the NW estimator as 
\begin{equation}
\hat{\E}_{q_{t_i}}\brs{\brangle{\partial_t \boldtheta(t_i), \mb{f}(\mb{y})}} := \frac{\sum_{j=1}^m K(t_j, t_i) \brangle{\partial_t \mb \theta(t_j), \mb{f}(\mb{x}_j)}}{\sum_{j=1}^m K(t_j, t_i)},
\label{eq:nw_expectation}
\end{equation}
where $K$ is a Gaussian kernel function. 

In the second scenario, given the dataset $\mathcal{D}_2$, we can simply approximate the expectation as the average of samples obtained at time point $t_j$: \[\hat{\E}_{q_{tj}} \brs{\brangle{\partial_t \boldtheta(t_j), \mb{f}(\mb{y})}} := \frac{1}{n}\sum_{i = 1}^{n} \brangle{\partial_t \mb \theta(t_j), \mb{f}(\mb{x}_{ij})}, \forall j \in [m].\] 

Now we denote the approximated score model as
\[{\hat{s}}(\boldx, t) := \brangle{\partialthetasmall, \mb{f}(\mb{x})} - \hat{\E}_{q_t} \brs{\brangle{\partialthetasmall, \mb{f}(\mb{y})}},  \]
where $\hat{\E}_{q_t} \brs{\brangle{\partialthetasmall, \mb{f}(\mb{y})}}$ is the approximated conditional expectation in the above two settings. 

\subsection{\texorpdfstring{Parameterization of $\partialthetasmall$}{Parameterization of partialtheta}} \label{sec:estimating:partialtheta}

So far, the objective function in \cref{eq:deriv:objfinal} works for any generic parameterisation of $\partialthetasmall$. 
Without the loss of generality, we propose a parametric model 
\[\partial_t \mb\boldtheta(t) := [\partial_t \theta_1(t), \dots, \partial_t \theta_k(t)]^\top, 
\partial_t \theta_j(t) := \langle \mb\alpha_j, \partial_t \mb\phi(t) \rangle,\] 
where $\mb\phi: \mathbb{R} \to \mathbb{R}^b$ is a differentiable basis function and $\mb \alpha_j \in \mathbb{R}^b$ is a parameter vector to be estimated. 

In the simplest case, $\phi(t) := t$, so that $\partial_t \theta_j(t) := \alpha_j$. 
This model implies that $\boldtheta(t)$ changes with $t$ \textit{linearly}. One can consider other basis functions, for example, the Fourier basis, which are often used to model time-dependent functions, 
\begin{align*}
    \mb \phi(t) = [\sin(t), \cos(t), \cdots, \sin(bt/2), \cos(bt/2)]. 
\end{align*}

Suppose we adopt a linear model, i.e., $\phi(t) := t$.  
We can rewrite the time score model as 
\[
\hat{s}_\boldalpha(\boldx, t) := \brangle{\boldalpha, \mb{f}(\mb{x})} - \hat{\E}_{\mb y \sim q_t} \brs{\brangle{\boldalpha, \mb{f}(\mb{y})}}. 
\]
Thus, given the dataset $\mathcal{D}_1$, we have the following tractable  objective: 
\begin{align}
    \hat{\mathcal{L}}(\mb \alpha) := &\frac{1}{{n}} \sum^{n}_{i=1}
    g(t_{i}) \hat{s}_\boldalpha(\mb x_i;  t_i)^2 
      + 2 \partial_t g(t_i) \brangle{\boldalpha, \mb{f}(\mb{x}_{i})} \label{eq:sampleobj}, 
\end{align}

where we have replaced $s$ in Equation \eqref{eq:sampleobj0} with $\hat{s}_\boldalpha$ and $\partialthetasmall$ with the parameter $\boldalpha$. Note that the third term in \eqref{eq:sampleobj0} vanishes since $\partial_t^2 \mb \theta(t) = 0$ given the choice $\phi(t) = t$. 

Define $\mb F \in \mathbb{R}^{n \times k}$ as the feature matrix whose $i$-th row is $\boldf(\boldx_i)$, centered sufficient statistic $\tilde{\mb F} := \boldF - \hat{\E}_{q_t}[\mb F]$, and 
a diagonal matrix $\boldsymbol{G} \in \mathbb{R}^{n\times n}$ whose $i$-th diagonal entry is $g(t_{i})$. We can rewrite \cref{eq:sampleobj} using the following equivalent quadratic form
\begin{align}
    \hat{\mathcal{L}}(\mb \alpha) = \boldalpha^\top \tilde{\mb F}^\top \mb G \tilde{\mb F} \boldalpha /n+ 2 \boldsymbol{1}^\top_n\partial_t \mb G \mb F^\top \boldalpha/n, 
\end{align}
and the minimizer of the above objective has a closed-form expression $\left[ \tilde{\mb F}^\top \mb G \tilde{\mb F} \right]^{-1} \mb F^\top \partial_t \mb G \boldsymbol{1}_n$. However, in the high dimensional setting, where the number of samples $n$ is potentially smaller than the dimension of both $\boldalpha$ and $\boldtheta(t)$, $\tilde{\mb F}^\top \mb G \tilde{\mb F}$ is non-invertible. We introduce a high-dimensional estimator of $\boldalpha$.

\section{High-dimensional Differential Parameter Estimation and  Debiasing}
To simplify our discussion, from now on, we suppose that we have access to dataset $\mathcal{D}_1$ and $\phi(t) = t$. 

In high-dimensional settings, 
we assume only a few elements in $\mb \theta^*(t)$ change over time $t$, making $\partial_t \mb \theta^*(t)$ (and $\mb \alpha$) a sparse vector. Hence, we use a lasso regularizer to identify the non-zeros in $\partial_t \mb \theta(t)$, where we refer to this lasso estimator of $\hat\boldalpha$ as ``SparTSM''. 
We propose minimizing $\hat{\mathcal{L}}(\mb \alpha)$ with a sparsity-inducing $\ell_1$ norm:
\begin{align}
\label{eq:main.obj}
    \hat{\boldalpha} := \argmin_{\mb\alpha} \hat{\mathcal{L}}(\mb \alpha) + \lambda_{\mathrm{lasso}} \|\boldalpha\|_1.
\end{align}
In Section \ref{sec:theory}, we prove the consistency of $\hat\boldalpha$ in a high dimensional setting. %
Using a lasso estimator in \cref{eq:main.obj} can introduce biases, making the asymptotic distribution of $\hat{\boldalpha}$ intractable. This is not ideal if we are interested in parameter inference, such as hypothesis tests and establishing confidence intervals. We apply the debiasing technique \citep{zhang2014confidence, van2014asymptotically} to the lasso estimate of each component, which will allow us to track the asymptotic distribution. Let $\bome^*_j$ denote the $j$-th column of the inverse Hessian $\left[ \nabla^2_{\boldsymbol{\alpha}}\mathcal{L}(\boldsymbol{\alpha}^*) \right]^{-1}$ and $\tilde{\bome}_j$ be a consistent estimator of $\bome^*_j$. We debias the $j$-th element of the lasso estimate using a single-step Newton update:
\begin{equation}\label{eq:debiaslasso}
    \tilde{\ba}_j = \hat{\ba}_j - \tilde{\bome}_j^\top\nabla_{\ba} \hat{\mathcal{L}}(\hat{\boldsymbol{\alpha}}).
\end{equation}
Estimating the inverse Hessian $\left[ \nabla^2_{\boldsymbol{\alpha}}\mathcal{L}(\boldsymbol{\alpha}^*) \right]^{-1}$ in high-dimensional space is challenging, as the empirical Hessian $\nabla^2_{\boldsymbol{\alpha}}\hat{\mathcal{L}}(\hat{\boldsymbol{\alpha}})$ is ill-conditioned and often non-invertible. Fortunately, $\bome^*_j$ satisfies the equality $\left[\nabla^2_{\boldsymbol{\alpha}}\mathcal{L}(\boldsymbol{\alpha}^*)\right]\bome^*_j = \boldsymbol{e}_j$, where $\boldsymbol{e}_j$ is a vector with the $j$-th element equal to one and zeros elsewhere. We estimate $\tilde{\bome}_j$ using the $\ell_1$-norm regularized objective: \begin{equation}\label{loss:invhes}
        \tilde{\bome}_j = \arg\min_{\bome}\frac{1}{2}\bome^\top \nabla^2_{\boldsymbol{\alpha}}\hat{\mathcal{L}}(\hat{\mb \alpha}) \bome - \bome^\top \boldsymbol{e}_j + \lambda_j ||\bome||_1
\end{equation}
The consistency of this estimator has been proved in \cref{app:consiinvhes}. We show in \cref{sec:theory} that the debiased estimator in Equation \eqref{eq:debiaslasso} is asymptotically unbiased and normally distributed under further conditions. We summarize the high-dimensional differential parameter inference pipeline in Algorithm \ref{alg:debiaslasso}. We refer to the estimator of $\tilde{\ba}_j$ in \cref{eq:debiaslasso} as ``SparTSM+''
\begin{algorithm}[t!]
\caption{Inference Pipeline} 
\label{alg:debiaslasso} 
    \begin{algorithmic}[1]
        \REQUIRE Dataset: $\{(t_i, \mb{x}_{i})\}^n_{i=1}$,  \\
        Regularization parameters $\lambda_{\mathrm{lasso}},\lambda_1,...,\lambda_k >0$.\\
        \STATE \text{Find lasso estimator} $\hat{\boldsymbol{\alpha}}$ \text{by solving \eqref{eq:main.obj}}
        \FOR{$j \in [k]$}{
            \STATE \text{Find} $\tilde{\bome}_j$ by solving \eqref{loss:invhes}
            \STATE \text{Obtain debiased lasso by} \eqref{eq:debiaslasso}.
        }
        \ENDFOR\\
        \STATE \textbf{return} asymptotically unbiased estimator $\tilde{\boldsymbol{\alpha}}$
    \end{algorithmic}
\end{algorithm}
\section{Theoretical Analysis}\label{sec:theory}
We show that both SparTSM and SparTSM+ work effectively in a high-dimensional regime when $\partial_t \boldtheta(t)$ is sparse. A full list of notations is provided in \cref{sec:notations}. 
For our theoretical results, we assume the following:
\begin{assumption}
    There exists a unique parameter $\mb\alpha^*$ supported on $S \subseteq \{1, \dots, k\}$ such that $\partial_t \mb\theta^*_t = \mb \alpha^*$, and the population objective $\mathcal{L}(\boldalpha)$ is minimised at $\mb\alpha^*$.
\label{ass:alphastar}
\end{assumption}

\subsection{Finite-sample Estimation Error of SparTSM}

We assume bounded sufficient statistics in the exponential family model \citep{Byol2021, xia2023debiased}.
\begin{assumption}\label{ass:boundedsufficient}
    There exists some constant $ 0< C_{f} < \infty$ such that $\|\boldsymbol{f}(\boldsymbol{x})\|_{\infty}\leq C_f$ almost surely.
\end{assumption}
Defining $\boldsymbol{G}^{1/2} \in \mathbb{R}^{n\times n}$ as a diagonal matrix where $G^{1/2}_{i,i} = \sqrt{g(t_i)}$, we assume the following:
\begin{assumption}
\label{ass:re}
   The matrix $\boldsymbol{G}^{1/2} \tilde{\mb F}$ satisfies the restricted eigenvalue (RE) condition over the support set $S$ with parameters $(\kappa, 3)$, that is, 
    \begin{equation}
        \frac{1}{n} \| \mb G^{1/2} \tilde{\mb F} \Delta\|_2^2  \geq \kappa \|\Delta\|_2^2 \text{ for all } \Delta \in \mathbb{C}_3(S),
    \end{equation}
    where $\mathbb{C}_3(S) = \{\Delta \in \mathbb{R}^k : \|\Delta_{S^c}\|_1 \leq 3\|\Delta_S\|_1\}$.
\end{assumption}
Building on the aforementioned assumption, we can establish a probabilistic upper bound for the $\ell_2$ norm of the error vector $\hat{\boldsymbol{\alpha}} - \boldsymbol{\alpha}^*$. 

Define a random variable 
\begin{align*}
    m_\boldalpha(\mb x, t):= g(t) \hat{s}_\boldalpha(\mb x;  t)^2 + 2 \partial_t g(t) \brangle{\boldalpha, \mb{f}(\mb{x})}  
\end{align*}
where $(\boldx, t) \sim q_t(\mb x)\times \mathrm{Uniform}(t; 0,1)$. 
\begin{theorem}\label{probineqconst}
    Suppose Assumption \ref{ass:alphastar},\ref{ass:boundedsufficient} and \ref{ass:re} hold. Let 
   \[\sigma^2=\max_{1 \leq i \leq k} \boldsymbol{\Sigma}_{ii}(\mb \boldalpha^*), ~~ \boldsymbol{\Sigma}(\boldalpha) := \mathrm{Cov}[\nabla_{\boldalpha} m_\boldalpha(\boldx,t)],\] 
   and we set 
   $
   \lambda_{\mathrm{lasso}}  = 2\sigma\left(\sqrt{\frac{2\log k}{n}}+\delta\right).
   $
   It holds that
    \begin{equation}
        \|\hat{\boldsymbol{\alpha}} - \boldsymbol{\alpha}^*\|_2 \leq \frac{6}{\kappa} \|{\ba}^*\|_0^{1/2} \sigma \left(\sqrt{\frac{2 \log k}{n}} + \delta\right) \text{ for all } \delta > 0.
    \end{equation}
    with probability at least $1-2\exp\{-\frac{n\delta^2}{\sigma}\}$.
\end{theorem}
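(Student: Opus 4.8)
The plan is to run the standard basic-inequality analysis for an $\ell_1$-regularised \emph{quadratic} loss, exploiting that $\hat{\mathcal{L}}$ is exactly quadratic in $\boldalpha$ with (half-)Hessian $\hat{\boldsymbol{Q}} := \tilde{\mb F}^\top\mb G\tilde{\mb F}/n$. Writing $\Delta := \hat{\ba} - \ba^*$, the second-order Taylor expansion is exact with no remainder, giving $\hat{\mathcal{L}}(\hat{\ba}) - \hat{\mathcal{L}}(\ba^*) = \nabla_\boldalpha\hat{\mathcal{L}}(\ba^*)^\top\Delta + \tfrac1n\|\mb G^{1/2}\tilde{\mb F}\Delta\|_2^2$. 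Feeding this into the optimality inequality $\hat{\mathcal{L}}(\hat{\ba}) + \lambda_{\mathrm{lasso}}\|\hat{\ba}\|_1 \le \hat{\mathcal{L}}(\ba^*) + \lambda_{\mathrm{lasso}}\|\ba^*\|_1$ and bounding the linear term via $|\nabla_\boldalpha\hat{\mathcal{L}}(\ba^*)^\top\Delta| \le \|\nabla_\boldalpha\hat{\mathcal{L}}(\ba^*)\|_\infty\|\Delta\|_1$ produces the usual deterministic skeleton, valid on the event $\lambda_{\mathrm{lasso}} \ge 2\|\nabla_\boldalpha\hat{\mathcal{L}}(\ba^*)\|_\infty$.

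The step carrying the real content is controlling the empirical gradient $\nabla_\boldalpha\hat{\mathcal{L}}(\ba^*) = \tfrac1n\sum_{i=1}^n \nabla_\boldalpha m_{\ba^*}(\mb x_i, t_i)$. By Assumption~\ref{ass:alphastar}, $\ba^*$ minimises the population objective $\mathcal{L}(\ba) = \E[m_\ba(\mb x, t)]$, so $\E[\nabla_\boldalpha m_{\ba^*}(\mb x, t)] = \boldsymbol{0}$ and the gradient is a centred sample mean whose per-coordinate variance equals $\boldSigma_{jj}(\ba^*) \le \sigma^2$. Assumption~\ref{ass:boundedsufficient} makes $\boldf$ bounded, so each coordinate of $\nabla_\boldalpha m_{\ba^*}$ is bounded, hence sub-Gaussian; a coordinatewise sub-Gaussian (Hoeffding-type) tail bound together with a union bound over the $k$ coordinates yields $\|\nabla_\boldalpha\hat{\mathcal{L}}(\ba^*)\|_\infty \le \sigma(\sqrt{2\log k/n} + \delta)$ with probability at least $1 - 2\exp\{-n\delta^2/\sigma\}$. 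This is exactly the event on which the prescribed $\lambda_{\mathrm{lasso}} = 2\sigma(\sqrt{2\log k/n}+\delta)$ dominates twice the gradient sup-norm.

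On this event I would finish in the standard way: using $\|\ba^*\|_1 - \|\hat{\ba}\|_1 \le \|\Delta_S\|_1 - \|\Delta_{S^c}\|_1$ and the gradient bound gives $\tfrac1n\|\mb G^{1/2}\tilde{\mb F}\Delta\|_2^2 \le \tfrac{3}{2}\lambda_{\mathrm{lasso}}\|\Delta_S\|_1 - \tfrac12\lambda_{\mathrm{lasso}}\|\Delta_{S^c}\|_1$; since the left-hand side is non-negative this forces $\Delta \in \mathbb{C}_3(S)$, so Assumption~\ref{ass:re} applies and $\kappa\|\Delta\|_2^2 \le \tfrac1n\|\mb G^{1/2}\tilde{\mb F}\Delta\|_2^2 \le \tfrac32\lambda_{\mathrm{lasso}}\sqrt{\|\ba^*\|_0}\,\|\Delta\|_2$. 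Cancelling one factor of $\|\Delta\|_2$ and substituting the value of $\lambda_{\mathrm{lasso}}$ recovers the advertised $\ell_2$ rate (with the absolute constant tracking through to the stated $6/\kappa$ once the $\ell_1$-difference bounds are kept conservative).

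I expect the main obstacle to be the gradient concentration, and specifically two subtleties within it. First, the mean-zero identity $\E[\nabla_\boldalpha m_{\ba^*}] = \boldsymbol{0}$ must be justified rigorously, i.e.\ that differentiation passes through the expectation and that $\mathcal{L}$ is genuinely stationary at $\ba^*$ (this is where Assumption~\ref{ass:alphastar} is essential). Second, $m_{\ba^*}$ contains the \emph{estimated} conditional expectation $\hat{\E}_{q_t}$ inside $\hat{s}_{\ba^*}$, so the summands are not exactly i.i.d.\ and the Nadaraya--Watson plug-in error must either be absorbed into the variance proxy $\sigma$ or shown to be asymptotically negligible relative to $\sqrt{\log k / n}$. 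The remaining lasso algebra---cone membership, invoking the RE condition, and the final cancellation---is routine once the correct $\lambda_{\mathrm{lasso}}$ has been justified.
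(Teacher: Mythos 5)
Your proposal matches the paper's proof essentially step for step: the paper first proves the deterministic lasso bound (exact quadratic expansion, basic inequality, cone membership $\Delta \in \mathbb{C}_3(S)$, then the RE condition with $\|\Delta\|_1 \le 2\sqrt{s}\|\Delta\|_2$) exactly as you outline, and then controls $\|\nabla_{\boldsymbol{\alpha}}\hat{\mathcal{L}}(\boldsymbol{\alpha}^*)\|_\infty$ by showing each coordinate is a bounded, zero-mean sub-Gaussian average with parameter $\sigma^2/n$ and applying a sub-Gaussian maximal/tail bound, which is your union-bound step in different clothing. The one substantive divergence is the zero-mean identity: you derive it from stationarity of the population objective (\cref{ass:alphastar} plus an interchange of derivative and expectation), whereas the paper proves it self-containedly by an explicit integration by parts in $t$ using the boundary conditions $g(0)=g(1)=0$; and the Nadaraya--Watson plug-in issue you flag is genuine but is equally glossed over by the paper, whose proof works throughout with the true conditional expectation $\E_{q_t}$ inside $m_{\boldsymbol{\alpha}}$.
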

See \cref{pf:probineqconst} for the proof.
\paragraph{Remarks} Crucially, the error bound only depends on the dimension of the feature function $k$ logarithmically, indicating the estimator indeed scales to high dimensional settings. 
Moreover, the error bound depends on the sparsity of $\boldalpha^*$ (i.e., the sparsity of $\partial_t\boldtheta^*(t)$) and does \emph{not} depend on the sparsity of $\boldtheta (t)$. It means that our method works with a dense parameter vector $\boldtheta (t)$ as long as $\partial_t \boldtheta(t)$ is sparse. This opens the door for applications where time-varying probabilistic models that are complex and cannot be described by sparse models. We showed an example in Section \ref{sec:sim-study}. 

\subsection{Finite-sample Gaussian Approximation Bound of SparTSM+}
In this section, we prove that $\sqrt{n}\left(\tilde{\alpha}_j - \alpha^*_j\right)/\hat{\sigma}_j$ converges to a standard normal random variable.
where $\hat{\sigma}_j$ is defined as
\begin{align}
    \hat{\sigma}^2_j = \tilde{\bome}_j^\top \hat{\mb \Sigma}(\hat{\boldsymbol{\alpha}}) \tilde{\bome}_j, ~~~
    \hat{\mb \Sigma}(\hat{\boldsymbol{\alpha}}) = {\mathrm{Cov}_n}[\nabla_{\boldsymbol{\alpha}} m_{\hat{\boldalpha}} (\boldx, t)], 
\end{align}
and ${\mathrm{Cov}_n}$ is the empirical covariance estimator. 
Now we justify the asymptotic normality using the following Gaussian Approximation Bound (GAB), showing that as the number of samples increases, the distribution of $\tilde{\alpha}_j$ is closing to a Gaussian distribution.
\begin{theorem}[GAB]\label{thm:main}
   Suppose Assumption \ref{ass:alphastar} and \ref{ass:boundedsufficient} hold. 
For $\delta_{\boldsymbol{\alpha}}, \delta_w, \lambda_{\mathrm{lasso}}, \lambda_j, \delta_{\sigma} \in [0,1)$, define the set of events $\boldsymbol{\mathcal{E}}$, 
\begin{align}
    \boldsymbol{\mathcal{E}} = \left\{
    \begin{aligned}
        &\|\nabla_{\boldsymbol{\alpha}} \hat{\mathcal{L}}(\boldsymbol{\alpha}^*)\|_\infty \leq \lambda_{\mathrm{lasso}}/2; \\
        &\|\nabla^2_{\boldsymbol{\alpha}} \hat{\mathcal{L}}(\boldsymbol{\alpha}^*) \bome_j^{*} - \boldsymbol{e}_j\|_\infty \leq \lambda_j/2; \\
        &\|\hat{\boldsymbol{\alpha}} - \boldsymbol{\alpha}^*\|_1 \leq \delta_{\boldsymbol{\alpha}}; \\
        &\|\tilde{\bome}_j - \bome_j^*\|_1 \leq \delta_w; \\
        &||\hat{\mb \Sigma}(\boldsymbol{\alpha}^*) - \boldsymbol{\Sigma}||_\infty \leq \delta_\sigma / 2.
    \end{aligned}
    \right. \notag 
\end{align}
Suppose $\mathbb{P}(\boldsymbol{\mathcal{E}})\geq 1- \epsilon$, denoting $\Phi(\cdot)$ as the cumulative distribution function of the standard normal distribution, we have
\begin{align}
    &\sup_{z \in \mathbb{R}} \left| \mathbb{P}\left\{\sqrt{n} (\tilde{{\alpha}}_j - {\alpha}_j^*)/\hat{\sigma}_j \leq z \right\} - \Phi(z) \right| \nonumber\\
    &\quad \qquad \qquad\qquad\qquad\leq \Delta_1 + \Delta_2 + \frac{\delta_C}{1-\delta_C} + \epsilon
\end{align}
where
\begin{align*}
     \Delta_1 &= 2CM||\bome_j^*||_1/\sqrt{n}\sigma_j, \\
     \Delta_2 &= \sqrt{n}(\lambda_j \delta_{\boldsymbol{\alpha}} + \delta_w\lambda_{\mathrm{lasso}} + K \delta_{{\boldsymbol{\alpha}}}\delta_w)/\bome_j^*\boldsymbol{\Sigma(\boldalpha^*)}\bome_j^*,\\
     \delta_C &= ((2L\delta_\alpha+\delta_\sigma)(||\bome_j^*||^2_1+\delta^2_w) +|||\boldsymbol{\Sigma}|||_\infty\delta_w^2)/\sigma_j^2 
\end{align*}
here, $C, M, K, L$ are fixed constants.
\end{theorem}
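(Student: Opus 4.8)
The plan is to begin from an exact decomposition of the debiased estimator. Since the sample objective $\hat{\mathcal{L}}$ is a quadratic form in $\boldalpha$, its Hessian $\nabla^2_{\boldalpha}\hat{\mathcal{L}}$ is constant and the expansion of $\nabla_{\boldalpha}\hat{\mathcal{L}}(\hat{\boldalpha})$ about $\boldalpha^*$ is exact, so that $\nabla_{\boldalpha}\hat{\mathcal{L}}(\hat{\boldalpha}) = \nabla_{\boldalpha}\hat{\mathcal{L}}(\boldalpha^*) + \nabla^2_{\boldalpha}\hat{\mathcal{L}}(\boldalpha^*)(\hat{\boldalpha}-\boldalpha^*)$. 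Substituting this into the Newton update \eqref{eq:debiaslasso} and using $\hat{\alpha}_j - \alpha_j^* = \boldsymbol{e}_j^\top(\hat{\boldalpha}-\boldalpha^*)$, I obtain
\[
\tilde\alpha_j - \alpha_j^* = -\tilde{\bome}_j^\top\nabla_{\boldalpha}\hat{\mathcal{L}}(\boldalpha^*) + \big(\boldsymbol{e}_j - \nabla^2_{\boldalpha}\hat{\mathcal{L}}(\boldalpha^*)\tilde{\bome}_j\big)^\top(\hat{\boldalpha} - \boldalpha^*).
\]
I then further split the linear term by writing $\tilde{\bome}_j = \bome_j^* + (\tilde{\bome}_j - \bome_j^*)$, which isolates a clean leading term $-\bome_j^{*\top}\nabla_{\boldalpha}\hat{\mathcal{L}}(\boldalpha^*)$, equal to an average of independent mean-zero summands $\bome_j^{*\top}\nabla_{\boldalpha}m_{\boldalpha^*}(\boldx_i,t_i)$, plus two remainder terms.

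Scaling by $\sqrt{n}/\hat\sigma_j$, I analyze the leading term with a Berry--Esseen bound. The summands have variance $\bome_j^{*\top}\boldSigma(\boldalpha^*)\bome_j^* = \sigma_j^2$, and \cref{ass:boundedsufficient} (bounded sufficient statistics) controls their third absolute moment by a quantity proportional to $\|\bome_j^*\|_1$, producing the Gaussian-approximation contribution $\Delta_1 = 2CM\|\bome_j^*\|_1/(\sqrt{n}\sigma_j)$.

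The remainder terms are controlled deterministically on the event $\boldsymbol{\mathcal{E}}$ by H\"older's inequality. The term $-(\tilde{\bome}_j - \bome_j^*)^\top\nabla_{\boldalpha}\hat{\mathcal{L}}(\boldalpha^*)$ is bounded by $\|\tilde{\bome}_j-\bome_j^*\|_1\|\nabla_{\boldalpha}\hat{\mathcal{L}}(\boldalpha^*)\|_\infty \le \delta_w\lambda_{\mathrm{lasso}}/2$; the bias term is bounded by $\|\boldsymbol{e}_j - \nabla^2_{\boldalpha}\hat{\mathcal{L}}(\boldalpha^*)\tilde{\bome}_j\|_\infty\|\hat{\boldalpha}-\boldalpha^*\|_1$, where the $\ell_\infty$ factor splits into $\|\boldsymbol{e}_j - \nabla^2_{\boldalpha}\hat{\mathcal{L}}(\boldalpha^*)\bome_j^*\|_\infty \le \lambda_j/2$ plus a Hessian-times-$(\tilde{\bome}_j-\bome_j^*)$ contribution bounded by $K\delta_w$ using the entrywise boundedness of the Hessian (again via \cref{ass:boundedsufficient}). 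Together with $\|\hat{\boldalpha}-\boldalpha^*\|_1 \le \delta_{\boldalpha}$ these assemble into $\Delta_2$. Separately, I bound the variance-normalization error $|\hat\sigma_j^2/\sigma_j^2 - 1| \le \delta_C$ by expanding $\hat\sigma_j^2 = \tilde{\bome}_j^\top\hat{\boldSigma}(\hat{\boldalpha})\tilde{\bome}_j$, using a Lipschitz bound (constant $L$) for $\boldalpha\mapsto\hat{\boldSigma}(\boldalpha)$, the event bound $\|\hat{\boldSigma}(\boldalpha^*)-\boldSigma\|_\infty \le \delta_\sigma/2$, and $\|\tilde{\bome}_j-\bome_j^*\|_1 \le \delta_w$.

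Finally, I assemble the pieces: writing the normalized statistic as the clean sum plus a small remainder, I invoke the boundedness of the standard normal density (anti-concentration) to pass the remainder $\Delta_2$ and the variance-ratio correction $\delta_C/(1-\delta_C)$ through the Kolmogorov distance, and add $\epsilon$ for the failure probability of $\boldsymbol{\mathcal{E}}$. I expect the main obstacle to be the Berry--Esseen step under self-normalization by $\hat\sigma_j$ rather than the deterministic $\sigma_j$: I must carefully convert the randomly normalized sum into one normalized by $\sigma_j$ (which is precisely where the $\delta_C/(1-\delta_C)$ factor enters) before the i.i.d.\ Berry--Esseen theorem applies, while also verifying that the independence structure of the summands $\nabla_{\boldalpha}m_{\boldalpha^*}(\boldx_i,t_i)$ needed for that theorem is not destroyed by the Nadaraya--Watson conditional-expectation estimate appearing inside $\hat{s}_{\boldalpha}$.
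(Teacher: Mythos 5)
Your proposal follows essentially the same route as the paper's proof: the identical exact decomposition (valid because $\hat{\mathcal{L}}$ is quadratic) into the leading term $-{\bome_j^*}^\top\nabla_{\boldsymbol{\alpha}}\hat{\mathcal{L}}(\boldsymbol{\alpha}^*)$ plus bias and estimation-error remainders, the same Berry--Esseen bound for the leading term, the same H\"older-on-the-event bounds assembling $\Delta_2$, the same variance-ratio bound giving $\delta_C$, and the same final combination step (the paper invokes Lemma D.3 of \citet{barber2018rocket}, which is precisely the anti-concentration argument you sketch inline). Your grouping of the bias term as $(\boldsymbol{e}_j - \nabla^2_{\boldsymbol{\alpha}}\hat{\mathcal{L}}(\boldsymbol{\alpha}^*)\tilde{\bome}_j)^\top(\hat{\boldsymbol{\alpha}}-\boldsymbol{\alpha}^*)$ before splitting, versus the paper's direct split into $B_1$, $B_2$, $B_3$, is algebraically identical, so the two proofs coincide.
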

See \cref{pf:main} for proof. The proof involves decomposing the standardized debiased estimator into three parts. One part introduces the desired asymptotic normality due to the Berry-Esseen inequality \citep{chen2010normal}, while the other two converge to zero conditioned on $\boldsymbol{\mathcal{E}}$.  
\begin{assumption}\label{REomega}
    The matrix $\boldsymbol{G}^{1/2} \tilde{\mb F}$ satisfies the RE condition over the support set $S_{\bome,j}$ with parameters $(\kappa_j, 3)$ where $S_{\bome,j}$ is the support set of $\bome^*_j$.
\end{assumption}
We can further specify the rate of the approximation under appropriate settings of $\lambda_{\mathrm{lasso}}$ and $\lambda_{j}$. 

\begin{corollary}
    \label{thm:main2}
   Assume \ref{ass:alphastar},\ref{ass:boundedsufficient}, \ref{ass:re} and \ref{REomega} hold. 
    Let $\tilde{\alpha}_j$ be the debiased lasso estimator derived by \cref{eq:debiaslasso} with the regularization parameters set as 
    \begin{equation}
        \lambda_{\mathrm{lasso}} \in \mathcal{O}\left(\sqrt{\frac{\log k}{n}}\right), \text{ }  \lambda_j \in  \mathcal{O}\left(\sqrt{\|\bome^*_j\|_0 \frac{\log k}{n}}\right)
    \end{equation}
    then there exist positive constants $c,c^\prime$ such that
    \begin{align}
         &\sup_{z\in\mathbb{R}} \left|\mathbb{P}\{\sqrt{n}(\tilde{{\alpha}}_j-{\alpha}^*_j)/\hat{\sigma}_j \leq z\}-\Phi(z)\right|\nonumber\\
         &\leq \mathcal{O}\left(\|{{\bome}^*_j}\|_0^{3/2} \|{\ba}^*\|_0{\frac{\log k}{\sqrt{n}}}\right) + c\exp\{-c^\prime \log k\}
    \end{align}
\end{corollary}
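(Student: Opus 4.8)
The plan is to treat \cref{thm:main2} as a direct specialization of the Gaussian Approximation Bound in \cref{thm:main}. That theorem already delivers a non-asymptotic bound of the form $\Delta_1 + \Delta_2 + \frac{\delta_C}{1-\delta_C} + \epsilon$, valid on the event $\boldsymbol{\mathcal{E}}$, so the entire task reduces to two things: (i) verifying that $\boldsymbol{\mathcal{E}}$ holds with probability at least $1-\epsilon$ for the stated choices of $\lambda_{\mathrm{lasso}}$ and $\lambda_j$, and (ii) substituting explicit rates for the slack quantities $\delta_{\boldalpha}$, $\delta_w$, $\delta_\sigma$ and for $\epsilon$, then collecting the dominant term. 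No new probabilistic machinery beyond what already underlies \cref{probineqconst} and \cref{thm:main} is required.

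First I would fix the rates of the five constituent events of $\boldsymbol{\mathcal{E}}$. The bound $\|\nabla_{\boldalpha}\hat{\mathcal{L}}(\boldalpha^*)\|_\infty \le \lambda_{\mathrm{lasso}}/2$ follows from a Bernstein/Hoeffding bound on each coordinate of the gradient (a sample average of mean-zero terms bounded through $C_f$ under \cref{ass:boundedsufficient}) together with a union bound over the $k$ coordinates; this is exactly what forces $\lambda_{\mathrm{lasso}} \in \mathcal{O}(\sqrt{\log k / n})$ and contributes a factor $\exp\{-c'\log k\}$ to $\epsilon$. The same concentration applied to the entries of $\nabla^2_{\boldalpha}\hat{\mathcal{L}}(\boldalpha^*) - \nabla^2_{\boldalpha}\mathcal{L}(\boldalpha^*)$ controls the second event: writing $\nabla^2_{\boldalpha}\hat{\mathcal{L}}(\boldalpha^*)\bome_j^* - \boldsymbol{e}_j = (\nabla^2_{\boldalpha}\hat{\mathcal{L}}(\boldalpha^*) - \nabla^2_{\boldalpha}\mathcal{L}(\boldalpha^*))\bome_j^*$, since $\nabla^2_{\boldalpha}\mathcal{L}(\boldalpha^*)\bome_j^* = \boldsymbol{e}_j$, and bounding the $\ell_\infty$ norm of this Hessian--vector product by $\mathcal{O}(\|\bome_j^*\|_2\sqrt{\log k/n})$, which is where the factor $\sqrt{\|\bome_j^*\|_0}$ inside $\lambda_j$ originates. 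The fifth event is the analogous concentration of $\hat{\mb\Sigma}(\boldalpha^*)$ around $\boldSigma$, giving $\delta_\sigma \in \mathcal{O}(\sqrt{\log k/n})$.

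Next I would import the two $\ell_1$-consistency bounds. \cref{probineqconst} gives $\|\hat\boldalpha - \boldalpha^*\|_2 \in \mathcal{O}(\|\boldalpha^*\|_0^{1/2}\sqrt{\log k/n})$; combined with the standard cone membership $\hat\boldalpha - \boldalpha^* \in \mathbb{C}_3(S)$ this upgrades to $\delta_{\boldalpha} = \|\hat\boldalpha - \boldalpha^*\|_1 \in \mathcal{O}(\|\boldalpha^*\|_0\sqrt{\log k/n})$. The consistency result for the inverse-Hessian estimator (\cref{app:consiinvhes}), under the RE condition of \cref{REomega}, yields an $\ell_1$ error of order $\|\bome_j^*\|_0\lambda_j/\kappa_j$; with $\lambda_j \in \mathcal{O}(\sqrt{\|\bome_j^*\|_0\log k/n})$ this produces $\delta_w \in \mathcal{O}(\|\bome_j^*\|_0^{3/2}\sqrt{\log k/n})$, the source of the $\|\bome_j^*\|_0^{3/2}$ factor in the final rate. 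A union bound over all five events gives $\epsilon \le c\exp\{-c'\log k\}$.

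Finally I would substitute into the three deterministic terms. The term $\Delta_1 = \mathcal{O}(\|\bome_j^*\|_1/\sqrt{n})$ is lower order, as it carries no $\log k$ factor. In $\Delta_2$, with the denominator $\bome_j^{*\top}\boldSigma(\boldalpha^*)\bome_j^* = \sigma_j^2$ bounded away from zero, the three products $\sqrt{n}\lambda_j\delta_{\boldalpha}$, $\sqrt{n}\delta_w\lambda_{\mathrm{lasso}}$ and $\sqrt{n}\,K\delta_{\boldalpha}\delta_w$ are each of the form $\mathcal{O}(\cdot\,\log k/\sqrt{n})$, and the largest is the cross term $\sqrt{n}\,K\delta_{\boldalpha}\delta_w \in \mathcal{O}(\|\boldalpha^*\|_0\|\bome_j^*\|_0^{3/2}\log k/\sqrt{n})$; a short computation confirms $\delta_C/(1-\delta_C)$ is dominated by this quantity. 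Adding $\epsilon$ then gives the stated bound. I expect the main obstacle to be the second event: establishing the Hessian--vector concentration $\|(\nabla^2_{\boldalpha}\hat{\mathcal{L}}(\boldalpha^*) - \nabla^2_{\boldalpha}\mathcal{L}(\boldalpha^*))\bome_j^*\|_\infty \lesssim \|\bome_j^*\|_2\sqrt{\log k/n}$ with the correct $\ell_2$ dependence on $\bome_j^*$ rather than a crude $\|\bome_j^*\|_1$ bound, since getting this rate right is precisely what makes the choice of $\lambda_j$, and hence the $\|\bome_j^*\|_0^{3/2}$ scaling, self-consistent.
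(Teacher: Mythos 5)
Your proposal is correct and follows essentially the same route as the paper's proof: the paper likewise instantiates \cref{thm:main} by defining concentration events for the gradient, the Hessian--vector product, and the empirical covariance, converts the first two into the $\ell_1$-error events $\delta_{\boldsymbol{\alpha}}\in\mathcal{O}(\|\boldsymbol{\alpha}^*\|_0\sqrt{\log k/n})$ and $\delta_w\in\mathcal{O}(\|\bome_j^*\|_0^{3/2}\sqrt{\log k/n})$ via \cref{consi} and \cref{lemma:invhen} under \cref{ass:re} and \cref{REomega}, identifies the cross term $\sqrt{n}K\delta_{\boldsymbol{\alpha}}\delta_w$ in $\Delta_2$ as the dominant $\mathcal{O}\left(\|\bome_j^*\|_0^{3/2}\|\boldsymbol{\alpha}^*\|_0\log k/\sqrt{n}\right)$ contribution while discarding $\Delta_1$ and $\delta_C/(1-\delta_C)$ as lower order, and absorbs the failure probabilities into $c\exp\{-c'\log k\}$ by a union bound over the concentration lemmas. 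The only cosmetic difference is that you spell out the mean-zero decomposition $(\nabla^2_{\boldsymbol{\alpha}}\hat{\mathcal{L}}(\boldsymbol{\alpha}^*)-\nabla^2_{\boldsymbol{\alpha}}\mathcal{L}(\boldsymbol{\alpha}^*))\bome_j^*$ explicitly, which the paper's \cref{lemma:F5} leaves implicit in its sub-Gaussianity claim.
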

See \cref{pf:main2} for the proof. 

\paragraph{Remarks} This result shows that, as sample size increases, the standardized $\alpha_j$ indeed converges to a standard normal variable, allowing us to specify the confidence interval and perform hypothesis tests. Similar to the error bound stated in \cref{probineqconst}, this approximation error bound also only depends on the sparsity of $\boldalpha^*$ (i.e., the sparsity of $\partial_t\boldtheta^*(t)$), rather than the sparsity of $\boldtheta^*(t)$, indicating it is applicable to complex probabilistic models with only sparse changes. We also notice that the result depends on the sparsity of the inverse Hessian column vector, $\bome_j^*$, which is similar to the previous debiased lasso results \citep{Byol2021}.

\section{Related Works}
We now introduce two additional methods for learning the differential parameters in graphical models, which are later compared in the simulation studies. 

\subsection{Estimating Parameter Changes in Probabilistic Models}
\label{sec:para.change.models}
Given two data-generating distributions, $q_0(\boldx) = p(\boldx; \boldtheta_0^*)$ and $q_1(\boldx) = p(\boldx; \boldtheta_1^*)$, we are interested in learning changes in the underlying data-generating distributions, given random samples $\mb x_0 \sim q_0$ and $\mb x_1 \sim q_1$.

One naive way of estimating the parameter change is fitting two probabilistic models $p(\boldx; \boldtheta_0)$ and $p(\boldx; \boldtheta_1)$ from $\mb x_0$ and $\mb x_1$ using lasso estimators, then take the difference of estimated parameters $\hat{\boldtheta}_1 - \hat{\boldtheta}_0$. This approach is sub-optimal, as sparse estimates of $\hat{\boldtheta}_1$ and $\hat{\boldtheta}_0$ do not necessarily lead to sparse estimate of differences. One solution to this problem is to use a ``fused-lasso'' regularizer $\|\boldtheta_0 - \boldtheta_1\|_1$ to encourage the sparsity in changes between two parameters \citep{danaher2014joint}. 

However, applying sparsity inducing norms on individual models assumes that the true probabilistic models $q_0$ and $q_1$ have sparse parameters $\boldtheta_0^*$ and $\boldtheta_1^*$. 
In theoretical analysis, this leads to consistency results depending on the sparsity level of the individual model, the less sparse the individual models are, the worse the convergence rate is (see e.g., Theorem 1 in \citep{yang2012}). 
In reality, $\boldtheta_0^*$ and $\boldtheta_1^*$ may not be sparse, but the difference between them could be sparse. 

To address this issue, previous works propose a density ratio-based approach to directly estimate the difference $\mb \theta^*_0 - \mb \theta^*_1$ \citep{liu2017,Byol2021}. 
The intuition is that the ratio between exponential family models is 
\begin{align*}
    \frac{p(\boldx; \boldtheta_1)}{p(\boldx; \boldtheta_0)} \propto \exp \left( \langle \boldtheta_1 - \boldtheta_0, f(\boldx) \rangle \right), 
\end{align*}
determined entirely by the \emph{differential parameter}. Thus, fitting the density ratio function automatically learns the parameter change. Moreover, since this estimation is completely independent of individual parameters $\boldtheta_0$ and $\boldtheta_1$, we do not need to regularize $\boldtheta_0$ and $\boldtheta_1$, eliminating the sparsity assumptions on $\boldtheta^*_0$ and $\boldtheta^*_1$. 

\subsection{Estimating Time-varying Probabilistic Models}
\label{sec:back:est.prob.model}

While the density ratio approach learns the ``discontinuous change'' from $\boldtheta^*_0$ to $\boldtheta^*_1$, we may be interested in the \emph{continuous process} from $\boldtheta^*_0$ to $\boldtheta^*_1$. 
We are interested in estimating $\boldtheta^*(t)$ given random samples $\mb x_t \sim q_t$. Naturally, one can fit model $p(\mb x; \mb \theta(t))$ to samples $\mb x_t$ at each time point. Assuming the time-varying process is ``sparse'', i.e., only a few parameters change with $t$, 
we can estimate $p(\mb x; \mb \theta(t))$ jointly using a multi-task learning objective with regularization $\| \mb \theta(t) - \mb \theta(t') \|_1$ for a $t'$ that is close to $t$. \citep{Kolar2012,hallac2017network,gibberd2017regularized}.
In this paper, we compare the proposed method with Loggle \citep{yang2018estimating}, which is designed to capture smoothly varying $\mb \Theta(t)$ and is a variant of the above algorithm. 

However, if we are only interested in how $\boldtheta(t)$ changes with time $t$, i.e., 
the time-derivative $\partial_t \boldtheta(t)$, rather than the process $\boldtheta(t)$ itself. 
The aforementioned approach is again sub-optimal, as estimating a high dimensional, over-parameterized $\boldtheta(t)$ requires us to regularize the sparsity of $\boldtheta(t)$ for each $t$, thus again, putting unnecessary sparsity assumptions on $\boldtheta^*(t)$, leading to a consistency results depend on the sparsity level of each $\boldtheta^*(t)$ (see e.g., Theorem 1 in \citep{Kolar2009}.

\section{Simulation Studies}
\label{sec:sim-study}

We evaluate SparTSM and SparTSM+ estimator performance using datasets simulated with Gaussian Graphical Models (GGM). 
We sample $\boldsymbol{x} \sim \mathcal{N}(\boldsymbol{0}, \boldTheta(t)^{-1})$ with $\boldTheta(t) = \boldTheta_0 + \boldTheta'(t)$, where $\boldTheta_0$ is a constant symmetric positive definite dense matrix, and $\boldTheta'(t)$ is a \emph{sparse} symmetric matrix that changes over time. Refer to \cref{app:simulation} for settings of  $\boldTheta'(t)$ in each experiment.

\subsection{Differential Parameters Estimation using SparTSM}
\label{sec:diff.est}

\begin{figure}[t]
    \centering
    \includegraphics[width=0.97\linewidth]{ 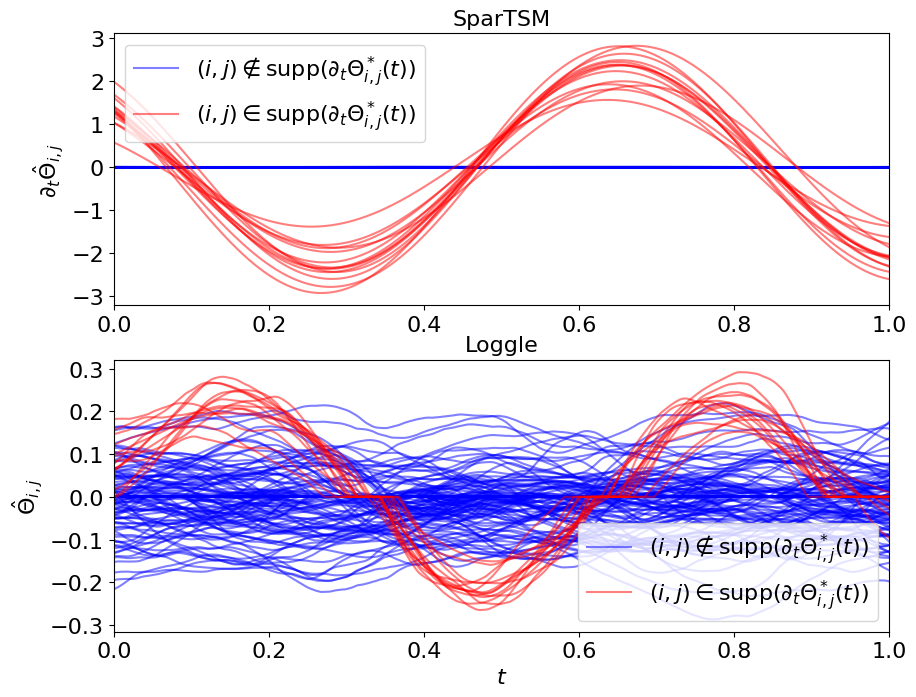}
    \caption{\textbf{SparTSM compared with Loggle.} Estimating the differential parameter vs. Estimating the time-varying parameter. The overall graphical model $\boldTheta$ is non-sparse. }
    \label{fig:tsmvsloggle}
\end{figure}

We generate 5000 synthetic samples from a 20-node Gaussian Graphical Model (GGM) and conduct a performance comparison between Loggle and SparTSM. In this study, the non-zero elements of $\boldTheta'(t)$ are formulated using a sine function (refer to \cref{sec:spartsm}). In \cref{fig:tsmvsloggle}, we illustrate the estimated $\partial_t \Theta_{i,j}(t)$ using SparTSM and the estimated $\Theta_{i,j}(t)$ with Loggle. 
The true time-variant parameters (where $\partial_t \Theta_{i,j}(t) \neq 0$) are marked in red. On one hand, SparTSM, which employs an appropriate basis function (Fourier basis function, see \cref{sec:estimating:partialtheta}), succeeds in accurately estimating differential parameters, thereby clearly identifying the smooth transitions in the time-variant distribution. 
On the other hand, Loggle does identify time-varying parameters (red curves in Figure \ref{fig:tsmvsloggle}), but it  does not immediately discern the changing parameters from the stationary parameters, as its estimates are conflated. 
This aligns with expectations, since our constructed $\mb \Theta(t)$ is consistently \emph{non-sparse}, requiring the estimation of a \emph{dense} parameter vector, which does not adhere to Loggle's sparsity assumptions. In the experiments below, we set $\phi(t) = t$.  
\begin{figure}[t]
    \centering
    \subfigure[\textbf{Time-varying edge detection on a  Gaussian Graphical Model. }The diagonal line has been added for reference.]{
    \includegraphics[width=0.8\linewidth]{ 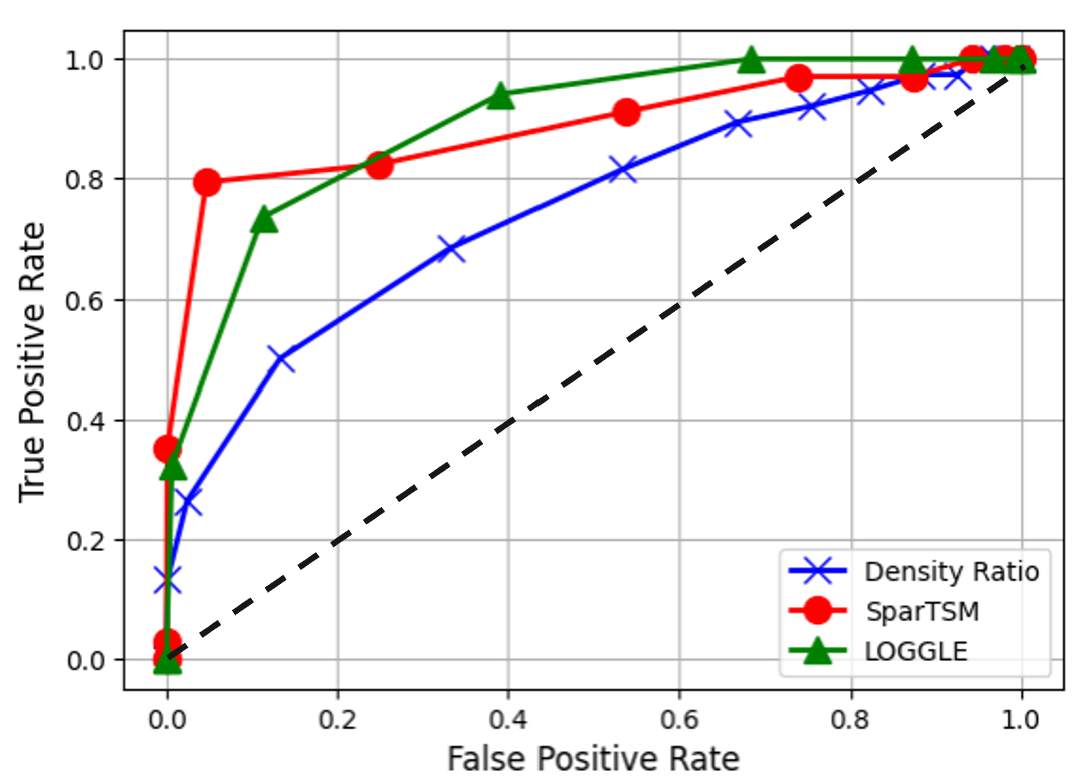}
    \label{fig:roc}
    }   
    \subfigure[\textbf{Time-varying edge detection on a truncated Gaussian Graphical Model. }]{
    \includegraphics[width=0.8\linewidth]{ 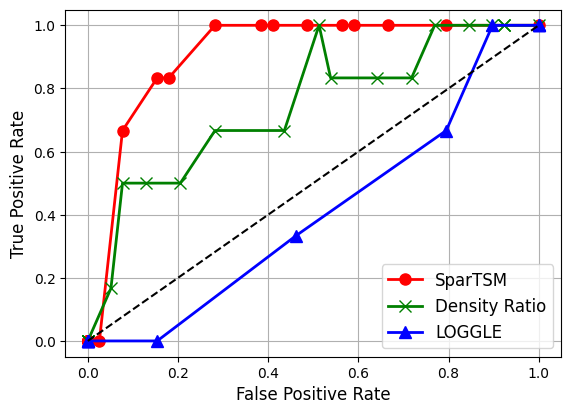}
    \label{fig:roc_tr}
    }
    \caption{ROC curves of SparTSM, Density Ratio, and Loggle.}
\end{figure}
We quantitatively evaluate SparTSM's effectiveness in detecting parameter changes by comparing it to Loggle and the density ratio-based method \citep{liu2017,Byol2021}, using ROC curves for assessment. We construct a GGM that changes linearly over time with 40 nodes and draw 1000 samples (Details can be found in \ref{sec:spartsm}). The task of this experiment is to detect time-varying edges ($\partial_t \Theta_{i,j}(t) \neq 0$) in the GGM. 
For the density ratio approach, $\boldTheta(1) - \boldTheta(0)$ is calculated by estimating the ratio $q_1/q_0$, where we draw 500 samples from $q_0$ and 500 samples from $q_1$. By adjusting the regularization parameter in both SparTSM and the density ratio approach, we can modify the number of changes detected, resulting in an ROC curve. Note that Loggle does not directly provide $\partial_t \Theta_{i,j}(t)$, but rather offers a timeline of $\Theta_{i,j}(t)$ values. Since the true $\Theta_{i,j}(t)$ follows a linear trend, we estimate the time derivative using the least squares regression coefficients (see \cref{loggle_detail}). Different detection thresholds then yield a series of sensitivity levels. Note that this trick can only be applied when we know that the underlying change is linear.
The ROC curves are displayed in \cref{fig:roc}, demonstrating that SparTSM's ROC curve achieves a performance comparable to Loggle and both SparTSM and Loggle significantly outperforms the density ratio method. We compute the average AUC value over 10 random trials in Tabel \ref{tab:AUC}, and it can be seen that Loggle marginally outperforms SparTSM. 

We restrict the domain of the Gaussian Graphical Model (GGM) to $\mathbb{R}_+^{10}$ and generate 2,000 samples from this truncated GGM \citep{lin2016estimation}. The ROC curves of the three methods are shown in \cref{fig:roc_tr}, where the results demonstrate that both SparTSM and density ratio exhibit superior performance, while Loggle fails completely (AUC $\approx 0.5$). This failure arises because the truncated GGM density contains an intractable normalizing constant, rendering the likelihood function in Loggle invalid. The density ratio and SparTSM method do not invovle normalizing constant calculation, and is not affected by the intractable normalising constant. 
The average AUC values in \cref{tab:AUC} further confirm this trend, with the SparTSM method significantly outperforming both density ratio and Loggle.

\begin{table}
\centering
\resizebox{1.0\linewidth}{!}{
\begin{tabular}{lccc}
\toprule
 & SparTSM & Density Ratio & Loggle \\
\midrule
GGM & 0.875 (0.025) & 0.738 (0.150) & \textbf{0.893 (0.026)} \\ 
Trunc. GGM & \textbf{0.733 (0.122)} & 0.624 (0.153) & 0.486 (0.120) \\ 
\bottomrule
\end{tabular}
}
\caption{Comparison of average AUCs over 10 trials. }
\label{tab:AUC}
\end{table}

\subsection{Differential Parameter Inference using SparTSM+}
\begin{table}[t]
\centering
\resizebox{0.7\linewidth}{!}{
\begin{tabular}{lcc}
\toprule
Method & Deterministic & Random \\
\midrule  
Loggle & 4.0\% & 6.0\% \\ 
\midrule              
Oracle & 3.4\% & 2.2\% \\ 
\bottomrule
\rowcolor[gray]{0.9}
\textbf{SparTSM+} & \textbf{5.6\%} & \textbf{5.3\%} \\ 
\bottomrule
\end{tabular}
}
\caption{The proportions of unsuccessful coverage at nominal confidence level of $95\%$. $\mathcal{H}_0:\partial_t\Theta_{1,2}(t) = 0$.}
\label{tab:dlasso}
\end{table}
 
\label{sec:spartsmplus}
\begin{figure}[t]
\centering
\includegraphics[width=0.9\linewidth]{ 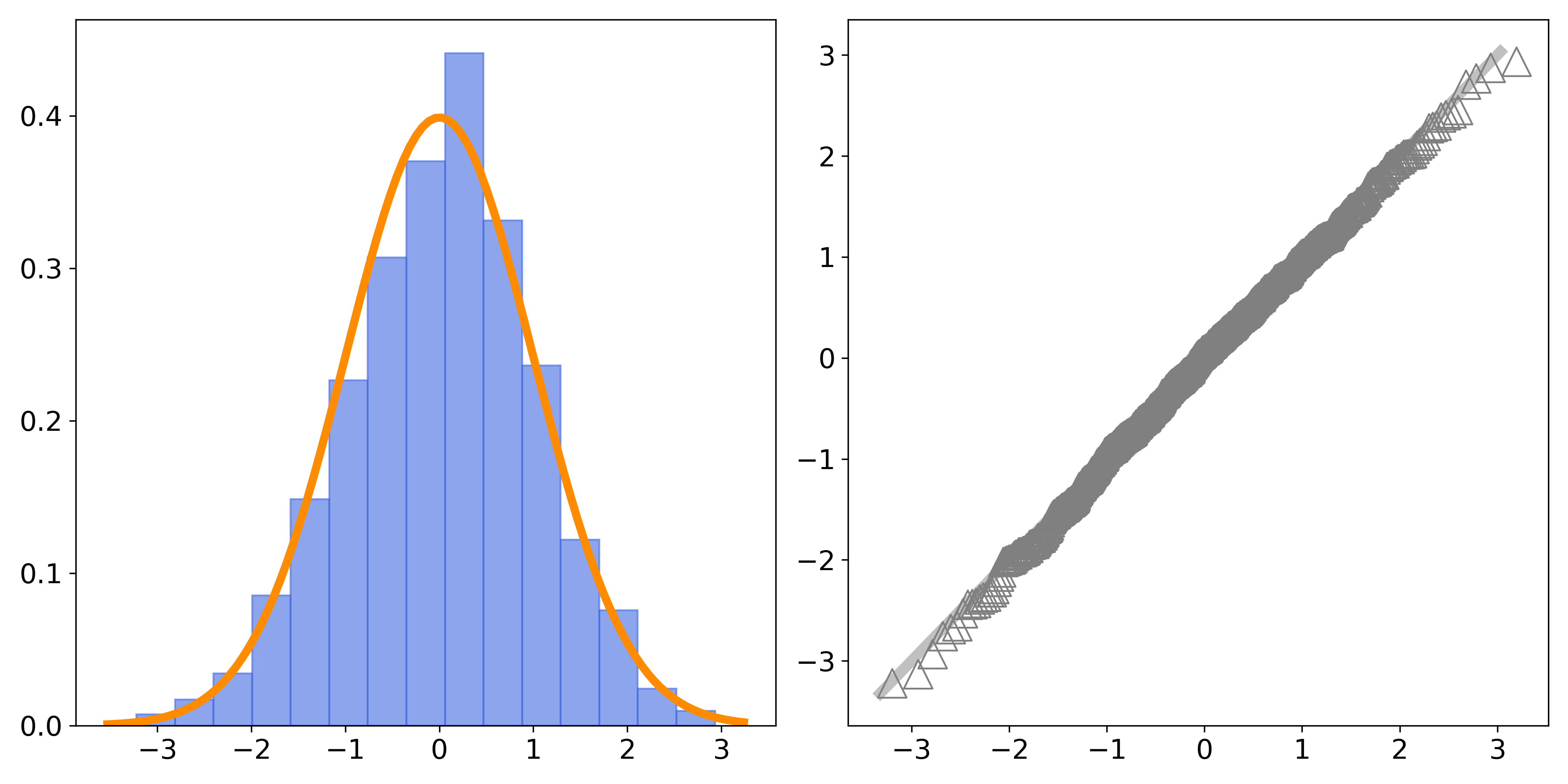}
\caption{\textbf{Gaussian approximation }of SparTSM+. The left shows a bar plot and a QQ-plot is on the right.}
\label{fig:qqplot}
\end{figure}

\begin{figure}[t]
\centering
\includegraphics[width=0.6\linewidth]{ 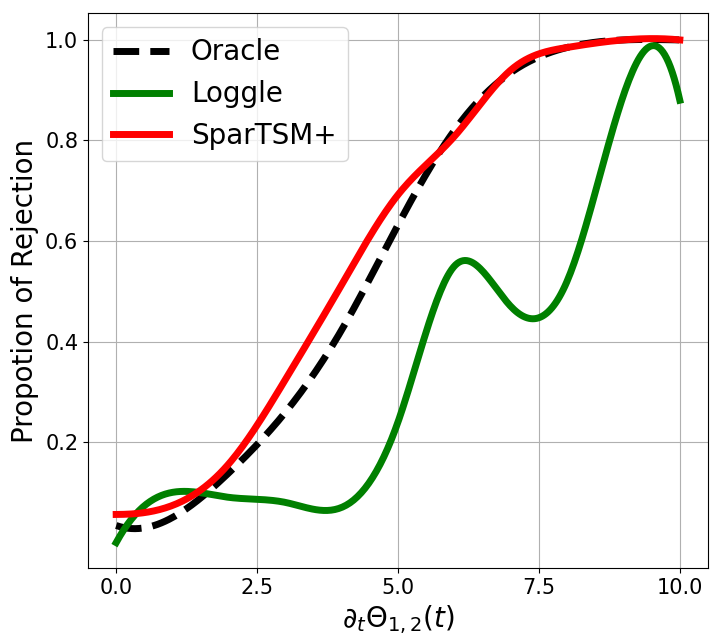}
\caption{\textbf{Testing Power plot}. $\mathcal{H}_0 : \partial_t\Theta_{1,2}(t) = 0$. }
\label{fig:power}
\end{figure}

In exploring SparTSM+, we assess it using two distinct linear GGM datasets, with details provided in \cref{app:expset}. We create 400 samples and 20 nodes from both fixed and random precision matrices, run SparTSM+ 1000 times, and plot the distribution of $(\tilde{\alpha}_{1,2} - \alpha^*_{1,2}) / {\hat{\sigma}_{1,2}}$ in \cref{fig:qqplot}. This showcases the effectiveness of the Gaussian approximation for the standardized SparTSM+. The histogram closely aligns with the standard normal density function. In the Q-Q plot, data points align with or are near the reference line, underscoring the precision of the Gaussian approximation and validating \cref{thm:main,thm:main2} through our experimental results. 

We further compare SparTSM+ against the Oracle method and Loggle regarding confidence interval coverage across 1000 iterations. The Oracle approach presupposes known sparsity of elements and constructs confidence intervals using the asymptotic variance of an M-estimator \citep{Vaart_1998}. Loggle's confidence intervals are derived from the $2.5\%$ and $97.5\%$ quantiles of the test statistic (estimated slope) from 100 permutation tests.

\cref{tab:dlasso} shows the proportions of failure in achieving the nominal confidence level of $95\%$. Even with limited sample sizes, SparTSM+ maintains coverage close to the intended level, whereas the oracle is more conservative. Notably, our heuristic method for determining the confidence interval for Loggle also achieves fairly reliable coverage. Nonetheless, the permutation test is computationally intensive in practice.

As illustrated in \cref{fig:power}, the power of tests is demonstrated for $\partial_t\Theta_{1,2}(t)$ values between 0 and 10. The power is calculated as the ratio of rejections across 1000 independent trials at a significance level of $0.05$ under the null hypothesis $\mathcal{H}_0:\partial_t\Theta_{1,2}(t) = 0$. It is evident that as $\partial_t\Theta_{1,2}(t)$ varies from $\mathcal{H}_0$, the rejection rate of our proposed method escalates, showcasing the test's efficacy. Notably, the proposed method usually provides greater power than the oracle, emphasizing its effectiveness in high-dimensional contexts.

\section{Application: 109th US Senate}
\begin{figure}[t]
    \centering
    \includegraphics[width=0.99\linewidth]{ 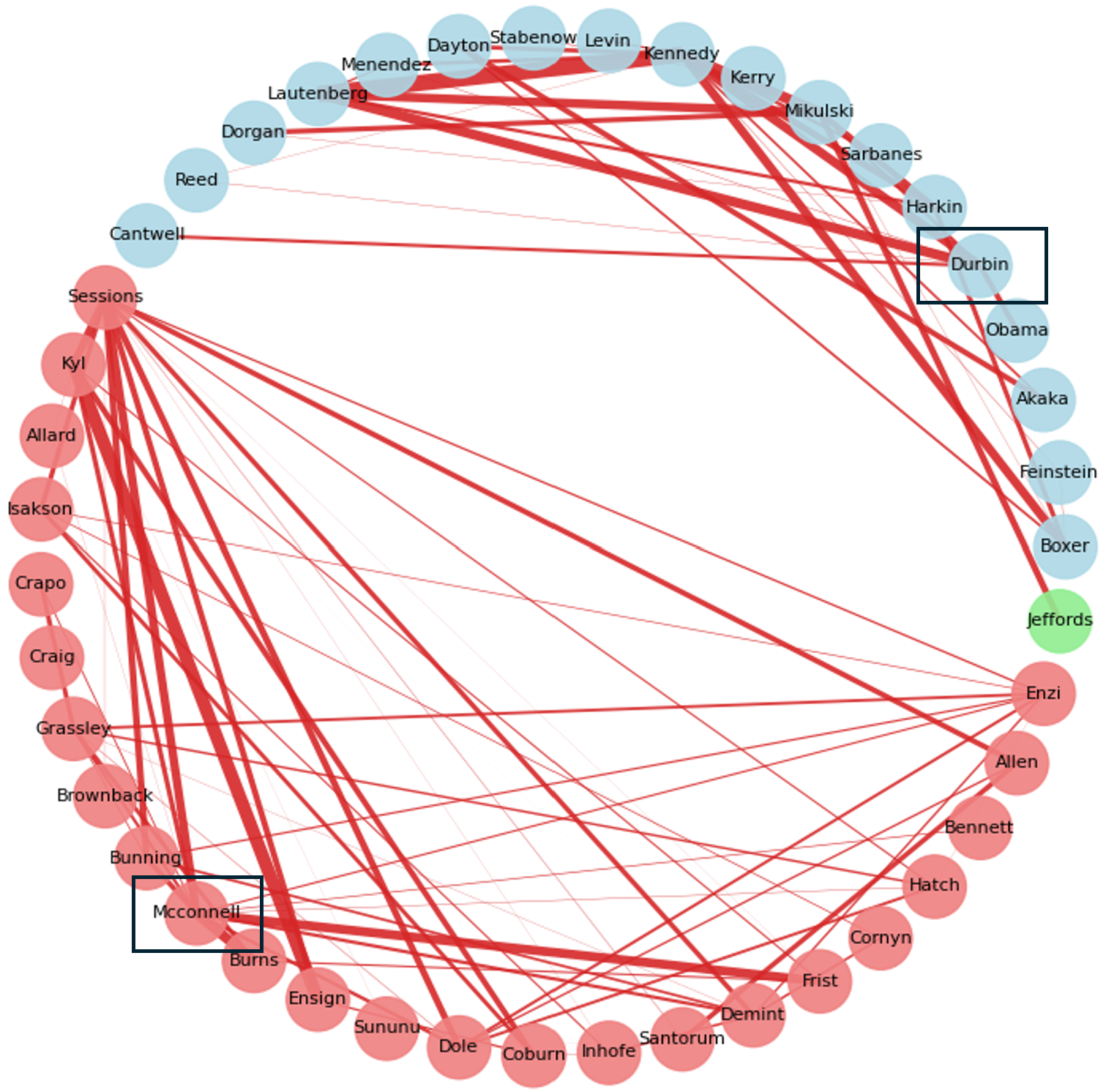}
    \caption{\textbf{The differential graph estimated from 109th senate voting dataset. }Red edge indicate a positive $\alpha_{i,j} \approx \partial_t \Theta^*_{i,j}(t)$. The widths of edges are proportional to $|\alpha_{i,j}|$.
    Democrats, republicans, independents are colored in blue, red and green respectively. Party whips are marked by a black rectangle.}
    \label{fig:senate}
\end{figure}
We employ SparTSM on the voting data from the 109th US Senate \citep{Roy2016Change}, which encapsulates the choices made by 100 senators over around two years. The data is organized as $\mathcal{D} = \{(t_i, \mb x_i)\}_{i=1}^{n=427}, \mb x_i\in \{0,1\}^{100},$ where `1' signifies a yea vote and `0' represents a nay vote. We consider these votes to adhere to a pairwise, time-dependent Ising model, described as $q_t \propto \exp\left( \sum_{i,j} \Theta_{i,j}(t) x_i x_j\right)$, and utilize SparTSM to estimate the differential parameter assuming a linear model $\Theta_{i,j}(t) = \alpha_{i,j}t$. The parameter $\lambda_{\mathrm{lasso}}$ is adjusted to ensure fewer than 100 non-zero elements remain in $\boldalpha$. In \cref{fig:senate}, we illustrate the differential graph $G = (V, E)$, where $E := \{(V_i, V_j) | \hat{\alpha}_{i,j} \neq 0 \}$, meaning the edges denote variations in pairwise interactions within the Ising model. Nodes without any connection are excluded.

Notably, all calculated $\hat{\alpha}_{i,j}$ values are positive and occur within the same party. This suggests that as the congressional term advances, senators increasingly align their votes with key party figures (like whips), creating ``voting blocks'' within the party. Furthermore, there is no apparent bipartisanship emerging between parties. In summary, these findings demonstrate a rise in partisanship throughout the congressional term.

\section{Limitations and Future Works}
Although the proposed method has strong theoretical properties and performs well in simulations, it has several limitations.
First, it can only directly estimate the time differential parameter \emph{within an exponential family}, as general probabilistic models cannot be parameterized solely using differential parameters.
Second, the sufficient statistics $\boldf$ must be specified in advance, and selecting them for real-world datasets remains an open challenge.
Third, the theoretical guarantees for the debiased estimator rely on the sparsity of the inverse Hessian of the objective function—an assumption also made in \citet{Byol2021} but not necessarily valid for all probabilistic models.

A promising future direction is to extend this work to settings where the sufficient statistics $\boldf$ can evolve over time, allowing them to be learned in a way that better captures the complexity of real-world data.

\section*{Acknowledgments}
Daniel J. Williams was supported by the EPSRC Centre for Doctoral Training in Computational Statistics and Data Science, grant number EP/S023569/1. This work was carried out while Leyang Wang was on a University of Bristol School of Mathematics undergraduate summer
bursary placement funded by the Heilbronn Institute for Mathematical Research. The research of Mladen Kolar is supported in part by NSF ECCS-2216912.

\bibliography{aistats_paper}
\newpage
\appendix
\onecolumn

\section{Notations} 
\label{sec:notations}
We denote $\|\cdot\|$ as a norm on $\mathbb{R}^k$ and use $\|\boldsymbol{v}\|_l = \left({\sum_{i=1}^k |v_i|^l}\right)^{1/l}$ as an usual $\ell_l$ norm for $l \in [1,\infty]$ and 
$\|\boldsymbol{v}\|_0 = |\text{supp}(\boldsymbol{v})| = |\{i: v_i \neq 0\}|$; for a matrix $\boldsymbol{M} \in \mathbb{R}^{k\times k}$, $\|\boldsymbol{M}\| = \|\text{Vec}(\boldsymbol{M})\|$; $\text{For } l>0$, $|||\boldsymbol{M}|||_l = \sup_{\|v\|_0 \leq l, \|v\| = 1} |v^\top \boldsymbol{M}v|$ is the maximum $ l$-sparse eigenvalue of $\boldsymbol{M}$; consequently, $|||\boldsymbol{M}|||_\infty = \sup_{\|\boldsymbol{v}\|\leq 1} \|\boldsymbol{Mv}\|_\infty$.

\section{Proof of Theorem \ref{thm:score}} \label{app:scoret}

Recall that 
\[
q(\mb{x}; \mb\theta^*(t)) := \frac{\exp \left(\brangle{\mb\theta^*(t), \mb{f}(\mb{x})}\right)
}{z(\mb\theta^*(t))}.
\]
Thus,
\begin{align*}
     \partial_t \log q(\mb{x}; \mb\theta^*(t))  & = \partial_t \brangle{\mb\theta^*(t), \mb{f}(\mb{x})}  - \partial_t \log z(\mb\theta^*(t)) \\
     &= \brangle{\partial_t\mb\theta^*(t), \mb{f}(\mb{x})} - \frac{ \partial_t  z(\mb\theta^*(t))}{z(\mb\theta^*(t))} \\
     &= \brangle{\partial_t\mb\theta^*(t), \mb{f}(\mb{x})}  - \frac{ \partial_t  \xintegral \exp \left(\brangle{\mb\theta^*(t), \mb{f}(\mb{y})} \right)d\mb{y} }{z(\mb\theta^*(t))} \\
     &= \brangle{\partial_t\mb\theta^*(t), \mb{f}(\mb{x})} - \frac{   \xintegral \partial_t \brangle{\mb\theta^*(t), \mb{f}(\mb{y})} \exp \left(\brangle{\mb\theta^*(t), \mb{f}(\mb{y})} \right)d\mb{y}}{z(\mb\theta^*(t))} \\
     &= \brangle{\partial_t\mb\theta^*(t), \mb{f}(\mb{x})} - \frac{   \xintegral \brangle{\partial_t\mb\theta^*(t), \mb{f}(\mb{y})}\exp \left(\brangle{\mb\theta^*(t), \mb{f}(\mb{y})} \right) d\mb{y} }{z(\mb\theta^*(t))} \\
     &= \brangle{\partial_t\mb\theta^*(t), \mb{f}(\mb{x})} -    \xintegral \brangle{\partial_t\mb\theta^*(t), \mb{f}(\mb{y})}\frac{\exp \left(\brangle{\mb\theta^*(t), \mb{f}(\mb{y})} \right)}{z(\mb\theta^*(t))}d\mb{y}  \\
     &= \brangle{\partial_t\mb\theta^*(t), \mb{f}(\mb{x})} -    \xintegral \brangle{\partial_t\mb\theta^*(t), \mb{f}(\mb{y})}q(\mb{y}; \mb\theta^*(t))d\mb{y} \\
     &= \brangle{\partial_t\mb\theta^*(t), \mb{f}(\mb{x})} -    \Eqtsmall\brs{ \brangle{\partial_t\mb\theta^*(t), \mb{f}(\mb{y})}},
\end{align*}
as desired. 

\section{Proof of \cref{thm:obj_model}} \label{app:SMderivation_1}

Let us begin from the initial formulation of our time based score matching objective with weight function $g(t) = g(t)$ for which $g(0) = g(1) = 0$, i.e. $g(t) = 0$ at the edges of our time domain $t \in [0, 1]$. The initial objective is given by
\begin{align*}
\mathcal{L}(\partial_t \boldtheta(t)) &= \timeintegral \Eqtsmall\brs{g(t) \| \scoret{q_t} - \modelsmall \|^2} dt \\
     &= \timeintegral \Eqtsmall\brs{g(t) \modelsmall^2} dt - 2 \timeintegral\Eqtsmall\brs{g(t) \modelsmall \scoret{q_t}}dt + \timeintegral\Eqtsmall\brs{g(t)\scoret{q_t}^2} dt \\
     &= \timeintegral \Eqtsmall\brs{g(t) \modelsmall^2} dt - 2 \timeintegral\xintegral \brs{q_t \scoret{q_t}} g(t) \modelsmall  d\mb{x}dt + \timeintegral\Eqtsmall\brs{g(t)\scoret{q_t}^2} dt \\
     &= \timeintegral \Eqtsmall\brs{g(t) \modelsmall^2} dt - 2 
     \timeintegral\xintegral \partial_t q_t g(t) \modelsmall  d\mb{x}dt + \timeintegral\Eqtsmall\brs{g(t)\scoret{q_t}^2} dt, 
\end{align*}
where in the final line we have used $q_t \scoret{q_t} = q_t \partial_t \log q_t = \partial_t q_t$ to simplify. First note that the final term is a constant with respect to the model $\modelsmall$, and so we can write $C = \timeintegral\Eqtsmall\brs{g(t)\scoret{q_t}^2} dt$. We continue by expanding the middle term via integration by parts
\begin{align}
\mathcal{L}(\partial_t \boldtheta(t)) &= \timeintegral \Eqtsmall\brs{g(t) \modelsmall^2} dt - 2 \brs{ \xintegral q_t g(t) \modelsmall d\mb{x}}^{t=1}_{t=0} + 2 \timeintegral \xintegral q_t \partial_t (g(t) \modelsmall) d\mb{x} dt + C \nonumber \\
      &= \timeintegral \Eqtsmall\brs{g(t) \modelsmall^2} dt + 2 \timeintegral \xintegral q_t \partial_t (g(t) \modelsmall) d\mb{x} dt + C \nonumber \\
      &= \timeintegral g(t) \Eqtsmall\brs{\modelsmall^2} dt + 2 \timeintegral \partial_t g(t) \Eqtsmall \brs{\modelsmall} dt + 2\timeintegral g(t) \Eqtsmall\brs{\partial_t \modelsmall} dt + C \label{eq:app:deriv1},
\end{align}
where the second equality is due to $\brs{ \xintegral q_t g(t) \modelsmall d\mb{x}}^{t=1}_{t=0} = 0$ as $g(0) = g(1) = 0$. 

Let us now substitute the model given in \cref{eq:model}, stated again here for completeness, given by
\[
\modelsmall = \model = \brangle{\partialthetasmall, \mb{f}(\mb{x})} - \Eqtsmall \brs{\brangle{\partialthetasmall, \mb{f}(\mb{y})}}.
\]
We calculate $\Eqtsmall\brs{\modelsmall}$ and $\Eqtsmall\brs{\partial_t \modelsmall}$ to substitute into the equation above. These are given by
\begin{align*}
    \Eqtsmall\brs{\modelsmall} &= \Eqtsmall\brs{\brangle{\partialthetasmall, \mb{f}(\mb{x})} - \Eqtsmall \brs{\brangle{\partialthetasmall, \mb{f}(\mb{y})}}} \\
    &= \Eqtsmall\brs{\brangle{\partialthetasmall, \mb{f}(\mb{x})}} - \Eqtsmall \brs{\brangle{\partialthetasmall, \mb{f}(\mb{y})}} \\
    &= 0 \\
    \Eqtsmall\brs{\partial_t \modelsmall} &= \Eqtsmall\brs{\partial_t \br{\brangle{\partialthetasmall, \mb{f}(\mb{x})} - \Eqtsmall \brs{\brangle{\partialthetasmall, \mb{f}(\mb{y})}}}} \\
    &= \Eqtsmall\brs{\brangle{\partialdthetasmall, \mb{f}(\mb{x})} - \partial_t \br{\Eqtsmall \brs{\brangle{\partialthetasmall, \mb{f}(\mb{y})}}}} \\
    &= \Eqtsmall\brs{\brangle{\partialdthetasmall, \mb{f}(\mb{x})} - \partial_t \br{\xintegral q_t \brangle{\partialthetasmall, \mb{f}(\mb{y})} d\mb{y}}} \\
    &= \Eqtsmall\brs{\brangle{\partialdthetasmall, \mb{f}(\mb{x})} - \xintegral\partial_t q_t \brangle{\partialthetasmall, \mb{f}(\mb{y})} d\mb{y} - \xintegral q_t \brangle{\partialdthetasmall, \mb{f}(\mb{y})} d\mb{y} } \\
    &= \Eqtsmall\brs{\brangle{\partialdthetasmall, \mb{f}(\mb{x})} - \xintegral\partial_t q_t \brangle{\partialthetasmall, \mb{f}(\mb{y})} d\mb{y} - \Eqtsmall{ \brangle{\partialdthetasmall, \mb{f}(\mb{y})} } } \\ 
    &= \Eqtsmall\brs{\brangle{\partialdthetasmall, \mb{f}(\mb{x})} - \xintegral\partial_t q_t \brangle{\partialthetasmall, \mb{f}(\mb{y})} d\mb{y}} - \Eqtsmall \brs{ \brangle{\partialdthetasmall, \mb{f}(\mb{y})}  } \\
    &= \Eqtsmall\brs{ - \xintegral\partial_t q_t \brangle{\partialthetasmall, \mb{f}(\mb{y})} d\mb{y} },
\end{align*}
where the integral $\xintegral(\dots)d\mb{y}$ is the same integral as $\xintegral(\dots)d\mb{x}$ but written as such to make them distinct from one another. Substituting these into \cref{eq:app:deriv1} gives
\begin{align*}
\mathcal{L}(\partial_t \boldtheta(t)) &= \timeintegral g(t) \Eqtsmall\brs{\modelsmall^2} dt - 2\timeintegral g(t) \Eqtsmall\brs{ \xintegral \partial_t q_t \brangle{\partialthetasmall, \mb{f}(\mb{y})} d\mb{y} } dt + C \\
      &= \timeintegral g(t) \Eqtsmall\brs{\modelsmall^2} dt - 2\timeintegral g(t) \xintegral q_t \br{  \xintegral \partial_t q_t \brangle{\partialthetasmall, \mb{f}(\mb{y})} d\mb{y}}d\mb{x}  dt + C \\
      &\myeq{(a)} \timeintegral g(t) \Eqtsmall\brs{\modelsmall^2} dt - 2\timeintegral g(t) \br{\xintegral q_t d\mb{x}}  \xintegral \partial_t q_t \brangle{\partialthetasmall, \mb{f}(\mb{y})} d\mb{y} dt + C \\
      &\myeq{(b)} \timeintegral g(t) \Eqtsmall\brs{\modelsmall^2} dt - 2\timeintegral g(t) \xintegral \partial_t q_t \brangle{\partialthetasmall, \mb{f}(\mb{x})} d\mb{x} dt + C. 
\end{align*}
In the equality denoted by (a), we have used the fact that inside the integral $\xintegral q_t (\dots) d\mb{x}$ the only variable dependent on $\mb{x}$ was $q_t$, as $\mb{x}$ and $\mb{y}$ are independent. We also have that $\xintegral q_t d\mb{x} = 1$ by $q_t$ being a probability density function. The equality denoted by (b) contains a re-labelling of $\mb{x} = \mb{y}$, as they are the same variable, only labelled differently originally to make them distinct. We use integration by parts one final time to obtain
\begin{align*}
\mathcal{L}(\partial_t \boldtheta(t)) &= \timeintegral g(t) \Eqtsmall\brs{\modelsmall^2} dt - 2 \brs{\xintegral q_t g(t) \brangle{\partialthetasmall, \mb{f}(\mb{x})} d\mb{x}}^{t=1}_{t=0} + 2 \timeintegral \xintegral q_t \partial_t \br{g(t) \brangle{\partialthetasmall, \mb{f}(\mb{x})}} d\mb{x} dt + C \\
      &= \timeintegral g(t) \Eqtsmall\brs{\modelsmall^2} dt + 2 \timeintegral \xintegral q_t \partial_t \br{g(t) \brangle{\partialthetasmall, \mb{f}(\mb{x})}} d\mb{x} dt + C, 
\end{align*}
using again that $g(0) = g(1) = 0$, and finally
\begin{align*}
\mathcal{L}(\partial_t \boldtheta(t)) &= \timeintegral g(t) \Eqtsmall\brs{\modelsmall^2} dt + 2 \timeintegral \xintegral q_t \partial_t g(t) \brangle{\partialthetasmall, \mb{f}(\mb{x})} d\mb{x} dt + 2\timeintegral\xintegral q_t g(t) \brangle{\partialdthetasmall, \mb{f}(\mb{x})} d\mb{x} dt + C \\
      &= \timeintegral \Eqtsmall \brs{g(t) \modelsmall^2 + 2 \partial_t g(t) \brangle{\partialthetasmall, \mb{f}(\mb{x})} + 2 g(t) \brangle{\partialdthetasmall, \mb{f}(\mb{x})}} dt
\end{align*}
which is the same as \cref{eq:deriv:objfinal} in the main text.
\section{Finite-sample Estimation Error of Lasso Estimator}

An equivalent sample objective function is as the following:
\begin{equation}
     \hat{\mathcal{L}}(\boldsymbol{\alpha}) =  \frac{1}{2n}\left[\boldsymbol{\alpha} \tilde{\boldsymbol{F}}^\top \boldsymbol{G}\tilde{\boldsymbol{F}} \boldsymbol{\alpha}^\top+
    2\boldsymbol{1}_n^\top \partial_t \boldsymbol{G}{\boldsymbol{F}}\boldsymbol{\alpha}^\top \right]\label{lafeq} 
\end{equation}
where diagonal matrix $\boldsymbol{G},\partial_t \boldsymbol{G}\in \mathbb{R}^{n\times n}$ with $i$-th diagonal entry to be  $g(t_{i})$ and $\partial_t g(t_i)$ respectively.

\begin{theorem} \label{consi}
    Suppose Assumption \ref{ass:alphastar} and \ref{ass:re} hold. Any minimizer of the objective function \cref{eq:main.obj} with regularization parameter lower
    bounded as $\lambda_{\mathrm{lasso}} \geq  2\left\|\nabla_{\boldsymbol{\alpha}} \hat{\mathcal{L}}(\boldsymbol{\alpha}^*)\right\|_{\infty}$ satisfies
    \begin{equation}
        ||\hat{\boldsymbol{\alpha}} - \boldsymbol{\alpha}^*||_2 \leq \frac{3}{\kappa}{\|{\ba}^*\|_0^{1/2}}\lambda_{\mathrm{lasso}}, ~~~
        ||\hat{\boldsymbol{\alpha}} - \boldsymbol{\alpha}^*||_1 \leq \frac{6}{\kappa}\|{\ba}^*\|_0\lambda_{\mathrm{lasso}}.
        \label{firstineqconst}
    \end{equation}
\end{theorem}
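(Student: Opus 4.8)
The plan is to run the standard deterministic lasso analysis, which is especially clean here because $\hat{\mathcal{L}}$ is \emph{exactly} quadratic in $\boldalpha$: from \eqref{lafeq} its Hessian $\boldHh := \nabla^2_{\boldalpha}\hat{\mathcal{L}}(\boldalpha) = \tilde{\boldF}^\top \boldsymbol{G}\tilde{\boldF}/n$ is a constant positive semidefinite matrix, so the second-order Taylor expansion about $\boldalpha^*$ is exact and no separate restricted-strong-convexity remainder has to be controlled. Writing $\boldDelta := \hat{\boldalpha} - \boldalpha^*$, I would first record the optimality (``basic'') inequality: since $\hat{\boldalpha}$ minimizes $\hat{\mathcal{L}}(\boldalpha) + \lambda_{\mathrm{lasso}}\|\boldalpha\|_1$ and $\boldalpha^*$ is a feasible competitor,
\[
\hat{\mathcal{L}}(\hat{\boldalpha}) - \hat{\mathcal{L}}(\boldalpha^*) \le \lambda_{\mathrm{lasso}}\big(\|\boldalpha^*\|_1 - \|\hat{\boldalpha}\|_1\big).
\]
Expanding the left-hand side exactly as $\nabla_{\boldalpha}\hat{\mathcal{L}}(\boldalpha^*)^\top\boldDelta + \tfrac12\boldDelta^\top\boldHh\boldDelta$ and moving the linear term across gives $\tfrac12\boldDelta^\top\boldHh\boldDelta \le -\nabla_{\boldalpha}\hat{\mathcal{L}}(\boldalpha^*)^\top\boldDelta + \lambda_{\mathrm{lasso}}(\|\boldalpha^*\|_1 - \|\hat{\boldalpha}\|_1)$.

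Next I would bound the two terms on the right. By Hölder's inequality and the standing lower bound $\lambda_{\mathrm{lasso}} \ge 2\|\nabla_{\boldalpha}\hat{\mathcal{L}}(\boldalpha^*)\|_\infty$, the gradient term is at most $\tfrac12\lambda_{\mathrm{lasso}}\|\boldDelta\|_1$. For the penalty term, since $\boldalpha^*$ is supported on $S$ (Assumption \ref{ass:alphastar}), the usual decomposition $\|\boldalpha^*\|_1 - \|\hat{\boldalpha}\|_1 \le \|\boldDelta_S\|_1 - \|\boldDelta_{S^c}\|_1$ holds. Combining these with $\|\boldDelta\|_1 = \|\boldDelta_S\|_1 + \|\boldDelta_{S^c}\|_1$ and with $\boldDelta^\top\boldHh\boldDelta \ge 0$ yields both the cone membership $\|\boldDelta_{S^c}\|_1 \le 3\|\boldDelta_S\|_1$, i.e. $\boldDelta \in \mathbb{C}_3(S)$, and the sharpened inequality $\tfrac12\boldDelta^\top\boldHh\boldDelta \le \tfrac{3}{2}\lambda_{\mathrm{lasso}}\|\boldDelta_S\|_1$.

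Finally I would invoke the restricted eigenvalue condition (Assumption \ref{ass:re}), which applies precisely because of the cone membership established above: $\boldDelta^\top\boldHh\boldDelta = \tfrac1n\|\boldsymbol{G}^{1/2}\tilde{\boldF}\boldDelta\|_2^2 \ge \kappa\|\boldDelta\|_2^2$. Feeding this into the sharpened inequality and using $\|\boldDelta_S\|_1 \le \|\boldalpha^*\|_0^{1/2}\|\boldDelta_S\|_2 \le \|\boldalpha^*\|_0^{1/2}\|\boldDelta\|_2$ gives $\kappa\|\boldDelta\|_2^2 \le 3\lambda_{\mathrm{lasso}}\|\boldalpha^*\|_0^{1/2}\|\boldDelta\|_2$; dividing by $\|\boldDelta\|_2$ produces the $\ell_2$ bound. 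The $\ell_1$ bound then follows by pushing the $\ell_2$ bound back through the cone inequality, $\|\boldDelta\|_1 \le 4\|\boldDelta_S\|_1 \le 4\|\boldalpha^*\|_0^{1/2}\|\boldDelta\|_2$, and simplifying constants.

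The argument contains no genuinely hard step — it is a direct instantiation of the classical lasso bound, and the exact quadratic structure of $\hat{\mathcal{L}}$ removes the usual technical burden of controlling a Taylor remainder. The one point requiring care is the logical ordering: the RE condition may only be applied after $\boldDelta$ has been shown to lie in $\mathbb{C}_3(S)$, so the cone inequality must be derived first, and this is exactly what forces the prescribed choice $\lambda_{\mathrm{lasso}} \ge 2\|\nabla_{\boldalpha}\hat{\mathcal{L}}(\boldalpha^*)\|_\infty$. Tracking the numerical constants through these combinations is the remaining bookkeeping.
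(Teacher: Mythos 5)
Your overall architecture is exactly the paper's: the exact quadratic expansion of $\hat{\mathcal{L}}$ about $\boldalpha^*$, the basic optimality inequality, H\"older's inequality together with $\lambda_{\mathrm{lasso}} \ge 2\|\nabla_{\boldalpha}\hat{\mathcal{L}}(\boldalpha^*)\|_\infty$, the cone membership $\widehat{\Delta} := \hat{\boldalpha}-\boldalpha^* \in \mathbb{C}_3(S)$, and only then the RE condition. Your derivation of the $\ell_2$ bound $\|\widehat{\Delta}\|_2 \le \tfrac{3}{\kappa}\sqrt{s}\,\lambda_{\mathrm{lasso}}$ (writing $s := \|\boldalpha^*\|_0$) is correct and coincides with the paper's.

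The gap is in the $\ell_1$ bound. Your final step, $\|\widehat{\Delta}\|_1 \le 4\|\widehat{\Delta}_S\|_1 \le 4\sqrt{s}\,\|\widehat{\Delta}\|_2$, combined with your $\ell_2$ bound yields $\|\widehat{\Delta}\|_1 \le \tfrac{12}{\kappa}\,s\,\lambda_{\mathrm{lasso}}$, a factor $2$ worse than the claimed $\tfrac{6}{\kappa}\,s\,\lambda_{\mathrm{lasso}}$; ``simplifying constants'' cannot close this, because the cone constant $3$ forces the prefactor $4$ where you would need $2$. The paper reaches $6$ by a separate lemma asserting the stronger membership $\widehat{\Delta}\in\mathbb{C}_1(S)$, i.e.\ $\|\widehat{\Delta}\|_1 \le 2\|\widehat{\Delta}_S\|_1$, which it deduces from the claim $\|\hat{\boldalpha}\|_1 \le \|\boldalpha^*\|_1$ --- a claim that is itself stated without justification (it does not follow from optimality alone, since $\hat{\mathcal{L}}(\hat{\boldalpha})$ may be smaller than $\hat{\mathcal{L}}(\boldalpha^*)$). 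Within your own argument there is a clean fix: do not discard the negative term in your sharpened inequality. Keeping it and applying RE gives $\kappa\|\widehat{\Delta}\|_2^2 \le 3\lambda_{\mathrm{lasso}}\|\widehat{\Delta}_S\|_1 - \lambda_{\mathrm{lasso}}\|\widehat{\Delta}_{S^c}\|_1$, hence
\[
\lambda_{\mathrm{lasso}}\|\widehat{\Delta}\|_1 \;\le\; 4\lambda_{\mathrm{lasso}}\|\widehat{\Delta}_S\|_1 - \kappa\|\widehat{\Delta}\|_2^2 \;\le\; 4\lambda_{\mathrm{lasso}}\sqrt{s}\,\|\widehat{\Delta}\|_2 - \kappa\|\widehat{\Delta}\|_2^2 \;\le\; \max_{u\ge 0}\left(4\lambda_{\mathrm{lasso}}\sqrt{s}\,u - \kappa u^2\right) \;=\; \frac{4\,s\,\lambda_{\mathrm{lasso}}^2}{\kappa},
\]
so that $\|\widehat{\Delta}\|_1 \le \tfrac{4}{\kappa}\,s\,\lambda_{\mathrm{lasso}} \le \tfrac{6}{\kappa}\,s\,\lambda_{\mathrm{lasso}}$. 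This recovers (indeed improves) the stated constant while avoiding both your factor-of-two loss and the paper's unjustified lemma.
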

\subsection{Proof of Theorem \ref{consi}}\label{consistency}
Following equation \ref{lafeq} and Assumption \ref{ass:alphastar}, we can express the second order Taylor polynomial around $\boldsymbol{\alpha}^*$ as follows:
\begin{equation}
    \hat{\mathcal{L}}(\hat{\boldsymbol{\alpha}}) = \hat{\mathcal{L}}(\boldsymbol{\alpha}^*) + \frac{1}{n} \left[ 
\boldsymbol{\alpha}^*{\tilde{\boldsymbol{F}}}^\top \boldsymbol{G}{\tilde{\boldsymbol{F}}}\boldsymbol  +  \boldsymbol{1}_n^\top \partial_t \boldsymbol{G} {\boldsymbol{F}}\right](\boldsymbol{\hat{\alpha}} - \boldsymbol{\alpha}^*)^\top+ \frac{1}{2n}(\boldsymbol{\hat{\alpha}} - \boldsymbol{\alpha}^*) \left[ 
{\tilde{\boldsymbol{F}}}^\top \boldsymbol{G}{\tilde{\boldsymbol{F}}}\right](\hat{\boldsymbol{\alpha}} - \boldsymbol{\alpha}^*)^\top
\end{equation}
where the residual $R_2(\hat{\boldsymbol{\alpha}}, \boldsymbol{\alpha}^*) = 0$ since $\hat{\mathcal{L}}$ quadratic.

\begin{lemma}
    Under condition $\lambda_{\mathrm{lasso}} \geq 2 \left\|\nabla_{\boldsymbol{\alpha}} \hat{\mathcal{L}}(\boldsymbol{\alpha}^*)\right\|_\infty$, the error vector $\widehat{\Delta} = \hat{\boldsymbol{\alpha}}- \boldsymbol{\alpha}^* \in \mathbb{C}_{3}(S)$
\end{lemma}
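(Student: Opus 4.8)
The plan is to run the standard lasso ``basic inequality'' argument, exploiting that $\hat{\mathcal{L}}$ is a convex quadratic and that $\boldalpha^*$ is supported on $S$. First I would record the optimality of $\hat{\boldalpha}$ as a minimizer of the penalized objective in \cref{eq:main.obj}: since $\hat{\boldalpha}$ attains an objective value no larger than that at $\boldalpha^*$,
\[
\hat{\mathcal{L}}(\hat{\boldalpha}) + \lambda_{\mathrm{lasso}}\|\hat{\boldalpha}\|_1 \le \hat{\mathcal{L}}(\boldalpha^*) + \lambda_{\mathrm{lasso}}\|\boldalpha^*\|_1 .
\]

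Next I would lower-bound the loss difference by its first-order term. Because $\hat{\mathcal{L}}$ is quadratic with positive semidefinite Hessian $\tilde{\boldsymbol{F}}^\top \boldsymbol{G}\tilde{\boldsymbol{F}}/n$ (the diagonal entries $g(t_i)$ of $\boldsymbol{G}$ are nonnegative), the exact Taylor expansion already displayed above gives
\[
\hat{\mathcal{L}}(\hat{\boldalpha}) - \hat{\mathcal{L}}(\boldalpha^*) = \langle \nabla_{\boldalpha}\hat{\mathcal{L}}(\boldalpha^*), \widehat{\Delta}\rangle + \tfrac{1}{2n}\widehat{\Delta}^\top \tilde{\boldsymbol{F}}^\top \boldsymbol{G}\tilde{\boldsymbol{F}}\widehat{\Delta} \ge \langle \nabla_{\boldalpha}\hat{\mathcal{L}}(\boldalpha^*), \widehat{\Delta}\rangle .
\]
Combining this with the basic inequality and with H\"older's inequality $\langle \nabla_{\boldalpha}\hat{\mathcal{L}}(\boldalpha^*), \widehat{\Delta}\rangle \ge -\|\nabla_{\boldalpha}\hat{\mathcal{L}}(\boldalpha^*)\|_\infty \|\widehat{\Delta}\|_1$, and then invoking the hypothesis $\lambda_{\mathrm{lasso}} \ge 2\|\nabla_{\boldalpha}\hat{\mathcal{L}}(\boldalpha^*)\|_\infty$, reduces everything to an inequality between $\ell_1$ norms of $\widehat{\Delta}$.

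The one genuinely combinatorial step is to split the $\ell_1$ norms across $S$ and $S^c$. Using $\boldalpha^*_{S^c}=\boldzero$ and the reverse triangle inequality on the $S$-block, I would bound $\|\boldalpha^*\|_1 - \|\hat{\boldalpha}\|_1 \le \|\widehat{\Delta}_S\|_1 - \|\widehat{\Delta}_{S^c}\|_1$. Chaining the bounds yields $-\tfrac12\|\widehat{\Delta}\|_1 \le \|\widehat{\Delta}_S\|_1 - \|\widehat{\Delta}_{S^c}\|_1$; expanding $\|\widehat{\Delta}\|_1 = \|\widehat{\Delta}_S\|_1 + \|\widehat{\Delta}_{S^c}\|_1$ and rearranging gives exactly $\|\widehat{\Delta}_{S^c}\|_1 \le 3\|\widehat{\Delta}_S\|_1$, i.e.\ $\widehat{\Delta}\in\mathbb{C}_3(S)$.

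There is no serious obstacle here: this is a textbook decomposability computation. The only points needing care are keeping the H\"older sign convention correct (we only use the lower bound on the inner product) and justifying that the quadratic term is nonnegative so that it can be dropped, which follows from $\boldsymbol{G}\succeq 0$ and hence $\tilde{\boldsymbol{F}}^\top\boldsymbol{G}\tilde{\boldsymbol{F}}\succeq 0$.
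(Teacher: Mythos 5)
Your proof is correct and follows essentially the same route as the paper's: the lasso basic inequality from optimality of $\hat{\boldsymbol{\alpha}}$, the exact quadratic Taylor expansion of $\hat{\mathcal{L}}$ at $\boldsymbol{\alpha}^*$, H\"older's inequality combined with the condition $\lambda_{\mathrm{lasso}} \ge 2\|\nabla_{\boldsymbol{\alpha}}\hat{\mathcal{L}}(\boldsymbol{\alpha}^*)\|_\infty$, and the decomposition of $\|\boldsymbol{\alpha}^*\|_1 - \|\hat{\boldsymbol{\alpha}}\|_1$ over $S$ and $S^c$. The only cosmetic difference is that you discard the nonnegative quadratic term $\tfrac{1}{2n}\|\boldsymbol{G}^{1/2}\tilde{\boldsymbol{F}}\widehat{\Delta}\|_2^2$ at the outset, whereas the paper carries it on the left-hand side and invokes its nonnegativity at the end; both arguments use $\boldsymbol{G}\succeq 0$ in exactly the same way.
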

\begin{proof}
Since $\hat{\boldsymbol{\alpha}}$ is optimal, we have 
\begin{equation}
        \hat{\mathcal{L}}(\hat{\boldsymbol{\alpha}}) + \lambda_{\mathrm{lasso}} ||\hat{\boldsymbol{\alpha}}||_1 \leq \hat{\mathcal{L}}(\boldsymbol{\alpha}^*) + \lambda_{\mathrm{lasso}} ||{\boldsymbol{\alpha}}^*||_1
\end{equation}
Rearranging we have from second order Taylor approximation:
\begin{equation}
    0 \leq \frac{1}{2n}||\boldsymbol{G}^{\frac{1}{2}}{\tilde{\boldsymbol{F}}}\hat{\Delta}^\top||_2^2 \leq \frac{1}{n} |\nabla_{\boldsymbol{\alpha}}\hat{\mathcal{L}}(\boldsymbol{\alpha}^*)\hat{\Delta}^\top| + \lambda_{\mathrm{lasso}}\{||\boldsymbol{\alpha}^*||_1 - ||\hat{\boldsymbol{\alpha}}||_1\}
\end{equation}
where $|\cdot|$ denote the absolute value. Now since $\boldsymbol{\alpha}^*$ is $S$-sparse, we can write 
\begin{equation}
\|\boldsymbol{\alpha}^*\|_1-\|\hat{\boldsymbol{\alpha}}\|_1=\|\boldsymbol{\alpha}_S^*\|_1-\|\boldsymbol{\alpha}_S^*+\widehat{\Delta}_S\|_1-\|\widehat{\Delta}_{S^c}\|_1.
\end{equation}
Using Holder’s inequality and the triangle inequality, we have
\begin{align}
    0 &\leq \frac{1}{n}||G^{\frac{1}{2}}{\tilde{\boldsymbol{F}}}\hat{\Delta}^\top||_2^2 \leq 2|\nabla_{\boldsymbol{\alpha}}\hat{\mathcal{L}}(\boldsymbol{\alpha}^*)\hat{\Delta}^\top| + 2\lambda_{\mathrm{lasso}} \{ \|\boldsymbol{\alpha}_S^*\|_1 - \|\boldsymbol{\alpha}^*_S+\widehat{\Delta}_S\|_1 - \|\widehat{\Delta}_{S^c}\|_1 \} \\
    &\leq  2\|\nabla_{\boldsymbol{\alpha}}\hat{\mathcal{L}}(\boldsymbol{\alpha}^*)\|_\infty \|\widehat{\Delta}\|_1 + 2\lambda_{\mathrm{lasso}} \{ \|\widehat{\Delta}_S\|_1 - \|\widehat{\Delta}_{S^c}\|_1 \} \\
    &\leq \lambda_{\mathrm{lasso}} \{ 3\|\widehat{\Delta}_S\|_1 - \|\widehat{\Delta}_{S^c}\|_1 \},
\end{align}
where last inequality shows that $\widehat{\Delta} \in \mathbb{C}_{3}(S)$.
\end{proof}
\begin{lemma}
 $||\widehat{\Delta}||_1 \leq 2 \sqrt{s}||\widehat{\Delta}||_2$ where $s = |S|$.\end{lemma}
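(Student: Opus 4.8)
The plan is to exploit the only structural fact we have about the error vector, namely the cone membership $\widehat{\Delta} \in \mathbb{C}_3(S)$ established in the preceding lemma, and then convert the resulting on-support $\ell_1$ control into an $\ell_2$ bound through a standard norm-equivalence inequality. There are no probabilistic or analytic subtleties here; the whole argument is deterministic and hinges on the cone inequality plus Cauchy--Schwarz.

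First I would decompose the $\ell_1$ norm across the support $S$ and its complement, writing $\|\widehat{\Delta}\|_1 = \|\widehat{\Delta}_S\|_1 + \|\widehat{\Delta}_{S^c}\|_1$. Applying the cone condition $\|\widehat{\Delta}_{S^c}\|_1 \leq 3\|\widehat{\Delta}_S\|_1$ then bounds the off-support mass entirely in terms of the on-support mass, so that $\|\widehat{\Delta}\|_1 \leq 4\|\widehat{\Delta}_S\|_1$. This reduces the problem to controlling $\|\widehat{\Delta}_S\|_1$, a quantity supported on only $s = |S|$ coordinates.

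Next I would invoke Cauchy--Schwarz (equivalently, the $\ell_1$--$\ell_2$ inequality for a vector with at most $s$ nonzero entries) to obtain $\|\widehat{\Delta}_S\|_1 \leq \sqrt{s}\,\|\widehat{\Delta}_S\|_2$, and finally note $\|\widehat{\Delta}_S\|_2 \leq \|\widehat{\Delta}\|_2$ since restricting to a coordinate subset cannot increase the $\ell_2$ norm. Chaining these steps delivers an inequality of the claimed form $\|\widehat{\Delta}\|_1 \leq c\sqrt{s}\,\|\widehat{\Delta}\|_2$.

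I do not expect any genuine obstacle; the result is a routine consequence of cone membership and Cauchy--Schwarz. The one point I would double-check is the bookkeeping of the constant: with the cone parameter $3$ from the $(\kappa,3)$ restricted-eigenvalue setup, the clean bound that falls out is $4\sqrt{s}$ rather than the stated $2\sqrt{s}$, so I would verify whether the intended constant here is $4\sqrt{s}$, or whether the cone parameter relevant to this particular lemma differs, before relying on the precise coefficient downstream.
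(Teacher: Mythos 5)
Your argument is internally sound, but it proves the inequality with constant $4$, not the constant $2$ that the lemma asserts --- and the discrepancy you flagged at the end is precisely where your route and the paper's route diverge. The paper does \emph{not} invoke the cone $\mathbb{C}_3(S)$ from the preceding lemma here. Instead it derives a \emph{tighter} cone condition from the $\ell_1$-norm comparison at the lasso optimum: granting $\|\hat{\boldsymbol{\alpha}}\|_1 \leq \|\boldsymbol{\alpha}^*\|_1$ and using $\boldsymbol{\alpha}^*_{S^c} = \boldsymbol{0}$ together with the triangle inequality,
\begin{equation*}
\|\boldsymbol{\alpha}^*_S\|_1 = \|\boldsymbol{\alpha}^*\|_1 \geq \|\hat{\boldsymbol{\alpha}}\|_1 = \|\boldsymbol{\alpha}^*_S + \widehat{\Delta}_S\|_1 + \|\widehat{\Delta}_{S^c}\|_1 \geq \|\boldsymbol{\alpha}^*_S\|_1 - \|\widehat{\Delta}_S\|_1 + \|\widehat{\Delta}_{S^c}\|_1,
\end{equation*}
whence $\|\widehat{\Delta}_{S^c}\|_1 \leq \|\widehat{\Delta}_S\|_1$, i.e.\ $\widehat{\Delta} \in \mathbb{C}_1(S)$. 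With this cone the same decomposition and Cauchy--Schwarz steps you use give $\|\widehat{\Delta}\|_1 \leq 2\|\widehat{\Delta}_S\|_1 \leq 2\sqrt{s}\,\|\widehat{\Delta}_S\|_2 \leq 2\sqrt{s}\,\|\widehat{\Delta}\|_2$, which is the stated constant. So the missing ingredient in your proposal is not a sharper norm inequality but a different source of cone membership: the optimality-based comparison of $\ell_1$ norms, which yields cone parameter $1$ rather than $3$.

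The constant is not cosmetic bookkeeping: it propagates into the $\ell_1$ error bound of \cref{consi}, where the factor $2$ produces the bound $\frac{6}{\kappa}\|{\ba}^*\|_0\lambda_{\mathrm{lasso}}$; your factor $4$ would weaken this to $\frac{12}{\kappa}\|{\ba}^*\|_0\lambda_{\mathrm{lasso}}$. Two caveats are worth recording. First, your weaker bound would still suffice for the qualitative conclusions (consistency, rates), only with larger constants, so the gap is one of matching the stated lemma rather than of breaking the theory. Second, the paper's own justification of the key premise $\|\hat{\boldsymbol{\alpha}}\|_1 \leq \|\boldsymbol{\alpha}^*\|_1$ is terse: it follows from the basic lasso optimality inequality only when $\hat{\mathcal{L}}(\hat{\boldsymbol{\alpha}}) \geq \hat{\mathcal{L}}(\boldsymbol{\alpha}^*)$, which is not automatic since $\boldsymbol{\alpha}^*$ minimizes the population rather than the empirical objective; nevertheless, that comparison is the route by which the paper obtains the constant $2$, and any proof of the lemma as stated needs it or something equivalent.
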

 \begin{proof}
Since $S$ is the support of $\mb \alpha^*$
\begin{align}
    &||\boldsymbol{\alpha}_S^*||_1 = ||\boldsymbol{\alpha}^*||_1 \geq ||\boldsymbol{\alpha}^* + \widehat{\Delta}||_1 
    = ||\boldsymbol{\alpha}_S^* + \widehat{\Delta}_S||_1 + ||\widehat{\Delta}_{S^c}||_1 
    \geq ||\boldsymbol{\alpha}^*_S||_1 - ||\widehat{\Delta}_S||_1 + ||\widehat{\Delta}_{S^c}||_1.
\end{align}
where we used the fact that $\boldsymbol{\alpha}_{S^c}^* = 0 $ and triangle inequality. This implies that $\widehat{\Delta} \in \mathbb{C}_1(S)$. Therefore
\begin{equation}
    ||\widehat{\Delta}||_1 = ||\widehat{\Delta}_S||_1 + ||\widehat{\Delta}_{S^c}||_1 \leq 2 ||\widehat{\Delta}_S||_1  \leq 2\sqrt{s} ||\widehat{\Delta}||_2
\end{equation}
\end{proof}
With all above, we can then apply the RE condition.  Finally we have $||\widehat{\Delta}||_2^2 \leq \frac{3}{\kappa}\lambda_{\mathrm{lasso}} \sqrt{s} ||\widehat{\Delta}||_2$, which implies that 
\begin{equation}
    ||\hat{\boldsymbol{\alpha}} - \boldsymbol{\alpha}^*||_2 \leq \frac{3}{\kappa}\lambda_{\mathrm{lasso}} \sqrt{s}
\end{equation}

\subsection{Proof of Theorem \ref{probineqconst}}\label{pf:probineqconst}

We have from definition that $\nabla_{\boldsymbol{\alpha}}\hat{\mathcal{L}}(\boldsymbol{\alpha}) = \frac{1}{n} \sum_{j=1}^{n} \nabla_{\boldsymbol{\alpha}}\boldsymbol{m_{\alpha}}(t_j,\boldsymbol{x}_j)
$. %

\begin{lemma}
    Under the condition that the r.v. $\boldsymbol{f}(x) $ is bounded in $\ell_\infty$ norm, let $\sigma^2 = \max_{1\leq i\leq k} \boldsymbol{\Sigma}_{ii}$, where $\boldsymbol{\Sigma}$ is the covariance matrix of the random variable $\nabla_{\boldsymbol{\alpha}} \boldsymbol{m_{{\alpha}^*}}(t,\boldsymbol{x})$, the elements of $\nabla_{\boldsymbol{\alpha}} \hat{\mathcal{L}} (\boldsymbol{\alpha}^*)$ follow a zero-mean sub-Gaussian distribution with parameter $\sigma^2/n$.
\end{lemma}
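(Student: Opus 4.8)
The plan is to treat $\nabla_{\boldsymbol{\alpha}}\hat{\mathcal{L}}(\boldsymbol{\alpha}^*)$ as the i.i.d.\ average $\frac{1}{n}\sum_{j=1}^{n}\nabla_{\boldsymbol{\alpha}} m_{\boldsymbol{\alpha}^*}(t_j,\boldsymbol{x}_j)$ and to verify the three components of the claim in turn: mean zero, almost-sure coordinatewise boundedness of each summand, and the $\sigma^2/n$ sub-Gaussian proxy. For the mean-zero property I would invoke \cref{ass:alphastar}: since $\boldsymbol{\alpha}^*$ minimises the population objective $\mathcal{L}$, first-order stationarity gives $\nabla_{\boldsymbol{\alpha}}\mathcal{L}(\boldsymbol{\alpha}^*)=\boldsymbol{0}$; as $\nabla_{\boldsymbol{\alpha}}\hat{\mathcal{L}}(\boldsymbol{\alpha}^*)$ is an unbiased estimator of $\nabla_{\boldsymbol{\alpha}}\mathcal{L}(\boldsymbol{\alpha}^*)$, taking expectations over the draw $(t_j,\boldsymbol{x}_j)\sim q_t\times\mathrm{Uniform}(0,1)$ yields $\mathbb{E}[\nabla_{\boldsymbol{\alpha}}\hat{\mathcal{L}}(\boldsymbol{\alpha}^*)]=\boldsymbol{0}$, so each summand is centred.

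For boundedness I would differentiate $m_{\boldsymbol{\alpha}}$ explicitly. Writing $\hat{s}_{\boldsymbol{\alpha}}(\boldsymbol{x};t)=\langle\boldsymbol{\alpha},\tilde{\boldsymbol{f}}(\boldsymbol{x})\rangle$ with $\tilde{\boldsymbol{f}}(\boldsymbol{x})=\boldsymbol{f}(\boldsymbol{x})-\hat{\mathbb{E}}_{q_t}[\boldsymbol{f}]$, we get $\nabla_{\boldsymbol{\alpha}} m_{\boldsymbol{\alpha}}(t,\boldsymbol{x}) = 2 g(t)\langle\boldsymbol{\alpha},\tilde{\boldsymbol{f}}(\boldsymbol{x})\rangle\,\tilde{\boldsymbol{f}}(\boldsymbol{x}) + 2\partial_t g(t)\,\boldsymbol{f}(\boldsymbol{x})$, which at $\boldsymbol{\alpha}^*$ is a combination of the bounded quantities $\boldsymbol{f}$, $\tilde{\boldsymbol{f}}$, and $\langle\boldsymbol{\alpha}^*,\tilde{\boldsymbol{f}}\rangle$ weighted by the fixed functions $g$ and $\partial_t g$. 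By \cref{ass:boundedsufficient} we have $\|\boldsymbol{f}\|_\infty\le C_f$, hence $\|\tilde{\boldsymbol{f}}\|_\infty\le 2C_f$ and $|\langle\boldsymbol{\alpha}^*,\tilde{\boldsymbol{f}}\rangle|\le 2C_f\|\boldsymbol{\alpha}^*\|_1$; combined with the boundedness of $g$ and $\partial_t g$ on $[0,1]$ (they are fixed smooth functions vanishing at the endpoints), every coordinate of $\nabla_{\boldsymbol{\alpha}} m_{\boldsymbol{\alpha}^*}(t,\boldsymbol{x})$ is bounded almost surely by an explicit constant. A centred, bounded random variable is sub-Gaussian by Hoeffding's lemma, and the average of $n$ i.i.d.\ sub-Gaussian coordinates is sub-Gaussian with proxy variance scaled by $1/n$; identifying the per-sample proxy with the coordinate variance $\boldsymbol{\Sigma}_{ii}(\boldsymbol{\alpha}^*)$ and bounding it by $\sigma^2=\max_i\boldsymbol{\Sigma}_{ii}$ delivers the stated parameter $\sigma^2/n$.

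I expect the delicate step to be this final identification: Hoeffding's lemma on its own certifies sub-Gaussianity with a range-based proxy of the form $(b-a)^2/4$ per coordinate, whereas the claim asks for the \emph{variance} $\sigma^2/n$. I would reconcile this by observing that the downstream maximal inequality in \cref{probineqconst} only requires a sub-Gaussian tail of order $\sigma^2/n$, and that for a bounded summand the variance $\boldsymbol{\Sigma}_{ii}$ and the squared range are of the same order, so the proxy may be taken proportional to $\sigma^2$; carrying the exact constant through the moment-generating-function bound $\mathbb{E}[e^{\lambda Z_i}]\le e^{\lambda^2\sigma^2/(2n)}$ is the part requiring care. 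A second subtlety worth flagging is that the clean i.i.d.\ decomposition implicitly treats the conditional expectation $\hat{\mathbb{E}}_{q_t}$ as exact, so that $m_{\boldsymbol{\alpha}^*}$ is genuinely a function of a single pair $(t,\boldsymbol{x})$; the Nadaraya--Watson approximation error, which couples the samples, must be absorbed into a separate term of the analysis rather than into this lemma.
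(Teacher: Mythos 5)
Your proposal is correct in substance, but it reaches the zero-mean conclusion by a genuinely different route than the paper. The paper does not invoke first-order stationarity of the population objective at all: it proves $\mathbb{E}[\nabla_{\boldsymbol{\alpha}}\hat{\mathcal{L}}(\boldsymbol{\alpha}^*)]=\boldsymbol{0}$ by a direct computation, substituting $\langle\boldsymbol{\alpha}^*,\boldsymbol{f}(\boldsymbol{x})-\mathbb{E}_{q_t}[\boldsymbol{f}(\boldsymbol{y})]\rangle=\partial_t\log q_t(\boldsymbol{x})$ (which uses the clause $\partial_t\boldsymbol{\theta}^*_t=\boldsymbol{\alpha}^*$ of \cref{ass:alphastar} together with \cref{thm:score}) and then integrating by parts in $t$ twice, using the boundary conditions $g(0)=g(1)=0$. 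In effect the paper re-derives Fisher consistency of the time-score-matching objective from the model structure, which verifies that the truth is a stationary point rather than assuming it; your argument instead uses the other clause of \cref{ass:alphastar} (that $\mathcal{L}$ is minimised at $\boldsymbol{\alpha}^*$), gets $\nabla_{\boldsymbol{\alpha}}\mathcal{L}(\boldsymbol{\alpha}^*)=\boldsymbol{0}$ for free from unconstrained optimality of a smooth (quadratic) objective, and only needs unbiasedness of the empirical gradient. Your route is shorter and cleaner given the assumption as stated; the paper's route is more self-contained and would survive even if the "minimised at $\boldsymbol{\alpha}^*$" clause were dropped from the assumption. The boundedness/Hoeffding part of your argument matches the paper's essentially step for step (the paper bounds $\|\nabla_{\boldsymbol{\alpha}} m_{\boldsymbol{\alpha}^*}\|_\infty$ via Cauchy--Schwarz with $\|\boldsymbol{\alpha}^*\|_\infty$ where you use H\"older with $\|\boldsymbol{\alpha}^*\|_1$, an immaterial difference).

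Two remarks on the delicate points you flag. First, the variance-versus-range issue is real, and the paper handles it no better than you do: it asserts the parameter $\sigma^2/n$ "by addition rule of variance," which conflates the sub-Gaussian proxy with the variance. Your proposed reconciliation, however, is not quite right as stated: for a bounded centred variable the variance can be arbitrarily smaller than the squared range, so they are \emph{not} of the same order in general; the honest fix is to carry the range-based constant $M^2/n$ (with $M$ the almost-sure bound on the gradient coordinates) through \cref{lemma:Jexp} and \cref{probineqconst}, which changes nothing downstream except replacing $\sigma$ by $M$. Second, your observation that the Nadaraya--Watson approximation couples the samples and must be excluded from this lemma is apt; the paper's proof silently makes the same idealisation by working with the exact conditional expectation $\mathbb{E}_{q_t}[\boldsymbol{f}(\boldsymbol{y})]$ throughout.
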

\begin{proof}
\textbf{(1) Proof the sub-Gaussian: }
let $|g(t)|\leq C_g \in \mathbb{R}_{+}, |\partial_tg(t)|\leq C_{\partial_t g(t)} \in \mathbb{R}_{+}, ||\mathbb{E}_{q_t}[\boldsymbol{f}(x)]||_\infty \leq C_{E} \in \mathbb{R}_{+}$ and $||\boldsymbol{f}(x)||_\infty \leq C_f \in \mathbb{R}_{+}$ for $t \in [0,1]$. By using triangle inequality in $(i),(iii)$ and Cauchy-Schwarz inequality in $(ii)$ , we have by definition of $\nabla_{\boldsymbol{\alpha}}\boldsymbol{m_{\alpha}}(t,\boldsymbol{x})$
\begin{align}
    \left\|\nabla_{\boldsymbol{\alpha}}\boldsymbol{m_{\alpha}}(t,\boldsymbol{x})\right\|_\infty \leq \left\| \boldsymbol{\alpha}^* \right\|_\infty \left( C_g (C_f + C_E)^2 + C_f C_{\partial_t g(t)} \right) 
\end{align}
therefore by $M = ||\boldsymbol{\alpha}^*||(C_g(C_f+C_E)^2 + C_fC_{\partial_t g(t)})$ for simplicity
\begin{align}
    \left\| \nabla_{\boldsymbol{\alpha}} \hat{\mathcal{L}} \left( \boldsymbol{\alpha}^* \right)\right\|_\infty = \left\|\frac{1}{n} \sum_{j=1}^{n} \nabla_{\boldsymbol{\alpha}}\boldsymbol{m_{\alpha}}(t_j,\boldsymbol{x}_j)\right\|_\infty \leq M
\end{align}
which implies that  $\nabla_{\boldsymbol{\alpha}}\hat{\mathcal{L}}(\boldsymbol{\alpha}^*)$ is element-wise bounded hence all elements are sub-Gaussian by \citep{hoeffding1994probability}. 
Therefore, fixed number of data points $n$, then each element of $\nabla_{\boldsymbol{\alpha}}\hat{\mathcal{L}}(\boldsymbol{\alpha}^*)$ follows a sub-Gaussian distribution with parameter $\sigma^2/n$ by addition rule of variance.
\\
\textbf{(2) Proof of zero-mean: }We also assert that $\nabla_{\boldsymbol{\alpha}}\hat{\mathcal{L}}(\boldsymbol{\alpha}^*) \in \mathbb{R}^k$ is a zero-mean random variable. Recall that the conditional density function $q(x|t)$ as $q_t(x)$ where $q_t:\mathbb{R}^k\rightarrow \mathbb{R}$ and $q: \mathbb{R}\rightarrow\mathbb{R}$ represent the density of $t$, where $t$ is uniformly distributed in the domain $[0,1]$, we then have the following:
\begin{align}
    &\mathbb{E}_{x\sim q_t(x),\,t\sim q(t)}\left[\nabla_{\boldsymbol{\alpha}}\hat{\mathcal{L}}(\boldsymbol{\alpha}^*)\right] \notag &\\
    &= \frac{1}{n}\mathbb{E}_{x\sim q_t(x),\, t\sim q(t)}\left[\boldsymbol{\alpha}^*{\tilde{\boldsymbol{F}}}^\top \boldsymbol{G}{\tilde{\boldsymbol{F}}} + \boldsymbol{1}_n^\top \partial_t \boldsymbol{G} \boldsymbol{F}\right] \label{eq:1} &\\
    &\stackrel{(i)}{=} \mathbb{E}_{x\sim q_t(x),\, t\sim q(t)}\left[\boldsymbol{{\alpha}^*}{\left(\boldsymbol{f}(x)-\mathbb{E}_{q_t}[\boldsymbol{f}(y)]\right)} g(t){\left(\boldsymbol{f}(x)-\mathbb{E}_{q_t}[\boldsymbol{f}(y)]\right)} + (\partial_t g(t)) \boldsymbol{f}(x)\right] \label{eq:2} &\\
    &= \int_t \int_x q(t) q_t(x)\{
    \boldsymbol{{\alpha}^*}{(\boldsymbol{f}(x)-\mathbb{E}_{q_t}[\boldsymbol{f}(y)])}g(t)\left(\boldsymbol{f}(x)-\mathbb{E}_{q_t}[\boldsymbol{f}(y)]\right)
    + (\partial_t g(t)) \boldsymbol{f}(x)
    \} \mathrm{d}x \, \mathrm{d}t\\
    &\stackrel{(ii)}{=} \int_t \int_x q_t(x)\{
    (\partial_t\log q_t(x)) g(t)\left(\boldsymbol{f}(x)-\mathbb{E}_{q_t}[\boldsymbol{f}(y)]\right)
    + (\partial_t g(t)) \boldsymbol{f}(x)
    \} \mathrm{d}x \, \mathrm{d}t\\
    &= \int_t \int_x 
    (\partial_t q_t(x)) g(t) \left(\boldsymbol{f}(x)-\mathbb{E}_{q_t}[\boldsymbol{f}(y)]\right)
    + q_t(x) (\partial_t g(t)) \boldsymbol{f}(x)
    \} \mathrm{d}x \, \mathrm{d}t\\
    &= [q_t(x) g(t)\left(\boldsymbol{f}(x)-\mathbb{E}_{q_t}[\boldsymbol{f}(y)]\right)]^{t=1}_{t=0} 
    -\int_t \int_x q_t(x)
    \partial_t \left[ g(t) \left( \boldsymbol{f}(x)-\mathbb{E}_{q_t}[\boldsymbol{f}(y)]\right)\right]
    - q_t(x) (\partial_t g(t))\boldsymbol{f}(x) \mathrm{d}x \, \mathrm{d}t\\
    &\stackrel{(iii)}{=} \int_t \int_x -q_t(x)
    \left[ (\partial_t g(t)) \left(\boldsymbol{f}(x)-\mathbb{E}_{q_t}[\boldsymbol{f}(y)]\right) - g(t) \partial_t \mathbb{E}_{q_t}\left[ \boldsymbol{f}(y)\right] \right]
    + q_t(x)  (\partial_t g(t))\boldsymbol{f}(x) \mathrm{d}x \, \mathrm{d}t\\ 
    &= \int_t \int_x q_t(x)
    \left( [\partial_t g(t)]  \mathbb{E}_{q_t}[\boldsymbol{f}(y)] + g(t) \partial_t \mathbb{E}_{q_t}[\boldsymbol{f}(y)] \right)
     \mathrm{d}x \, \mathrm{d}t = \int_t \int_x q_t(x) \partial_t [g(t) \mathbb{E}_{q_t}[\boldsymbol{f}(y)]] \mathrm{d}x \, \mathrm{d}t\\
    &= [q_t(x)g(t)\mathbb{E}_{q_t}[\boldsymbol{f}(y)]]^{t=1}_{t=0} -\int_t \int_x (\partial_t q_t(x)) g(t) \mathbb{E}_{q_t}[\boldsymbol{f}(y)] \mathrm{d}x \, \mathrm{d}t\\     
    &\stackrel{(iv)}{=} -\int_t \partial_t \left(\int_x  q_t(x) \mathrm{d}x \,\right) g(t) \mathbb{E}_{q_t}[\boldsymbol{f}(y)]  \mathrm{d}t = \boldsymbol{0}
\end{align}
where $(i)$ follows that $\{f(x_1),..,f(x_n)\}$ are i.i.d; $(ii)$ follows from the definition of $\partial_t \log q_t(x)$ and $q(t) = 1$; $(iii),(iv)$ uses the boundary condition of $g(t)$. Therefore, each element of $\nabla_{\boldsymbol{\alpha}}\hat{\mathcal{L}}(\boldsymbol{\alpha}^*)$ is a zero-mean sub-Gaussian r.v. 

\end{proof}
\begin{lemma}\label{lemma:Jexp}
Let $J_i$ denotes i-th element of  $\nabla_{\boldsymbol{\alpha}}\hat{\mathcal{L}}(\boldsymbol{\alpha}^*)$, we have 
\begin{equation}
    \mathbb{E}[\max_{i\in [k]} |J_i|] \leq  2\sqrt{\frac{\sigma^2 \log k}{n}}
\end{equation}
\end{lemma}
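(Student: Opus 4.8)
The plan is to apply the standard maximal inequality for sub-Gaussian random variables, exploiting the fact—established in the preceding lemma—that each coordinate $J_i$ of $\nabla_{\boldsymbol{\alpha}}\hat{\mathcal{L}}(\boldsymbol{\alpha}^*)$ is a zero-mean sub-Gaussian random variable with variance proxy $\sigma^2/n$. The core tool is a moment-generating-function (MGF) argument combined with Jensen's inequality, and I emphasize that independence across coordinates is \emph{not} needed here, since the union-style bound over the sum of MGFs does not require it.

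First I would reduce the maximum of absolute values to a maximum over a finite collection of one-sided sub-Gaussian variables. Writing $Y_{2i-1} = J_i$ and $Y_{2i} = -J_i$ for $i \in [k]$, sub-Gaussianity controls both tails, so each of the $2k$ variables $Y_1, \dots, Y_{2k}$ is zero-mean sub-Gaussian with the same variance proxy $\sigma^2/n$, and $\max_{i\in[k]}|J_i| = \max_{1\le j\le 2k} Y_j$.

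Next, for any $\lambda > 0$ I would apply Jensen's inequality to the convex map $x \mapsto e^{\lambda x}$ and bound the maximum by a sum:
\[
\exp\!\left(\lambda\,\mathbb{E}\!\left[\max_{i\in[k]}|J_i|\right]\right) \le \mathbb{E}\!\left[\max_{1\le j\le 2k} e^{\lambda Y_j}\right] \le \sum_{j=1}^{2k}\mathbb{E}\!\left[e^{\lambda Y_j}\right] \le 2k\,\exp\!\left(\frac{\lambda^2 \sigma^2}{2n}\right),
\]
where the final step uses the sub-Gaussian MGF bound $\mathbb{E}[e^{\lambda Y_j}] \le \exp(\lambda^2\sigma^2/(2n))$. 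Taking logarithms yields $\mathbb{E}[\max_{i}|J_i|] \le \log(2k)/\lambda + \lambda\sigma^2/(2n)$, and I would then optimize over $\lambda$ by choosing $\lambda = \sqrt{2n\log(2k)/\sigma^2}$, which gives $\mathbb{E}[\max_{i}|J_i|] \le \sqrt{2\sigma^2\log(2k)/n}$.

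Finally, to match the stated clean constant, I would absorb the logarithmic overhead: for $k \ge 2$ we have $2\log(2k) \le 4\log k$, hence $\sqrt{2\sigma^2\log(2k)/n} \le 2\sqrt{\sigma^2\log k/n}$, which is the desired bound. The argument is essentially routine; the only points requiring genuine care are correctly accounting for both tails—so that the factor inside the logarithm is $2k$ rather than $k$—and verifying that the advertised constant indeed absorbs the $\log 2$ term once $k\ge 2$.
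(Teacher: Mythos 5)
Your proof is correct and follows essentially the same route as the paper's: Jensen's inequality applied to $x \mapsto e^{\lambda x}$, bounding the maximum by the sum of sub-Gaussian MGFs, optimizing over $\lambda$, and reducing $\max_{i}|J_i|$ to a maximum over the $2k$ one-sided variables $\{J_1,\dots,J_k,-J_1,\dots,-J_k\}$. If anything, you are slightly more careful than the paper, which invokes this doubling identity at the end without explicitly checking that the resulting $\log(2k)$ is absorbed by the stated constant $2$ once $k \ge 2$.
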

\begin{proof}
 We have \(\{J_i\}_{i=1}^k\) is a sequence of zero-mean random variables, each follows sub-Gaussian with parameter \(\sigma^2 /n\). For any \(\lambda > 0\), we can use the convexity of the exponential function to obtain
\[
\exp\{\lambda \mathbb{E}[\max_{i \in [k]} J_i]\} \leq \mathbb{E}[\exp\{\lambda \max_{i \in [k]} J_i\}]
\]
by Jensen's inequality. And by the monotonicity of the exponential
\[
\mathbb{E}[\exp\{\lambda \max_{i \in [k]} J_i\}] = \mathbb{E}[\max_{i \in [k]} e^{\lambda J_i}] \leq \sum_{i=1}^{k} \mathbb{E}[e^{\lambda J_i}] \leq k e^{\frac{\lambda^2 \sigma^2}{2n}}.
\]
where last inequality follows the definition of sub-Gaussian r.v. Therefore,
\[
\mathbb{E}[\max_{i \in [k]} J_i] \leq \frac{\log k}{\lambda} + \lambda \frac{\sigma^2}{2n}
\]
and \(\lambda = \sqrt{\frac{2n \log k}{\sigma^2}}\) is optimal in $\lambda>0$; substituting we have 
\begin{equation}
\mathbb{E}\left[\max_{i \in [k]} J_i\right] \leq \frac{\sigma}{\sqrt{2n}} \sqrt{\log k} + \frac{\sigma}{\sqrt{2n}} \sqrt{\log k} = \sqrt{2 \sigma^2 \log k/n} \label{expconst}
\end{equation}
Since \ref{expconst} does not assume independence between individual \(J_i\). The result follows by
\[
    \max_{i \in [k]} |J_i| = \max \{|J_1|, \ldots, |J_k|\} = \max \{J_1, \ldots, J_k, -J_1, \ldots, -J_k\}
\]
\end{proof}
Consequently, from standard sub-Gaussian tail bound and definition of $\ell_{\infty}$ norm, we have
\begin{equation}
    \mathbb{P}\left[ \left\|\nabla_{\boldsymbol{\alpha}} \hat{\mathcal{L}}(\boldsymbol{\alpha}^*) \right\|_{\infty} \geq \sigma\left(\sqrt{\frac{2\log k}{n}}+\delta \right) \right]\leq 2e^{-\frac{n\delta^2}{2}} \;\; \text{for all } \delta>0
\end{equation}
Hence if we set $\lambda_{\mathrm{lasso}}  = 2\sigma\left(\sqrt{\frac{2\log k}{n}}+\delta\right)$, then we have the probability that $\lambda_{\mathrm{lasso}} \geq 2||\nabla_{\boldsymbol{\alpha}}\hat{\mathcal{L}}(\boldsymbol{\alpha}^*)||_{\infty}$ is in rate $1-2\exp\{-\Theta(n\delta^2)\}$, which implies 
\begin{equation}
    ||\hat{\boldsymbol{\alpha}} - \boldsymbol{\alpha}^*||_2 \leq \frac{6}{\kappa}\sqrt{s}\sigma\left(\sqrt{\frac{2\log k}{n}}+\delta\right)
\end{equation}
with probability greater than $1-2\exp\{-\frac{n\delta^2}{2}\}$ for all $\delta>0$.

\section{Theoretical Results of Debiased lasso}

\subsection{Variance Estimator}
\begin{lemma}
    $|||\hat{\mb \Sigma}(\hat{\boldsymbol{\alpha}})-\hat{\mb \Sigma}({\boldsymbol{\alpha}}^*)|||_\infty \leq L||\hat{\boldsymbol{\alpha}}-\boldsymbol{\alpha}^*||_1$
where $L$ is a constant.
\end{lemma}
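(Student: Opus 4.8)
The plan is to exploit that the gradient $\nabla_{\boldalpha} m_\boldalpha(\boldx,t)$ is \emph{affine} in $\boldalpha$, so that every entry of $\hat{\mb\Sigma}(\boldalpha)=\mathrm{Cov}_n[\nabla_{\boldalpha} m_\boldalpha(\boldx,t)]$ is a quadratic polynomial in $\boldalpha$; the difference $\hat{\mb\Sigma}(\hat\boldalpha)-\hat{\mb\Sigma}(\boldalpha^*)$ then reduces to terms that are at most quadratic in $\Delta:=\hat\boldalpha-\boldalpha^*$, and boundedness of the sufficient statistics converts this into a bound linear in $\|\Delta\|_1$. Writing $\hat s_\boldalpha(\boldx,t)=\brangle{\boldalpha,\tilde{\boldf}(\boldx,t)}$ with the centered feature $\tilde{\boldf}(\boldx,t):=\boldf(\boldx)-\hat\E_{q_t}[\boldf(\boldy)]$, I would differentiate $m_\boldalpha=g(t)\brangle{\boldalpha,\tilde{\boldf}}^2+2\partial_t g(t)\brangle{\boldalpha,\boldf}$ to obtain
\[
\nabla_{\boldalpha} m_\boldalpha(\boldx,t)=2g(t)\brangle{\boldalpha,\tilde{\boldf}}\tilde{\boldf}+2\partial_t g(t)\boldf=\boldM(\boldx,t)\boldalpha+\boldb(\boldx,t),
\]
where $\boldM=2g(t)\tilde{\boldf}\tilde{\boldf}^\top$ and $\boldb=2\partial_t g(t)\boldf$ do not depend on $\boldalpha$.

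Consequently, with $\boldM_i,\boldb_i$ the per-sample values and $\bar\boldM,\bar\boldb$ their empirical means, the centered gradient is $(\boldM_i-\bar\boldM)\boldalpha+(\boldb_i-\bar\boldb)$, and $\hat{\mb\Sigma}(\boldalpha)$ is the empirical average of its outer product with itself. Substituting $\hat\boldalpha=\boldalpha^*+\Delta$ and setting $\mb{p}_i:=(\boldM_i-\bar\boldM)\boldalpha^*+(\boldb_i-\bar\boldb)$ and $\mb{q}_i:=(\boldM_i-\bar\boldM)\Delta$, the outer products that are constant in $\boldalpha$ cancel in the difference, so the per-sample contribution becomes $\mb{p}_i\mb{q}_i^\top+\mb{q}_i\mb{p}_i^\top+\mb{q}_i\mb{q}_i^\top$, i.e.\ linear plus quadratic in $\Delta$. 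To bound its entries I would invoke \cref{ass:boundedsufficient}: since $\hat\E_{q_t}[\boldf]$ is a convex combination of $\boldf$ values it is bounded by $C_f$ in $\ell_\infty$, hence $\|\tilde{\boldf}\|_\infty\le 2C_f$; together with $|g|\le C_g$ and $|\partial_t g|\le C_{\partial_t g}$ on $[0,1]$ this gives $\|\mb{p}_i\|_\infty\le C_2$ (a constant depending on the fixed $\|\boldalpha^*\|_1$) and $\|\mb{q}_i\|_\infty\le C_1\|\Delta\|_1$. Bounding each entry of the difference matrix by $\|\mb{p}_i\|_\infty\|\mb{q}_i\|_\infty$ and $\|\mb{q}_i\|_\infty^2$ then yields $\max_{a,b}|(\hat{\mb\Sigma}(\hat\boldalpha)-\hat{\mb\Sigma}(\boldalpha^*))_{a,b}|\le 2C_1C_2\|\Delta\|_1+C_1^2\|\Delta\|_1^2$.

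The main obstacle is the last step, because the difference is genuinely quadratic in $\Delta$, so the clean Lipschitz form holds only after $\|\Delta\|_1$ is controlled: on the consistency region used in \cref{thm:main} (the event $\|\hat\boldalpha-\boldalpha^*\|_1\le\delta_\boldalpha<1$, which \cref{probineqconst} guarantees with high probability) one has $\|\Delta\|_1^2\le\|\Delta\|_1$, so the bound collapses to $L\|\Delta\|_1$ with $L:=2C_1C_2+C_1^2$. I would make explicit that $L$ is a genuine constant, independent of $n$ and $k$, since every factor above comes only from $C_f,C_g,C_{\partial_t g}$ and $\|\boldalpha^*\|_1$, and that passing from the entrywise bound to $|||\cdot|||_\infty$ incurs no dimension factor because under the paper's convention this norm is controlled by the maximum absolute entry — matching the way the lemma subsequently enters $\delta_C$ through the factor $\|\bome_j^*\|_1^2$ in \cref{thm:main}.
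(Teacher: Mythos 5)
Your proof is correct and follows essentially the same route as the paper's own proof, which is only a one-line sketch: ``compute the form of each $\hat{\mb \Sigma}_{k,k^\prime}(\hat{\boldsymbol{\alpha}})-\hat{\mb \Sigma}_{k,k^\prime}({\boldsymbol{\alpha}}^*)$ and apply the Lipschitz property of $\nabla_{\boldsymbol{\alpha}} m_{\boldsymbol{\alpha}}$ in $\boldsymbol{\alpha}$.'' Your affine decomposition $\nabla_{\boldsymbol{\alpha}} m_{\boldsymbol{\alpha}}=\boldsymbol{M}(\boldsymbol{x},t)\boldsymbol{\alpha}+\boldsymbol{b}(\boldsymbol{x},t)$, with $\boldsymbol{M}=2g(t)\tilde{\boldsymbol{f}}\tilde{\boldsymbol{f}}^\top$ and $\boldsymbol{b}=2\partial_t g(t)\boldsymbol{f}$ bounded via \cref{ass:boundedsufficient} (and the NW estimate being a convex combination, so $\|\tilde{\boldsymbol{f}}\|_\infty\le 2C_f$), is exactly that computation carried out in full.

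Where you go beyond the paper — and rightly so — is the quadratic term. Since $\hat{\mb \Sigma}(\cdot)$ is quadratic in $\boldsymbol{\alpha}$, the increment contains $\frac{1}{n}\sum_i \boldsymbol{q}_i\boldsymbol{q}_i^\top$ of size $\|\Delta\|_1^2$, so no constant $L$ can make the stated linear bound hold for \emph{arbitrary} $\hat{\boldsymbol{\alpha}}$: in one dimension $\hat{\Sigma}(\alpha)$ is a parabola with nonnegative leading coefficient, and a bound linear in $|\Delta|$ on its increment fails for large $|\Delta|$. The paper's sketch silently drops this term; your restriction to the event $\|\hat{\boldsymbol{\alpha}}-\boldsymbol{\alpha}^*\|_1\le \delta_{\boldsymbol{\alpha}}<1$ — precisely the event on which this lemma is invoked in \cref{lemma:var} and \cref{thm:main}, and which \cref{probineqconst} makes high-probability — is the minimal hypothesis under which the claimed Lipschitz form is actually true, and it matches how the lemma is used downstream. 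Your final remark on the norm is also the correct reading: in \cref{lemma:var} the quantity $|||\cdot|||_\infty$ only enters through bounds of the form $|\boldsymbol{v}^\top \boldsymbol{A}\boldsymbol{w}|\le \|\boldsymbol{v}\|_1\|\boldsymbol{w}\|_1\max_{a,b}|A_{a,b}|$, so the entrywise bound you derive suffices with no dimension factor.
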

\begin{proof}
    Apply the fact that there exists $L_0$ such that $\|\nabla_{\ba} m(\hat{\ba}) - \nabla_{\ba} m(\ba^*)\| \leq L_0 \|\hat{\boldsymbol{\alpha}}-\boldsymbol{\alpha}^*\|$
    after computing the form of each $\hat{\mb \Sigma}_{k,k^\prime}(\hat{\boldsymbol{\alpha}})-\hat{\mb \Sigma}_{k,k^\prime}({\boldsymbol{\alpha}}^*)$.
\end{proof}
\begin{lemma}\label{lemma:var}
    On the event that 
    \[
    \left\|\hat{\boldsymbol{\alpha}} - \boldsymbol{\alpha}^* \right\|_1 \leq \delta_\alpha / 2, \quad
    \left\|\tilde{\boldsymbol{\omega}}_j - \boldsymbol{\omega}_j^* \right\|_1 \leq \delta_{\boldsymbol{\omega}}, \quad \text{and} \quad
    |||\hat{\mb \Sigma}({\alpha}^*) - \boldsymbol{\Sigma} |||_\infty \leq \delta_\sigma / 2,
    \]
    we have 
    \begin{equation}
        \left|\hat{\sigma}^2_j - \sigma_j^2 \right| \leq (L \delta_\alpha + \delta_\sigma)\left( \left\|\boldsymbol{\omega}_j^* \right\|^2_1 + \delta^2_{\boldsymbol{\omega}} \right) + \delta_{\boldsymbol{\omega}}^2 |||\boldsymbol{\Sigma}|||_\infty
    \end{equation}
\end{lemma}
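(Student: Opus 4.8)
The plan is to reduce the claim to an entrywise perturbation analysis of the two quadratic forms $\hat{\sigma}_j^2 = \tilde{\bome}_j^\top \hat{\mb\Sigma}(\hat{\boldalpha})\,\tilde{\bome}_j$ and $\sigma_j^2 = \bome_j^{*\top}\boldsymbol{\Sigma}\,\bome_j^*$. Writing $E := \hat{\mb\Sigma}(\hat{\boldalpha}) - \boldsymbol{\Sigma}$ for the matrix perturbation and $\boldDelta := \tilde{\bome}_j - \bome_j^*$ for the vector perturbation, I would first substitute $\hat{\mb\Sigma}(\hat{\boldalpha}) = \boldsymbol{\Sigma} + E$ and $\tilde{\bome}_j = \bome_j^* + \boldDelta$ and use the symmetry of $\boldsymbol{\Sigma}$ to obtain the exact identity
\begin{align*}
\hat{\sigma}_j^2 - \sigma_j^2 = \tilde{\bome}_j^\top E\,\tilde{\bome}_j \;+\; 2\,\bome_j^{*\top}\boldsymbol{\Sigma}\,\boldDelta \;+\; \boldDelta^\top \boldsymbol{\Sigma}\,\boldDelta .
\end{align*}
This separates the error into a piece driven purely by the matrix perturbation $E$, a piece driven purely by the vector perturbation $\boldDelta$, and a cross term coupling the two.

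For the $E$-term I would control $|||E|||_\infty$ by the triangle inequality, splitting through the plug-in point $\boldalpha^*$, namely $|||E|||_\infty \le |||\hat{\mb\Sigma}(\hat{\boldalpha}) - \hat{\mb\Sigma}(\boldalpha^*)|||_\infty + |||\hat{\mb\Sigma}(\boldalpha^*) - \boldsymbol{\Sigma}|||_\infty$. The first summand is bounded by the preceding Lipschitz lemma combined with the event $\|\hat{\boldalpha} - \boldalpha^*\|_1 \le \delta_\alpha/2$, which yields $L\delta_\alpha/2$, and the second is $\delta_\sigma/2$ by the third event; hence $|||E|||_\infty \le (L\delta_\alpha + \delta_\sigma)/2$. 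Invoking the elementary quadratic-form bound $|w^\top M w| \le |||M|||_\infty \|w\|_1^2$ (Hölder plus the definition of the entrywise operator norm), together with $\|\tilde{\bome}_j\|_1 \le \|\bome_j^*\|_1 + \delta_{\boldsymbol{\omega}}$ and the inequality $(a+b)^2 \le 2a^2 + 2b^2$, the first term is at most $(L\delta_\alpha + \delta_\sigma)(\|\bome_j^*\|_1^2 + \delta_{\boldsymbol{\omega}}^2)$. The last term is handled directly by $|\boldDelta^\top \boldsymbol{\Sigma}\,\boldDelta| \le |||\boldsymbol{\Sigma}|||_\infty \|\boldDelta\|_1^2 \le |||\boldsymbol{\Sigma}|||_\infty \delta_{\boldsymbol{\omega}}^2$. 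Summing these two contributions reproduces exactly the two terms appearing in the stated bound.

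The hard part will be the cross term $2\,\bome_j^{*\top}\boldsymbol{\Sigma}\,\boldDelta$, which does not appear in the target inequality. Under the three events it is only of order $\|\bome_j^*\|_1\,\delta_{\boldsymbol{\omega}}$, i.e.\ first order in the vector perturbation, so it is not automatically dominated by the retained second-order quantities $\delta_{\boldsymbol{\omega}}^2$; note that even if an information-type identity $\boldsymbol{\Sigma}\bome_j^* = \boldsymbol{e}_j$ held, the cross term would reduce to $\boldDelta_j$, still of order $\delta_{\boldsymbol{\omega}}$ rather than $\delta_{\boldsymbol{\omega}}^2$. I therefore expect the crux of matching the stated form to be an argument that this cross term is absorbed into the other contributions—either by an AM--GM split $2\|\bome_j^*\|_1\delta_{\boldsymbol{\omega}} \le \|\bome_j^*\|_1^2 + \delta_{\boldsymbol{\omega}}^2$ (which enlarges the $|||\boldsymbol{\Sigma}|||_\infty$ contribution) or by treating it as a lower-order remainder in the relevant asymptotic regime. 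Aside from this coupling step, the remaining manipulations are the routine perturbation estimates described above.
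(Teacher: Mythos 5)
Your exact decomposition and your handling of the two quadratic terms coincide with what the paper does: the paper also isolates $\tilde{\bome}_j^\top\bigl(\hat{\mb \Sigma}(\hat{\ba})-\boldsymbol{\Sigma}\bigr)\tilde{\bome}_j$, controls it through the Lipschitz bound $|||\hat{\mb \Sigma}(\hat{\ba})-\hat{\mb \Sigma}(\ba^*)|||_\infty \leq L\|\hat{\ba}-\ba^*\|_1$ plus the event on $|||\hat{\mb \Sigma}(\ba^*)-\boldsymbol{\Sigma}|||_\infty$, and bounds $(\tilde{\bome}_j-\bome_j^*)^\top\boldsymbol{\Sigma}(\tilde{\bome}_j-\bome_j^*)$ by $|||\boldsymbol{\Sigma}|||_\infty\delta_{\bome}^2$. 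The point at which you stopped, however, is not a missing step on your side but a genuine gap in the paper's own proof. The paper's chain begins with the claim
\begin{align*}
\left|\tilde{\bome}_j^\top \hat{\mb \Sigma}(\hat{\ba})\, \tilde{\bome}_j - {\bome_j^*}^\top \boldsymbol{\Sigma}\, \bome_j^*\right| \;\leq\; \left|\tilde{\bome}_j^\top \bigl(\hat{\mb \Sigma}(\hat{\ba}) - \boldsymbol{\Sigma}\bigr) \tilde{\bome}_j \right| + \left|(\tilde{\bome}_j - \bome_j^*)^\top \boldsymbol{\Sigma}\, (\tilde{\bome}_j - \bome_j^*)\right|,
\end{align*}
whose two sides differ by exactly the cross term $2\,{\bome_j^*}^\top\boldsymbol{\Sigma}\,\boldDelta$ that you isolated: it is silently dropped with no justification. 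As you argue, this term is first order in $\delta_{\bome}$ and cannot be hidden in the stated right-hand side. Concretely, taking $\hat{\ba}=\ba^*$ and $\hat{\mb \Sigma}(\ba^*)=\boldsymbol{\Sigma}$ (so the events hold with $\delta_\alpha=\delta_\sigma=0$) and $\boldDelta=\epsilon\,\mb{v}$ with ${\bome_j^*}^\top\boldsymbol{\Sigma}\mb{v}\neq 0$ makes the left side $\Theta(\epsilon)$ while the claimed bound is $\Theta(\epsilon^2)$, so the lemma as stated is false in general. Your two candidate rescues indeed do not work: the AM--GM split injects a non-vanishing $|||\boldsymbol{\Sigma}|||_\infty\|\bome_j^*\|_1^2$, and no identity $\boldsymbol{\Sigma}\bome_j^*=\boldsymbol{e}_j$ is available because $\bome_j^*$ inverts the Hessian of $\mathcal{L}$ while $\boldsymbol{\Sigma}$ is the gradient covariance; for score matching there is no Fisher-type identity equating the two.

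The honest repair is to carry the cross term into the conclusion, i.e.\ add $2|||\boldsymbol{\Sigma}|||_\infty\|\bome_j^*\|_1\delta_{\bome}$ (or, more sharply, $2\,|{\bome_j^*}^\top\boldsymbol{\Sigma}\boldDelta|$) to the right-hand side, and then propagate it through $\delta_C$ in \cref{thm:main} and \cref{thm:main2}. Under the scalings used there, $\delta_{\bome}\in\mathcal{O}\bigl(\|\bome_j^*\|_0^{3/2}\sqrt{\log k/n}\bigr)$, so the extra contribution to $\delta_C$ is $\mathcal{O}\bigl(\|\bome_j^*\|_1\|\bome_j^*\|_0^{3/2}\sqrt{\log k/n}\bigr)$, which is dominated by $\Delta_2$ whenever $\|\bome_j^*\|_1 = \mathcal{O}\bigl(\|\ba^*\|_0\sqrt{\log k}\bigr)$; hence the paper's asymptotic conclusions survive the correction, but the finite-sample statement of \cref{lemma:var} and its proof do not hold as written. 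In short: your proposal is the paper's argument carried out exactly, and the ``hard part'' you flagged is a real error in the paper rather than a defect of your approach.
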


\begin{proof}
We have by definition of $\hat{\sigma}^2_j$
\begin{align}
    &\hat{\sigma}^2_j - \sigma_j^2 = \tilde{\boldsymbol{\omega}}_j^\top \hat{\mb \Sigma}(\hat{\boldsymbol{\alpha}}) \tilde{\boldsymbol{\omega}}_j - {\boldsymbol{\omega}_j^*}^\top \Sigma \boldsymbol{\omega}_j^* \\
    &\implies \left|\tilde{\boldsymbol{\omega}}_j^\top \hat{\mb \Sigma}(\hat{\boldsymbol{\alpha}}) \tilde{\boldsymbol{\omega}}_j - {\boldsymbol{\omega}_j^*}^\top \Sigma \boldsymbol{\omega}_j^*\right| \\
    &\leq \left|\tilde{\boldsymbol{\omega}}_j^\top \left(\hat{\mb \Sigma}(\hat{\boldsymbol{\alpha}}) - \Sigma\right) \tilde{\boldsymbol{\omega}}_j \right| + \left|(\tilde{\boldsymbol{\omega}}_j - \boldsymbol{\omega}_j^*)^\top \Sigma (\tilde{\boldsymbol{\omega}}_j - \boldsymbol{\omega}_j^*)\right| \\
    &\leq |||\hat{\mb \Sigma}(\hat{\boldsymbol{\alpha}}) - \Sigma|||_\infty \left\|\tilde{\boldsymbol{\omega}}_j\right\|^2_1 + |||\Sigma|||_\infty \left\|\tilde{\boldsymbol{\omega}}_j - \boldsymbol{\omega}_j^*\right\|^2_1 \\
    &\leq \left(|||\hat{\mb \Sigma}(\hat{\boldsymbol{\alpha}}) - \hat{\mb \Sigma}(\boldsymbol{\alpha}^*)|||_\infty + |||\hat{\mb \Sigma}(\boldsymbol{\alpha}^*) - \Sigma|||_\infty \right) \left\|\tilde{\boldsymbol{\omega}}_j\right\|^2_1 + |||\boldsymbol{\Sigma}|||_\infty \left\|\tilde{\boldsymbol{\omega}}_j - \boldsymbol{\omega}_j^*\right\|^2_1 \\
    &\leq \left(L\left\|\hat{\boldsymbol{\alpha}} - \boldsymbol{\alpha}^*\right\|_1 + |||\hat{\mb \Sigma}(\boldsymbol{\alpha}^*) - \boldsymbol{\Sigma}|||_\infty \right) \left\|\tilde{\boldsymbol{\omega}}_j\right\|^2_1 + |||\boldsymbol{\Sigma}|||_\infty \left\|\tilde{\boldsymbol{\omega}}_j - \boldsymbol{\omega}_j^*\right\|^2_1 \\
    &\leq (L\delta_{\boldsymbol{\alpha}} + \delta_\sigma) \left(\left\|\boldsymbol{\omega}_j^*\right\|^2_1 + \delta_{\boldsymbol{\omega}}^2 \right) + |||\boldsymbol{\Sigma}|||_\infty \delta_{\boldsymbol{\omega}}^2
\end{align}

as desired.
\end{proof}

\begin{lemma}\label{lemma:cov}
    There exists constants $c_0,c,c^\prime$ depend only on $M$ such that for any $t\in[c_0\sqrt{\frac{\log k}{n}},1]$ such that
    \begin{equation}
        \mathbb{P}\left(\|\hat{\mb \Sigma}(\boldsymbol{\alpha}^*) - \boldsymbol{\Sigma}\|_{\infty}\geq t \right)\leq c\exp\{-c^\prime t^2 n\}
    \end{equation}
\end{lemma}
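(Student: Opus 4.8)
The plan is to reduce the matrix max-norm deviation $\|\hat{\mb\Sigma}(\boldalpha^*) - \boldsymbol{\Sigma}\|_\infty = \max_{a,b}|\hat{\Sigma}_{ab}(\boldalpha^*) - \Sigma_{ab}|$ to a maximum over the $k^2$ scalar entries, bound each entry by a Hoeffding-type concentration inequality, and then control the resulting union-bound factor $k^2$ using the assumed lower bound $t \ge c_0\sqrt{\log k/n}$. Set $Y_i := \nabla_{\boldalpha} m_{\boldalpha^*}(t_i,\boldx_i) \in \mathbb{R}^k$; these are i.i.d., and by the computation in \cref{pf:probineqconst} we have $\mathbb{E}[Y_1] = \boldzero$ (part (2) there) and $\|Y_i\|_\infty \le M$ almost surely (part (1) there). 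Consequently each product $Y_{ia}Y_{ib}$ lies in $[-M^2,M^2]$ and, since the mean vanishes, $\Sigma_{ab} = \mathbb{E}[Y_{1a}Y_{1b}]$.

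First I would write the entrywise error. With $\bar Y := \tfrac1n\sum_i Y_i$ the empirical mean, the centered empirical covariance entry is $\hat{\Sigma}_{ab}(\boldalpha^*) = \tfrac1n\sum_i Y_{ia}Y_{ib} - \bar Y_a\bar Y_b$, so
\[
\hat{\Sigma}_{ab}(\boldalpha^*) - \Sigma_{ab} = \underbrace{\left(\tfrac1n\textstyle\sum_i Y_{ia}Y_{ib} - \mathbb{E}[Y_{1a}Y_{1b}]\right)}_{(\mathrm{I})} - \underbrace{\bar Y_a\bar Y_b}_{(\mathrm{II})}.
\]
Term $(\mathrm{I})$ is an average of i.i.d. variables bounded in $[-M^2,M^2]$, so Hoeffding's inequality \citep{hoeffding1994probability} gives $\mathbb{P}(|(\mathrm{I})| \ge t/2) \le 2\exp(-c_1 n t^2)$ with $c_1$ depending only on $M$. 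For term $(\mathrm{II})$, each $\bar Y_a$ is a zero-mean average of variables bounded by $M$, hence sub-Gaussian with parameter $M^2/n$; bounding the second factor trivially by $|\bar Y_b| \le M$, it suffices to require $|\bar Y_a| \le t/(2M)$, which fails with probability at most $2\exp(-c_2 n t^2)$ for $c_2$ depending only on $M$. Combining, each entry deviates by at least $t$ with probability at most $4\exp(-c_3 n t^2)$, where $c_3 = \min(c_1,c_2)$.

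Then I would apply a union bound over the $k^2$ entries, obtaining
\[
\mathbb{P}\!\left(\|\hat{\mb\Sigma}(\boldalpha^*) - \boldsymbol{\Sigma}\|_\infty \ge t\right) \le 4k^2\exp(-c_3 n t^2) = 4\exp\!\left(2\log k - c_3 n t^2\right).
\]
The key step — and the only place the hypothesis $t \ge c_0\sqrt{\log k/n}$ is used — is that this lower bound yields $\log k \le n t^2/c_0^2$, so choosing $c_0$ large enough (depending only on $M$ through $c_3$) that $2/c_0^2 \le c_3/2$ absorbs the $2\log k$ term into half the exponent, leaving the claimed bound $4\exp(-\tfrac{c_3}{2} n t^2)$ with $c = 4$ and $c' = c_3/2$. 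The upper restriction $t \le 1$ simply keeps the thresholds in the regime where the squared (sub-Gaussian) tail is the binding one.

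The main obstacle is not any single hard estimate but the bookkeeping required to make \emph{every} constant depend only on $M$ while simultaneously (i) keeping each per-entry tail exponent proportional to $n t^2$ rather than $n t$ — which forces the slightly delicate treatment of the empirical-mean correction term $(\mathrm{II})$ via the trivial bound on one factor — and (ii) selecting $c_0$ large enough to dominate the $\exp(2\log k)$ blow-up from the union bound over all $k^2$ entries. If instead $\mathrm{Cov}_n$ denotes the uncentered second-moment estimator, term $(\mathrm{II})$ vanishes and only the Hoeffding bound on $(\mathrm{I})$ together with the same union-bound argument are needed.
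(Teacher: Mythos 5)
Your proposal is correct and takes essentially the same route as the paper's proof: the same entrywise decomposition into a Hoeffding-controlled average of products plus an empirical-mean correction term, a union bound over the $k^2$ entries, and absorption of the $\log k$ factor using $t \geq c_0\sqrt{\log k / n}$ with $c_0$ chosen large relative to the Hoeffding constant. Your budget-splitting of the deviation as $t/2 + t/2$ (bounding one factor of the correction term trivially by $M$) is a slightly cleaner bookkeeping than the paper's $t + t^2 \leq 2t$ step, but the argument is the same.
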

\begin{proof}
    We have by denoting sample mean as $\hat{\boldsymbol{\mu}}$ and true mean as $\boldsymbol{\mu}$, we have for any $j,j^\prime\in [k]$
    \begin{align}
        &\hat{\mb \Sigma}_{j,j^\prime}(\boldsymbol{\alpha}^*) - \boldsymbol{\Sigma}_{j,j^\prime}= \\
        &\left(\frac{1}{n}\sum_{i=1}^n\left( [\nabla_{\boldsymbol{\alpha}}\boldsymbol{m_{\alpha}}(\boldsymbol{x}_i,t_i)]_j - \boldsymbol{\mu}_j\right)\left( [\nabla_{\boldsymbol{\alpha}}\boldsymbol{m_{\alpha}}(\boldsymbol{x}_i,t_i)]_{j^\prime} - \boldsymbol{\mu}_{j^\prime}\right) - \boldsymbol{\Sigma}_{j,j^\prime}\right) - \left(\boldsymbol{\mu}_j - \hat{\boldsymbol{\mu}}_j\right)\left(\boldsymbol{\mu}_{j^\prime} - \hat{\boldsymbol{\mu}}_{j^\prime}\right)
    \end{align}
Suppose $t$ satisfy the condition stated in lemma, and suppose 
    \begin{align}
        \left| \frac{1}{n}\sum_{i=1}^n \left([\nabla_{\boldsymbol{\alpha}}\boldsymbol{m_{\alpha}}(\boldsymbol{x}_i,t_i)]_j - \boldsymbol{\mu}_j\right)\left( [\nabla_{\boldsymbol{\alpha}}\boldsymbol{m_{\alpha}}(\boldsymbol{x}_i,t_i)]_{j^\prime} - \boldsymbol{\mu}_{j^\prime}\right)- \boldsymbol{\Sigma}_{j,j^\prime}\right| \leq t \text{            } \forall j,j^\prime \\
        \left|\hat{\boldsymbol{\mu}}_j  - \boldsymbol{\mu}_j\right| \leq t \text{            } \forall j
    \end{align}
On this event, 
    \begin{align}
        \|\hat{\mb \Sigma}(\boldsymbol{\alpha}^*) - \boldsymbol{\Sigma}\|_{\infty} &= \max_{j,j^\prime} \left|\hat{\mb \Sigma}_{j,j^\prime}(\boldsymbol{\alpha}^*) - \boldsymbol{\Sigma}_{j,j^\prime}\right|\nonumber\\
        &\leq \max_{j,j^\prime} \left| \frac{1}{n}\sum_{i=1}^n \left([\nabla_{\boldsymbol{\alpha}}\boldsymbol{m_{\alpha}}(\boldsymbol{x}_i,t_i)]_j - \boldsymbol{\mu}_j\right)\left( [\nabla_{\boldsymbol{\alpha}}\boldsymbol{m_{\alpha}}(\boldsymbol{x}_i,t_i)]_{j^\prime} - \boldsymbol{\mu}_{j^\prime}\right) - \boldsymbol{\Sigma}_{j,j^\prime} \right| + \max_{j} \left|\hat{\boldsymbol{\mu}}_j  - \boldsymbol{\mu}_j\right|^2\nonumber\\
        &\leq t+t^2 \leq 2t\nonumber
    \end{align}
By above statement, we have by boundness of random variable $\nabla_{\boldsymbol{\alpha}}\boldsymbol{m_{\alpha}}(\boldsymbol{x},t)$ and Hoffeding's inequality, there exists $c_1,c_2$ depend on $M$ only that 
\begin{align}
    \mathbb{P}\left(\left|\left([\nabla_{\boldsymbol{\alpha}}\boldsymbol{m_{\alpha}}(\boldsymbol{x}_i,t_i)]_j - \boldsymbol{\mu}_j\right)\left( [\nabla_{\boldsymbol{\alpha}}\boldsymbol{m_{\alpha}}(\boldsymbol{x}_i,t_i)]_{j^\prime} - \boldsymbol{\mu}_{j^\prime}\right)-\boldsymbol{\Sigma}_{j,j^\prime}\right| \geq t\right) &\leq 2\exp\{-c_1 t^2 n\}\\
    \mathbb{P}\left(\left|\hat{\boldsymbol{\mu}}_j  - \boldsymbol{\mu}_j\right|\geq t \right) &\leq 2\exp\{-c_2 t^2 n\}
\end{align}
Thus
\begin{equation}
   \mathbb{P}\left(\|\hat{\mb \Sigma}(\boldsymbol{\alpha}^*) - \boldsymbol{\Sigma}\|_{\infty}\geq t \right) \leq 2k\exp\{-c_1 t^2 n\} + 2k^2\exp\{-c_2 t^2 n\} \leq 4k^2 \exp\{-c_3 t^2 n\}\label{eq:88}
\end{equation}
for some $c_3$ depend on $M$ only. Finally we have
\begin{equation}
    \mathbb{P}\left(\|\hat{\mb \Sigma}(\boldsymbol{\alpha}^*) - \boldsymbol{\Sigma}\|_{\infty}\geq t \right)\leq c\exp\{-c^\prime t^2 n\}
\end{equation}
by simplifying equation (\ref{eq:88}) with bound of $t$ via choosing proper $c_0$ satisfying $c_0 > \sqrt{\frac{2}{c_3}}$.
\end{proof}

\subsection{Inverse Hessian Approximation}\label{app:consiinvhes}
\begin{lemma}[Consistency of Inverse Hessian estimator]\label{lemma:invhen}
Let $S_{{\boldsymbol{\omega}},j}$ be the support of $\boldsymbol{\omega}_j^*$, and $s_{{\boldsymbol{\omega}},j}= |S_{{\boldsymbol{\omega}},j}|$, under condition that
\begin{equation}
    \lambda_j \geq 2\|\nabla^2_{\boldsymbol{\alpha}}\hat{\mathcal{L}}_{\boldsymbol{\alpha}}(\boldsymbol{\alpha}^*)\boldsymbol{\omega}^*_j -\boldsymbol{e}_j\|_\infty
\end{equation}
we have
\begin{equation}
    \|\tilde{\boldsymbol{\omega}}_j - \boldsymbol{\omega}^*_j\|_1 \leq \frac{6}{\kappa_j}s_{{\boldsymbol{\omega}},j}\lambda_j
\end{equation}

\end{lemma}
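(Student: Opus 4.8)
The plan is to recognise the defining problem \eqref{loss:invhes} for $\tilde{\bome}_j$ as a lasso problem in disguise and then replay, essentially verbatim, the deterministic argument of \cref{consi}. The crucial structural observation is that $\hat{\mathcal{L}}$ is a \emph{quadratic} function of $\boldalpha$ (see \eqref{lafeq}), so its Hessian $\boldHh := \nabla^2_{\boldalpha}\hat{\mathcal{L}} = \tfrac1n\tilde{\mb F}^\top \mb G \tilde{\mb F}$ is a constant matrix, independent of the point of evaluation. In particular the matrix appearing in \eqref{loss:invhes}, namely $\nabla^2_{\boldalpha}\hat{\mathcal{L}}(\hat{\boldalpha})$, coincides with $\nabla^2_{\boldalpha}\hat{\mathcal{L}}(\boldalpha^*) = \boldHh$, so no error arising from $\hat{\boldalpha}\neq\boldalpha^*$ ever enters. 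Writing $f(\bome) := \tfrac12 \bome^\top \boldHh \bome - \bome^\top \boldsymbol{e}_j$ for the smooth part of the objective, $f$ is itself an exact quadratic with gradient $\nabla f(\bome) = \boldHh\bome - \boldsymbol{e}_j$; evaluating at the target gives $\nabla f(\bome_j^*) = \nabla^2_{\boldalpha}\hat{\mathcal{L}}(\boldalpha^*)\bome_j^* - \boldsymbol{e}_j$, so the hypothesis $\lambda_j \geq 2\|\nabla^2_{\boldalpha}\hat{\mathcal{L}}(\boldalpha^*)\bome_j^* - \boldsymbol{e}_j\|_\infty$ is exactly the usual lasso requirement $\lambda_j \geq 2\|\nabla f(\bome_j^*)\|_\infty$.

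With this dictionary --- $\bome\leftrightarrow\boldalpha$, $\bome_j^*\leftrightarrow\boldalpha^*$, $f\leftrightarrow\hat{\mathcal{L}}$, $S_{\bome,j}\leftrightarrow S$, $\kappa_j\leftrightarrow\kappa$, $\lambda_j\leftrightarrow\lambda_{\mathrm{lasso}}$ --- I would run the same three moves as in \cref{consi}. First, from the optimality of $\tilde{\bome}_j$ I would write the basic inequality $f(\tilde{\bome}_j) + \lambda_j\|\tilde{\bome}_j\|_1 \leq f(\bome_j^*) + \lambda_j\|\bome_j^*\|_1$ and expand the left-hand side using the exact second-order identity $f(\tilde{\bome}_j) - f(\bome_j^*) = \nabla f(\bome_j^*)^\top\hat{\Delta} + \tfrac12\hat{\Delta}^\top\boldHh\hat{\Delta}$, where $\hat{\Delta} := \tilde{\bome}_j - \bome_j^*$ and there is no Taylor remainder because $f$ is quadratic. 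Bounding $|\nabla f(\bome_j^*)^\top\hat{\Delta}| \leq \|\nabla f(\bome_j^*)\|_\infty\|\hat{\Delta}\|_1 \leq \tfrac{\lambda_j}{2}\|\hat{\Delta}\|_1$ by H\"older and the $\lambda_j$ condition, and decomposing $\|\bome_j^*\|_1 - \|\tilde{\bome}_j\|_1 \leq \|\hat{\Delta}_{S_{\bome,j}}\|_1 - \|\hat{\Delta}_{S_{\bome,j}^c}\|_1$ (since $\bome_j^*$ is supported on $S_{\bome,j}$), the nonnegativity of $\hat{\Delta}^\top\boldHh\hat{\Delta}$ yields $\|\hat{\Delta}_{S_{\bome,j}^c}\|_1 \leq 3\|\hat{\Delta}_{S_{\bome,j}}\|_1$, i.e. $\hat{\Delta}\in\mathbb{C}_3(S_{\bome,j})$.

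Second, I would invoke the curvature bound. The one point to verify is that the quadratic form in $f$ is the \emph{same} design form as in \cref{consi}: $\hat{\Delta}^\top\boldHh\hat{\Delta} = \tfrac1n\|\mb G^{1/2}\tilde{\mb F}\hat{\Delta}\|_2^2$. Hence the RE condition of \cref{REomega} --- which is stated for $\mb G^{1/2}\tilde{\mb F}$ over the cone $\mathbb{C}_3(S_{\bome,j})$ with constant $\kappa_j$ --- applies directly and gives $\hat{\Delta}^\top\boldHh\hat{\Delta} \geq \kappa_j\|\hat{\Delta}\|_2^2$. Combining the basic inequality with this lower bound, and then with the elementary $\ell_1$--$\ell_2$ estimate $\|\hat{\Delta}\|_1 \leq 2\sqrt{s_{\bome,j}}\,\|\hat{\Delta}\|_2$ on the cone (the analogue of the second lemma in \cref{consi}), produces first $\|\hat{\Delta}\|_2 \leq \tfrac{3}{\kappa_j}\sqrt{s_{\bome,j}}\,\lambda_j$ and then the claimed $\|\hat{\Delta}\|_1 \leq \tfrac{6}{\kappa_j} s_{\bome,j}\lambda_j$; the constant-chasing here is routine.

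I do not expect a genuine obstacle: the argument is deterministic and purely mirrors \cref{consi}. The only point worth flagging is that the entire reduction hinges on $\hat{\mathcal{L}}$ being quadratic, which simultaneously (i) makes $\nabla^2_{\boldalpha}\hat{\mathcal{L}}(\hat{\boldalpha}) = \nabla^2_{\boldalpha}\hat{\mathcal{L}}(\boldalpha^*)$, so the stated $\lambda_j$ condition aligns exactly with $\nabla f(\bome_j^*)$, and (ii) removes the Taylor remainder so the second-order expansion is exact. Were $\hat{\mathcal{L}}$ non-quadratic, one would instead need a perturbation bound controlling $\nabla^2_{\boldalpha}\hat{\mathcal{L}}(\hat{\boldalpha}) - \nabla^2_{\boldalpha}\hat{\mathcal{L}}(\boldalpha^*)$ in a sparse-eigenvalue norm, together with a restricted strong convexity argument, and that is where the real difficulty would lie.
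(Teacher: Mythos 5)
Your proposal is correct and follows essentially the same route as the paper, which simply states that the proof of \cref{lemma:invhen} is the proof of \cref{consi} with $\nabla_{\boldsymbol{\alpha}}\hat{\mathcal{L}}(\boldsymbol{\alpha}^*)$ replaced by $\nabla^2_{\boldsymbol{\alpha}}\hat{\mathcal{L}}(\boldsymbol{\alpha}^*)\bome^*_j - \boldsymbol{e}_j$ and $S$ replaced by $S_{\bome,j}$; you replay exactly that substitution argument (basic inequality, cone membership, RE condition under \cref{REomega}, then the $\ell_1$--$\ell_2$ bound). Your added observation that the quadratic form of $\hat{\mathcal{L}}$ makes $\nabla^2_{\boldsymbol{\alpha}}\hat{\mathcal{L}}(\hat{\boldsymbol{\alpha}}) = \nabla^2_{\boldsymbol{\alpha}}\hat{\mathcal{L}}(\boldsymbol{\alpha}^*)$ --- reconciling the Hessian at $\hat{\boldsymbol{\alpha}}$ in \eqref{loss:invhes} with the condition stated at $\boldsymbol{\alpha}^*$ --- is a detail the paper leaves implicit, and is worth making explicit.
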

\begin{proof}
    The proof is exactly the same as proof of \cref{consi}, the only difference is we replace $\nabla_{\boldsymbol{\alpha}}\hat{\mathcal{L}}(\boldsymbol{\alpha})$ with $\nabla_{\boldsymbol{\alpha}}\hat{\mathcal{L}}_{\boldsymbol{\alpha}}(\boldsymbol{\alpha}^*)\boldsymbol{\omega}^*_j -\boldsymbol{e}_j$ and $S$ with $S_{{\boldsymbol{\omega}},j}$. So we omit the proof.
\end{proof}

\begin{lemma}
There exists constants $c_0, c,c^\prime$ depend only on $M\|\boldsymbol{\omega}_j^*\|+1$ such that for any $t\in [c_0\sqrt{\frac{\log k}{n}},1]$, we have
\begin{equation}
      \mathbb{P}\left(\|\nabla^2_{\boldsymbol{\alpha}}\hat{\mathcal{L}}_{\boldsymbol{\alpha}}(\boldsymbol{\alpha}^*)\boldsymbol{\omega}^*_j -\boldsymbol{e}_j\|_{\infty}\geq t \right) \leq c\exp\{-c^\prime t^2 n \}
\end{equation}
\label{lemma:F5}
\end{lemma}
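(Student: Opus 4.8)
The plan is to exploit the defining identity of the population inverse-Hessian column, namely $\nabla^2_{\boldsymbol{\alpha}}\mathcal{L}(\boldsymbol{\alpha}^*)\,\boldsymbol{\omega}^*_j = \boldsymbol{e}_j$ (stated in the main text), to turn the quantity of interest into a centered empirical-process deviation. Writing $\widehat{\boldsymbol{H}} := \nabla^2_{\boldsymbol{\alpha}}\hat{\mathcal{L}}(\boldsymbol{\alpha}^*)$ and $\boldsymbol{H} := \nabla^2_{\boldsymbol{\alpha}}\mathcal{L}(\boldsymbol{\alpha}^*)$, the identity gives $\boldsymbol{e}_j = \boldsymbol{H}\boldsymbol{\omega}^*_j$, so that
\begin{equation}
    \widehat{\boldsymbol{H}}\boldsymbol{\omega}^*_j - \boldsymbol{e}_j = (\widehat{\boldsymbol{H}} - \boldsymbol{H})\boldsymbol{\omega}^*_j.
\end{equation}
Because the empirical objective is quadratic, $\widehat{\boldsymbol{H}} = \frac{1}{n}\sum_{i=1}^n \boldsymbol{H}^{(i)}$ is an exact sample average of the per-sample Hessians $\boldsymbol{H}^{(i)} = 2g(t_i)\,\tilde{\boldsymbol{f}}_i\tilde{\boldsymbol{f}}_i^\top$ with $\tilde{\boldsymbol{f}}_i = \boldsymbol{f}(\boldsymbol{x}_i) - \hat{\E}_{q_{t_i}}[\boldsymbol{f}]$, and (modulo the centering issue below) $\E[\widehat{\boldsymbol{H}}] = \boldsymbol{H}$. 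Hence each coordinate of the target is a centered mean of i.i.d.\ scalars,
\begin{equation}
    [\widehat{\boldsymbol{H}}\boldsymbol{\omega}^*_j - \boldsymbol{e}_j]_l = \frac{1}{n}\sum_{i=1}^n \big(Y^{(l)}_i - \E[Y^{(l)}_i]\big),\qquad Y^{(l)}_i := 2g(t_i)\,\tilde{f}_{i,l}\,\langle \tilde{\boldsymbol{f}}_i, \boldsymbol{\omega}^*_j\rangle,
\end{equation}
which places the problem in exactly the framework already used for \cref{lemma:cov}.

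The second step is a boundedness estimate for $Y^{(l)}_i$. Using \cref{ass:boundedsufficient}, the centered statistics satisfy $\|\tilde{\boldsymbol{f}}_i\|_\infty \le 2C_f$, while $g$ is bounded on $[0,1]$; Hölder's inequality then gives $|\langle\tilde{\boldsymbol{f}}_i, \boldsymbol{\omega}^*_j\rangle| \le \|\tilde{\boldsymbol{f}}_i\|_\infty\|\boldsymbol{\omega}^*_j\|_1$, so that $|Y^{(l)}_i|$ is bounded by a constant of order $M\|\boldsymbol{\omega}^*_j\| + 1$, matching the statement. Each $Y^{(l)}_i$ is therefore a bounded, hence sub-Gaussian, random variable, and Hoeffding's inequality \citep{hoeffding1994probability} yields, for each fixed coordinate $l \in [k]$,
\begin{equation}
    \mathbb{P}\!\left(\big|[\widehat{\boldsymbol{H}}\boldsymbol{\omega}^*_j - \boldsymbol{e}_j]_l\big| \ge t\right) \le 2\exp\{-\tilde{c}\,t^2 n\},
\end{equation}
with $\tilde{c}$ depending only on $M\|\boldsymbol{\omega}^*_j\| + 1$.

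Finally I would take a union bound over the $k$ coordinates to control the $\ell_\infty$ norm, producing a prefactor $2k$, and then absorb it exactly as in \cref{lemma:cov}: for $t \ge c_0\sqrt{\log k/n}$ the exponent $\tilde{c}\,t^2 n \ge \tilde{c}\,c_0^2\log k$ dominates the $\log(2k)$ from the prefactor once $c_0$ is chosen large enough (e.g.\ $c_0 > \sqrt{2/\tilde{c}}$), collapsing $2k\exp\{-\tilde{c}\,t^2 n\}$ into the claimed form $c\exp\{-c'\,t^2 n\}$. The main obstacle is the \emph{plug-in} centering $\hat{\E}_{q_{t_i}}[\boldsymbol{f}]$ appearing inside $\tilde{\boldsymbol{f}}_i$: with the Nadaraya--Watson estimator these terms are not strictly independent and $\E[\widehat{\boldsymbol{H}}]$ equals $\boldsymbol{H}$ only up to the conditional-expectation approximation error. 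As in the companion lemmas, I would handle this either by working in the setting where the conditional expectation is an exact within-$t$ sample average (dataset $\mathcal{D}_2$), or by arguing that the induced bias is of lower order than $\sqrt{\log k/n}$ and hence does not affect the stated rate.
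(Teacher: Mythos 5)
Your proposal is correct and follows essentially the same route as the paper's own (much terser) proof: both exploit the defining identity $\nabla^2_{\boldsymbol{\alpha}}\mathcal{L}(\boldsymbol{\alpha}^*)\boldsymbol{\omega}^*_j = \boldsymbol{e}_j$ to center the quantity, use the boundedness of $\boldsymbol{f}$ and $g$ (scaled by $\|\boldsymbol{\omega}^*_j\|_1$) to get sub-Gaussian coordinates, apply Hoeffding's inequality, and absorb the $\log k$ factor from the maximum over coordinates via the restriction $t \geq c_0\sqrt{\log k/n}$, exactly as in the companion covariance lemma. Your explicit union-bound bookkeeping and your flagging of the plug-in (Nadaraya--Watson) centering issue are both points the paper glosses over, but they do not change the substance of the argument.
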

\begin{proof}
    The result follows from fact that the random variable $\nabla^2_{\boldsymbol{\alpha}}\hat{\mathcal{L}}_{\boldsymbol{\alpha}}(\boldsymbol{\alpha}^*)\boldsymbol{\omega}^*_j -\boldsymbol{e}_j$ follows sub-Gaussian distribution by definition of $\bome^*$. Therefore there exists $c_0$ such that
    \begin{equation}\label{eq:11}
        \mathbb{E}[\|\nabla^2_{\boldsymbol{\alpha}}\hat{\mathcal{L}}_{\boldsymbol{\alpha}}(\boldsymbol{\alpha}^*)\boldsymbol{\omega}^*_j -\boldsymbol{e}_j\|_{\infty}] \leq c_0\sqrt{\frac{\log k}{n}}
    \end{equation}
    The proof of equation (\ref{eq:11}) is the same as \cref{lemma:Jexp}. Finally we apply Hoffeding's inequality and obtain probabilistic inequality as desired.
\end{proof}

\subsection{Gaussian Approximation Bound}\label{pf:main}
This subsection presents the proof of Gaussian Approximation Bound(GAB). Lemma \ref{lemma:A} and \ref{lemma:combo} are useful lemmas for the proof. Theorem \ref{thm:remain} talks about GAB.
\begin{lemma}\label{lemma:A}
    For $\bome \in \mathbb{R}^{k}$, let
    \begin{equation}
        A_n(\bome) = \langle \bome , \nabla_{\boldsymbol{\alpha}}\hat{\mathcal{L}}(\boldsymbol{\alpha}^*)\rangle
    \end{equation}
    and
    \begin{equation}
        \sigma^2_n = \sigma_n^2(\bome) = Var[\sqrt{n}A_n(\bome)]
    \end{equation}
    Then
    \begin{equation}
        \sup_{z\in\mathbb{R}}|\mathbb{P}\{\sqrt{n}A_n(\bome)/\sigma_n \leq z\} - \Phi(z)| \leq \frac{2CM||\bome||}{\sqrt{n}\sigma_n}
    \end{equation}
    where $C = 3.3$ is a known constant.
\end{lemma}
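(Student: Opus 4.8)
The plan is to recognize the quantity on the left as a standard Berry--Esseen deviation for a normalized sum of i.i.d., mean-zero, almost-surely bounded scalars, and then to control the third absolute moment that Berry--Esseen requires by the variance, using the bounded-sufficient-statistics assumption.

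First I would unfold the empirical gradient. Since $\nabla_{\boldsymbol{\alpha}}\hat{\mathcal{L}}(\boldsymbol{\alpha}^*) = \frac{1}{n}\sum_{i=1}^{n}\nabla_{\boldsymbol{\alpha}} m_{\boldsymbol{\alpha}^*}(\boldsymbol{x}_i, t_i)$, setting $W_i := \langle \bome, \nabla_{\boldsymbol{\alpha}} m_{\boldsymbol{\alpha}^*}(\boldsymbol{x}_i, t_i)\rangle$ gives $A_n(\bome) = \frac{1}{n}\sum_{i=1}^{n} W_i$, where the $W_i$ are i.i.d.\ because the pairs $(\boldsymbol{x}_i, t_i)$ are i.i.d.\ draws from $q_t(\boldsymbol{x})\times\mathrm{Uniform}(t; 0,1)$. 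The zero-mean computation already carried out inside the proof of \cref{probineqconst} gives $\E[\nabla_{\boldsymbol{\alpha}}\hat{\mathcal{L}}(\boldsymbol{\alpha}^*)] = \boldsymbol{0}$, hence $\E[W_i] = 0$; and by definition $\sigma_n^2 = \mathrm{Var}[\sqrt{n} A_n(\bome)] = \mathrm{Var}[W_1] = \bome^\top \boldsymbol{\Sigma}(\boldsymbol{\alpha}^*)\bome$. Thus $\sqrt{n}A_n(\bome)/\sigma_n = \big(\sum_{i=1}^n W_i\big)/(\sqrt{n}\,\sigma_n)$ is exactly the standardized sum of $n$ i.i.d.\ centered variables.

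Next I would invoke the Berry--Esseen inequality of \citep{chen2010normal}. Writing $\xi_i := W_i/(\sqrt{n}\,\sigma_n)$, the $\xi_i$ are independent, mean zero, and satisfy $\sum_{i=1}^n \E[\xi_i^2] = 1$, so the theorem yields
\[
\sup_{z\in\mathbb{R}}\left|\mathbb{P}\left(\sum_{i} \xi_i \le z\right) - \Phi(z)\right| \le 2C\sum_{i=1}^{n}\E|\xi_i|^3 = \frac{2C\,\E|W_1|^3}{\sigma_n^3\sqrt{n}},
\]
with $C = 3.3$ the numerical constant quoted there. It then remains to estimate the third absolute moment. By \cref{ass:boundedsufficient} and the explicit $\ell_\infty$ bound $\|\nabla_{\boldsymbol{\alpha}} m_{\boldsymbol{\alpha}^*}(\boldsymbol{x}, t)\|_\infty \le M$ established in the proof of \cref{probineqconst}, Hölder's inequality gives $|W_1| \le \|\bome\|\,\|\nabla_{\boldsymbol{\alpha}} m_{\boldsymbol{\alpha}^*}(\boldsymbol{x}, t)\|_\infty \le M\|\bome\|$ almost surely, where $\|\bome\|$ is understood as the $\ell_1$ norm dual to $\ell_\infty$. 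Hence $\E|W_1|^3 \le \|W_1\|_\infty\,\E[W_1^2] \le M\|\bome\|\,\sigma_n^2$, and substituting into the display gives the claimed $2CM\|\bome\|/(\sqrt{n}\,\sigma_n)$.

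The reduction to a standardized i.i.d.\ sum makes the bulk of the argument routine; the only step needing care is the third-moment estimate, where I would convert the almost-sure bound on $|W_1|$ into a bound on $\E|W_1|^3$ via $\E|W_1|^3 \le \|W_1\|_\infty\,\E[W_1^2]$ and keep track of the resulting constant so that it lands on the $2C$ with $C=3.3$ quoted from \citep{chen2010normal}. A secondary point is that the statement tacitly assumes $\sigma_n > 0$, so that division by $\sigma_n$ is legitimate; this holds whenever $\boldsymbol{\Sigma}(\boldsymbol{\alpha}^*)$ is non-degenerate in the direction $\bome$.
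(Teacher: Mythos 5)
Your proof is correct and takes essentially the same route as the paper's: both write $\sqrt{n}A_n(\bome)/\sigma_n$ as a standardized sum of i.i.d., mean-zero, almost-surely bounded terms $W_i = \langle \bome, \nabla_{\boldsymbol{\alpha}} m_{\boldsymbol{\alpha}^*}(\boldsymbol{x}_i,t_i)\rangle$ (zero mean and the bound $|W_i|\le M\|\bome\|_1$ both inherited from the proof of \cref{probineqconst}) and then apply the Berry--Esseen inequality of \citet{chen2010normal}. The only cosmetic difference is that the paper invokes the bounded-summand form of that inequality directly (a bound of $C\delta$ with $C=3.3$ when each standardized summand is at most $\delta$), whereas you pass through the third-absolute-moment form and reduce via $\E|W_1|^3 \le \|W_1\|_\infty\,\E[W_1^2]$ --- a one-line equivalent reduction, and since the known third-moment Berry--Esseen constants are far below $6.6$, your constant bookkeeping lands safely on the stated $2CM\|\bome\|/(\sqrt{n}\sigma_n)$.
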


\begin{proof}
    We have 
\begin{equation}
    \frac{\sqrt{n} A_n(\bome)}{\sigma_n} = \frac{1}{\sqrt{n}} \left\{ \sum_{i=1}^n \frac{\langle \bome, \nabla_{\boldsymbol{\alpha}} \boldsymbol{m_{\alpha^*}}(t_i,\boldsymbol{x}_i) \rangle}{\sigma_n} \right\}
\end{equation}
    and 
\begin{equation}
    \langle \bome, \nabla_{\boldsymbol{\alpha}} \boldsymbol{m_{\alpha^*}}(t_i,\boldsymbol{x}_i)  \rangle \leq \frac{2M \|\bome\|}{\sigma_n}
\end{equation}
for all $i$. Finally the Berry-Esseen inequality (Theorem 3.4 of \citep{chen2010normal}) yields
\begin{equation}
        \sup_{z\in\mathbb{R}}|\mathbb{P}\{\sqrt{n}A_n(\bome)/\sigma_n \leq z\} - \Phi(z)| \leq \frac{2CM||\bome||}{\sqrt{n}\sigma_n}
\end{equation}
where $C = 3.3$ is a known constant.
\end{proof}

\begin{lemma}(Lemma D.3 of \citep{barber2018rocket})\label{lemma:combo}
    If 
    \begin{equation}
        \sup_{z\in\mathbb{R}}|\mathbb{P}\{A\leq z\} -\Phi(z) | \leq \epsilon_A \text{ and } \mathbb{P}\{|B|\leq \delta_B, |C|\leq \delta_C \} \geq 1-\epsilon_{BC}
    \end{equation}
    for some $\epsilon_A,\epsilon_{BC},\delta_B,\delta_C \in [0,1)$, then 
    \begin{equation}
        \sup_{z\in\mathbb{R}}|\mathbb{P}\{(A+B)/(1+C)\leq z\}-\Phi(z) |\leq \delta_B + \frac{\delta_C}{1-\delta_C} + \epsilon_A + \epsilon_{BC}
    \end{equation}
\end{lemma}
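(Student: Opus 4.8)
The plan is to prove this purely analytically, treating $A$, $B$, $C$ as abstract random variables and exploiting three ingredients: the Kolmogorov closeness of $A$ to $\Phi$, the high-probability boundedness of $B$ and $C$, and a uniform bound on how far the standard normal CDF can move under the affine-plus-multiplicative perturbation that $B$ and $C$ induce. Write $W = (A+B)/(1+C)$ and let $\mathcal{G} = \{|B|\le\delta_B,\ |C|\le\delta_C\}$, so that $\mathbb{P}(\mathcal{G}^c)\le\epsilon_{BC}$. Since $\delta_C<1$, on $\mathcal{G}$ we have $1+C\ge 1-\delta_C>0$, so the event $\{W\le z\}$ can be rewritten, without changing the inequality direction, as $\{A\le z(1+C)-B\}$.

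First I would reduce to the good event. For the upper tail, $\mathbb{P}\{W\le z\}\le \mathbb{P}(\{W\le z\}\cap\mathcal{G})+\epsilon_{BC}$, and on $\mathcal{G}$ the sandwich $z(1+C)-B\le z+|z|\delta_C+\delta_B$ yields the inclusion $\{W\le z\}\cap\mathcal{G}\subseteq\{A\le z+|z|\delta_C+\delta_B\}$. Applying the hypothesis $\mathbb{P}\{A\le u\}\le\Phi(u)+\epsilon_A$ at $u=z+|z|\delta_C+\delta_B$ then gives $\mathbb{P}\{W\le z\}\le \Phi(z+|z|\delta_C+\delta_B)+\epsilon_A+\epsilon_{BC}$. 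The lower tail is symmetric: using $z(1+C)-B\ge z-|z|\delta_C-\delta_B$ on $\mathcal{G}$ and $\mathbb{P}\{A\le u\}\ge\Phi(u)-\epsilon_A$ gives $\mathbb{P}\{W\le z\}\ge\Phi(z-|z|\delta_C-\delta_B)-\epsilon_A-\epsilon_{BC}$.

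The crux is then a deterministic normal-perturbation estimate: for every $z$,
$$\Phi(z+|z|\delta_C+\delta_B)-\Phi(z)\le \delta_B+\frac{\delta_C}{1-\delta_C},$$
together with the same bound for $\Phi(z)-\Phi(z-|z|\delta_C-\delta_B)$. I would split each into an additive ($\delta_B$) and a multiplicative ($|z|\delta_C$) part. Letting $\varphi$ denote the standard normal density, the additive shift contributes at most $\delta_B\sup\varphi=\delta_B/\sqrt{2\pi}\le\delta_B$. For the multiplicative part I would treat $z\ge0$ and $z<0$ separately. When $z\ge0$ the perturbed point is $z(1+\delta_C)$, and monotonicity of $\varphi$ on $[0,\infty)$ gives $\Phi(z(1+\delta_C))-\Phi(z)\le z\delta_C\,\varphi(z)\le \delta_C\sup_{w\ge0}w\varphi(w)=\delta_C/\sqrt{2\pi e}\le\delta_C/(1-\delta_C)$. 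When $z<0$ the perturbed point is $z(1-\delta_C)$; after the change of variables $w=|z|(1-\delta_C)$ the relevant integral is bounded by $\tfrac{\delta_C}{1-\delta_C}\,w\varphi(w)\le\tfrac{\delta_C}{1-\delta_C}\cdot\tfrac{1}{\sqrt{2\pi e}}$, which is exactly where the factor $1/(1-\delta_C)$ enters.

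Combining the two tails gives $\sup_z|\mathbb{P}\{W\le z\}-\Phi(z)|\le\delta_B+\frac{\delta_C}{1-\delta_C}+\epsilon_A+\epsilon_{BC}$, as claimed. I expect the one genuinely delicate step to be the multiplicative perturbation bound in the negative-$z$ regime: naively integrating $\varphi$ over an interval of length $|z|\delta_C$ produces an apparently unbounded $|z|\delta_C$, and it is only the Gaussian decay (boundedness of $w\varphi(w)$) combined with the correct endpoint of the interval and change of variables that collapses it into the clean, $z$-independent constant $\tfrac{\delta_C}{1-\delta_C}$; identifying that endpoint, and hence the $1/(1-\delta_C)$ factor, is the easy-to-slip detail.
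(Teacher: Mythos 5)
Your proof is correct. Note that the paper itself does not prove this lemma at all --- it imports it verbatim as Lemma D.3 of \citet{barber2018rocket} --- so there is no internal proof to compare against; your argument is in effect a self-contained reconstruction of the cited result, and it follows the standard route. The key steps all check out: on $\mathcal{G}$ you have $1+C \ge 1-\delta_C > 0$, so $\{(A+B)/(1+C)\le z\}=\{A\le z(1+C)-B\}$ with the inequality direction preserved; the two-sided sandwich $z-|z|\delta_C-\delta_B \le z(1+C)-B \le z+|z|\delta_C+\delta_B$ together with the $\epsilon_{BC}$ correction for $\mathcal{G}^c$ reduces everything to a deterministic perturbation bound for $\Phi$; and your treatment of the multiplicative part is exactly where the $1/(1-\delta_C)$ factor must come from --- bounding $|z|\delta_C\,\varphi(z(1-\delta_C)) = \frac{\delta_C}{1-\delta_C}\, w\varphi(w)$ with $w=|z|(1-\delta_C)$ and $\sup_{w\ge0} w\varphi(w)=1/\sqrt{2\pi e}\le 1$, which yields a $z$-uniform constant. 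Two small points you should make explicit in a full write-up: (i) in the lower tail the correct mechanics are $\{A\le z-|z|\delta_C-\delta_B\}\cap\mathcal{G}\subseteq\{W\le z\}$, hence $\mathbb{P}\{W\le z\}\ge \mathbb{P}\{A\le z-|z|\delta_C-\delta_B\}-\epsilon_{BC}\ge \Phi(z-|z|\delta_C-\delta_B)-\epsilon_A-\epsilon_{BC}$, i.e.\ the $\epsilon_{BC}$ loss enters by subtraction after intersecting with $\mathcal{G}$, which your ``symmetric'' remark glosses over; (ii) the lower-tail perturbation bound $\Phi(z)-\Phi(z-|z|\delta_C-\delta_B)$ also requires the $1/(1-\delta_C)$ device, now in the regime $z\ge 0$ where the inner endpoint is $z(1-\delta_C)$, mirroring your negative-$z$ analysis of the upper tail (the $z<0$ case of the lower tail is the easy one, bounded by $\delta_C/\sqrt{2\pi e}$). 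Both are routine given what you wrote, so the proposal is sound; what the paper buys by citing is brevity, and what your argument buys is a fully elementary, verifiable proof with the delicate endpoint correctly identified.
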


\begin{theorem}{(Restatement of \cref{thm:main})}\label{thm:remain}
    For $\delta_{\boldsymbol{\alpha}}, \delta_{\boldsymbol{\omega}}, \lambda_{\mathrm{lasso}}, \lambda_j, \delta_{\sigma} \in [0,1)$, define the event
    \begin{equation}
        \boldsymbol{\mathcal{E}} =
        \left\{
        \begin{array}{ll}
        2\left\|\nabla_{\boldsymbol{\alpha}}\hat{\mathcal{L}}(\boldsymbol{\alpha}^*) \right\|_\infty \leq \lambda_{\mathrm{lasso}}, &
        2\left\|\nabla^2_{\boldsymbol{\alpha}}\hat{\mathcal{L}}(\boldsymbol{\alpha}^*)  \boldsymbol{\omega}^*_j - \boldsymbol{e}_j\right\|_\infty \leq \lambda_j,  
        \\
        \left\|\hat{\boldsymbol{\alpha}} - \boldsymbol{\alpha}^* \right\|_1 \leq \delta_{\boldsymbol{\alpha}}, &
        \left\|\tilde{\boldsymbol{\omega}}_j - \boldsymbol{\omega}_j^* \right\|_1 \leq \delta_{\boldsymbol{\omega}},
        \\
        ||\hat{\mb \Sigma}(\boldsymbol{\alpha}^*) - \boldsymbol{\Sigma}||_\infty \leq \delta_\sigma / 2
        \end{array}
        \right\}
    \end{equation}
    Suppose $\mathbb{P}(\boldsymbol{\mathcal{E}}) \geq 1 - \epsilon$. We have 
    \begin{equation}
        \sup_{z \in \mathbb{R}} \left|\mathbb{P}\left\{\frac{\sqrt{n}(\tilde{\alpha}_j - \alpha_j^*)}{\hat{\sigma}_j} \leq z \right\} - \Phi(z)\right| \leq \Delta_1 + \Delta_2 + \Delta_3 + \epsilon
    \end{equation}
\end{theorem}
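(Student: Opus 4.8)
The plan is to cast the standardized debiased statistic into the form $(A+B)/(1+C)$ required by the combination Lemma~\ref{lemma:combo}, where $A$ is an asymptotically normal leading term handled by Berry--Esseen (Lemma~\ref{lemma:A}), $B$ collects a numerator remainder, and $C$ is a variance-ratio fluctuation, with $B$ and $C$ controlled on the event $\boldsymbol{\mathcal{E}}$. The starting point is that $\hat{\mathcal{L}}$ is quadratic in $\boldalpha$, so its Hessian $\boldHh := \nabla^2_{\boldalpha}\hat{\mathcal{L}}$ is constant and the gradient Taylor expansion about $\boldalpha^*$ is \emph{exact}: $\nabla_{\boldalpha}\hat{\mathcal{L}}(\hat\boldalpha) = \nabla_{\boldalpha}\hat{\mathcal{L}}(\boldalpha^*) + \boldHh\,\widehat{\Delta}$ with $\widehat{\Delta} := \hat\boldalpha - \boldalpha^*$. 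Substituting into the debiasing formula~\eqref{eq:debiaslasso}, using $\widehat{\Delta}_j = \boldsymbol{e}_j^\top\widehat{\Delta}$ and symmetry of $\boldHh$, yields the exact decomposition
\[
\tilde\alpha_j - \alpha_j^* = -\,\bome_j^{*\top}\nabla_{\boldalpha}\hat{\mathcal{L}}(\boldalpha^*) \;-\;(\tilde\bome_j - \bome_j^*)^\top\nabla_{\boldalpha}\hat{\mathcal{L}}(\boldalpha^*)\;+\;(\boldsymbol{e}_j - \boldHh\tilde\bome_j)^\top\widehat{\Delta},
\]
whose first term is the leading term and whose last two terms form the remainder $R$.

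For the leading term, $-\sqrt{n}\,\bome_j^{*\top}\nabla_{\boldalpha}\hat{\mathcal{L}}(\boldalpha^*)$ is a scaled average of the i.i.d.\ quantities $\langle\bome_j^*,\nabla_{\boldalpha}m_{\boldalpha^*}(t_i,\boldx_i)\rangle$; these are mean zero (by the zero-mean computation in the proof of Theorem~\ref{probineqconst}), bounded by $M\|\bome_j^*\|_1$, and have variance $\sigma_j^2 = \bome_j^{*\top}\boldSigma(\boldalpha^*)\bome_j^*$. Hence Lemma~\ref{lemma:A} and symmetry of $\Phi$ give $\sup_z|\mathbb{P}\{A\le z\}-\Phi(z)|\le\Delta_1$ for $A := -\sqrt{n}\,\bome_j^{*\top}\nabla_{\boldalpha}\hat{\mathcal{L}}(\boldalpha^*)/\sigma_j$. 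For the remainder I set $B := \sqrt{n}R/\sigma_j$ and bound it on $\boldsymbol{\mathcal{E}}$: by H\"older and the events $\|\nabla_{\boldalpha}\hat{\mathcal{L}}(\boldalpha^*)\|_\infty\le\lambda_{\mathrm{lasso}}/2$, $\|\tilde\bome_j-\bome_j^*\|_1\le\delta_w$, the first piece is $\lesssim\sqrt{n}\,\delta_w\lambda_{\mathrm{lasso}}$; for the second piece I use $\|\boldsymbol{e}_j-\boldHh\tilde\bome_j\|_\infty\le\|\boldsymbol{e}_j-\boldHh\bome_j^*\|_\infty+\|\boldHh(\bome_j^*-\tilde\bome_j)\|_\infty\le\lambda_j/2+K\delta_w$ (invoking the event bound on $\boldHh\bome_j^*-\boldsymbol{e}_j$ from the optimality of~\eqref{loss:invhes} and $|||\boldHh|||_\infty\le K$), then multiply by $\|\widehat{\Delta}\|_1\le\delta_\alpha$, producing $\lesssim\sqrt{n}(\lambda_j\delta_\alpha + K\delta_\alpha\delta_w)$. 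Summing gives $|B|\le\Delta_2 =: \delta_B$.

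For the denominator I write $1+C := \hat\sigma_j/\sigma_j$, so that $\sqrt{n}(\tilde\alpha_j-\alpha_j^*)/\hat\sigma_j = (A+B)/(1+C)$ exactly. Since $\hat\sigma_j/\sigma_j>0$ we have $|C|=|\hat\sigma_j/\sigma_j-1|\le|\hat\sigma_j^2/\sigma_j^2-1| = |\hat\sigma_j^2-\sigma_j^2|/\sigma_j^2$, and Lemma~\ref{lemma:var} bounds the numerator on $\boldsymbol{\mathcal{E}}$, giving $|C|\le\delta_C$ with $\delta_C/(1-\delta_C)=\Delta_3$. On $\boldsymbol{\mathcal{E}}$ both $|B|\le\delta_B$ and $|C|\le\delta_C$ hold simultaneously, and $\mathbb{P}(\boldsymbol{\mathcal{E}})\ge1-\epsilon$ gives $\epsilon_{BC}=\epsilon$; applying Lemma~\ref{lemma:combo} with $\epsilon_A=\Delta_1$ then yields the claimed bound $\Delta_1+\Delta_2+\Delta_3+\epsilon$.

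The delicate part is the remainder analysis together with the bookkeeping that keeps numerator errors in $B$ and variance errors in $C$. The key is that the leading term must involve the \emph{deterministic} $\bome_j^*$, so its variance is exactly $\sigma_j^2$ and the i.i.d./Berry--Esseen machinery applies without randomness in the weights, while the estimated $\tilde\bome_j$ enters only through remainder pieces; bounding $\|\boldsymbol{e}_j-\boldHh\tilde\bome_j\|_\infty$ forces one to combine the event's control of $\boldHh\bome_j^*-\boldsymbol{e}_j$ with the $\ell_1$-consistency of $\tilde\bome_j$ from Lemma~\ref{lemma:invhen}, and this interaction is precisely where the cross term $K\delta_\alpha\delta_w$ arises. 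Expressing the variance fluctuation multiplicatively as $1+C$ rather than additively is what makes Lemma~\ref{lemma:combo} applicable, and the quadratic structure of $\hat{\mathcal{L}}$ is essential throughout, since it eliminates any higher-order Taylor remainder that would otherwise demand separate control.
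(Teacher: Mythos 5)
Your proposal is correct and follows essentially the same route as the paper's proof: an exact (quadratic-objective) Taylor expansion yielding the leading term $-\,\bome_j^{*\top}\nabla_{\boldsymbol{\alpha}}\hat{\mathcal{L}}(\boldsymbol{\alpha}^*)$ handled by Lemma~\ref{lemma:A}, a three-piece remainder bounded by H\"older on $\boldsymbol{\mathcal{E}}$ giving $\lambda_j\delta_{\boldsymbol{\alpha}} + \delta_{\boldsymbol{\omega}}\lambda_{\mathrm{lasso}} + K\delta_{\boldsymbol{\alpha}}\delta_{\boldsymbol{\omega}}$, the variance ratio controlled via Lemma~\ref{lemma:var}, and Lemma~\ref{lemma:combo} to combine. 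Your grouping of the remainder (splitting $\boldsymbol{e}_j - \widehat{\boldsymbol{H}}\tilde{\bome}_j$ rather than splitting $\nabla_{\boldsymbol{\alpha}}\hat{\mathcal{L}}(\hat{\boldsymbol{\alpha}})$) is algebraically identical to the paper's $B_1+B_2+B_3$ decomposition, so there is no substantive difference.
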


\begin{proof}
We have by definition of debiased lasso
\begin{align}
     &\tilde{{\alpha}}_j = \hat{{\alpha
     }}_j - \tilde{\bome}_j^\top \nabla_{\boldsymbol{\alpha}}\hat{\mathcal{L}}(\boldsymbol{\hat{\boldsymbol{\alpha}}})
        \\
    &= \hat{{\alpha}}_j - {\bome^*_j}^\top \nabla_{\boldsymbol{\alpha}}\hat{\mathcal{L}}(\hat{\boldsymbol{\alpha}}) + (\tilde{\bome}_j - \bome_j^*)^\top\nabla_{\boldsymbol{\alpha}}\hat{\mathcal{L}}(\hat{\boldsymbol{\alpha}})
        \\
    & = \hat{{\alpha}}_j - {\bome^*_j}^\top\left(\nabla_{\boldsymbol{\alpha}}\hat{\mathcal{L}}(\boldsymbol{\alpha}^*)+\nabla^2_{\boldsymbol{\alpha}}\hat{\mathcal{L}}(\boldsymbol{\alpha}^*)(\hat{\boldsymbol{\alpha}} - \boldsymbol{\alpha}^*)\right) + (\tilde{\bome}_j - \bome_j^*)\nabla_{\boldsymbol{\alpha}}\hat{\mathcal{L}}(\hat{\boldsymbol{\alpha}})
        \\
    &\implies \tilde{{\alpha}}_j - {\ba}_j^* = \underbrace{-{\bome_j^*}^\top\nabla_{\boldsymbol{\alpha}}\hat{\mathcal{L}}(\ba^*)}_{A} -\underbrace{\left(\nabla^2_{\boldsymbol{\alpha}}\hat{\mathcal{L}}(\boldsymbol{\alpha}^*){\bome_j^*}-\boldsymbol{e}_j\right)(\hat{\boldsymbol{\alpha}} - \boldsymbol{\alpha}^*) + (\tilde{\bome}_j - \bome_j^*)^\top\nabla_{\boldsymbol{\alpha}}\hat{\mathcal{L}}(\hat{\boldsymbol{\alpha}})}_{B}
\end{align}
Therefore we have by lemma \ref{lemma:A} that
\begin{equation}
    \sup_{z\in\mathbb{R}}|\mathbb{P}\{\sqrt{n}A/\sigma_n \leq z\} - \Phi(z)| \leq \frac{6.6M||\bome_j^*||}{\sqrt{n}\sigma_j} = \Delta_1
\end{equation}
        Furthermore we have 
\begin{align}
        &B = \left(\nabla^2_{\boldsymbol{\alpha}}\hat{\mathcal{L}}(\boldsymbol{\alpha}^*){\bome_j^*}-\boldsymbol{e}_j\right)(\hat{\boldsymbol{\alpha}} - \boldsymbol{\alpha}^*) + (\tilde{\bome}_j - \bome_j^*)^\top\nabla_{\boldsymbol{\alpha}}\hat{\mathcal{L}}(\hat{\boldsymbol{\alpha}}) \\
        & = \underbrace{\left(\nabla^2_{\boldsymbol{\alpha}}\hat{\mathcal{L}}(\boldsymbol{\alpha}^*){\bome_j^*}-\boldsymbol{e}_j\right)(\hat{\boldsymbol{\alpha}} - \boldsymbol{\alpha}^*)}_{B_1}\\
        &+ \underbrace{(\tilde{\bome}_j - \bome_j^*)^\top\nabla_{\boldsymbol{\alpha}}\hat{\mathcal{L}}(\ba^*)}_{B_2} + \underbrace{(\tilde{\bome}_j - \bome_j^*)^\top\left(\nabla_{\boldsymbol{\alpha}}\hat{\mathcal{L}}(\hat{\boldsymbol{\alpha}}) - \nabla_{\boldsymbol{\alpha}}\hat{\mathcal{L}}(\boldsymbol{\alpha}^*)\right)}_{B_{3}}
\end{align}
and by Taylor expansion
\begin{equation}
    \nabla_{\boldsymbol{\alpha}}\hat{\mathcal{L}}(\hat{\boldsymbol{\alpha}}) - \nabla_{\boldsymbol{\alpha}}\hat{\mathcal{L}}(\boldsymbol{\alpha}^*) = (\hat{\boldsymbol{\alpha}}-\boldsymbol{\alpha}^*)\frac{{\tilde{\boldsymbol{F}}}^\top G \tilde{\boldsymbol{F}}}{n}
\end{equation}
Hence by Holder's inequality:
\begin{align}
    &|B_1| \leq 2 \left\|\hat{\boldsymbol{\alpha}} - \boldsymbol{\alpha}^* \right\|_1 \left\|\nabla^2_{\boldsymbol{\alpha}} \hat{\mathcal{L}}(\boldsymbol{\alpha}^*) \boldsymbol{\omega}_j^* - \boldsymbol{e}_j \right\|_\infty \leq \lambda_j \delta_{\boldsymbol{\alpha}}
    \\
    &|B_2| \leq 2 \left\|\tilde{\boldsymbol{\omega}}_j - \boldsymbol{\omega}_j^* \right\|_1 \left\|\nabla_{\boldsymbol{\alpha}} \hat{\mathcal{L}}(\boldsymbol{\alpha}^*)  \right\|_\infty \leq \delta_{\boldsymbol{\omega}} \lambda_{\mathrm{lasso}}
    \\
    &|B_3| \leq K \left\|\hat{\boldsymbol{\alpha}} - \boldsymbol{\alpha}^* \right\|_\infty \left\|\tilde{\boldsymbol{\omega}}_j - \boldsymbol{\omega}_j^* \right\|_1 \leq 4K \left\|\hat{\boldsymbol{\alpha}} - \boldsymbol{\alpha}^* \right\|_1 \left\|\tilde{\boldsymbol{\omega}}_j - \boldsymbol{\omega}_j^* \right\|_1 \leq K \delta_{\boldsymbol{\alpha}} \delta_{\boldsymbol{\omega}}
\end{align}

where $K = C
_f^2C_g$ is a constant and $C_f,C_g$ are bounds of $||\boldsymbol{f(x)}||_\infty$ and $|g(t)|$ respectively, recall we assume bounded sufficient statistics. Therefore we have 
\begin{equation}
    \frac{\sqrt{n}|B|}{\sigma_j} \leq \frac{\sqrt{n}(\lambda_j \delta_{\boldsymbol{\alpha}} + \delta_{\boldsymbol{\omega}} \lambda_{\mathrm{lasso}} + K \delta_{\boldsymbol{\alpha}} \delta_{\boldsymbol{\omega}})}{\sigma_j} = \Delta_2
\end{equation}

Moreover, by \cref{lemma:var},
\begin{equation}
    C = \left|\frac{\hat{\sigma}_j}{\sigma_j} - 1\right| = \left|\frac{\hat{\sigma}_j - \sigma_j}{\sigma_j}\right| 
    \leq \left|\frac{\hat{\sigma}_j^2 - \sigma_j^2}{\sigma_j^2}\right| 
    \leq \frac{(2L \delta_{\boldsymbol{\alpha}} + \delta_\sigma)\left(\|\boldsymbol{\omega}_j^*\|^2 + \delta_{\boldsymbol{\omega}}^2\right) + |||\boldsymbol{\Sigma}|||_\infty \delta_{\boldsymbol{\omega}}^2}{\sigma_j^2} = \delta_C
\end{equation}

where first inequality can be derived using the difference of squares formula.
Finally we apply \ref{lemma:combo}, obtain
\begin{equation}
        \sup_{z\in\mathbb{R}}\left|\mathbb{P}\{\sqrt{n}(\tilde{\alpha}_j-\alpha_j^*)/\hat{\sigma}_j\leq z \} - \Phi(z)\right| \leq \Delta_1+\Delta_2+\Delta_3 +\epsilon
\end{equation}
where $\Delta_3 = \frac{\delta_C}{1-\delta_C}$.
\end{proof}

\subsection{Proof of Theorem \ref{thm:main2}}\label{pf:main2}

\begin{theorem}
    Denoting $s_{\bome,j}$ as the cardinality of support set of $\bome_j^*$ and $s$ as the cardinality of support set of $\ba^*$ respectively. Let $\tilde{\alpha}_j$ be debiased lasso estimator with tuning parameter
    \begin{equation}
        \lambda_{\mathrm{lasso}} \in \mathcal{O}\left(\sqrt{\frac{\log k}{n}}\right); \text{ and }  \lambda_j \in  \mathcal{O}\left(\sqrt{s_{\boldsymbol{\omega},j} \frac{\log k}{n}}\right)
        \end{equation}
    we have there exists positive constants $c,c^\prime$ such that
    \begin{equation}
    \sup_{z\in\mathbb{R}} \left|\mathbb{P}\{\sqrt{n}(\tilde{\alpha}_j-\alpha^*_j)/\hat{\sigma}_j \leq z\}-\Phi(z)\right|\leq \mathcal{O}\left(\sqrt{n}s^{3/2}_{\boldsymbol{\omega},j} s{\frac{\log k}{n}}\right) + c\exp\{-c^\prime \log k\}
    \end{equation}
        
\end{theorem}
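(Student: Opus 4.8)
The plan is to instantiate the general Gaussian Approximation Bound of \cref{thm:remain} with the prescribed regularisation schedule, and then to show two things separately: (i) that the defining event $\boldsymbol{\mathcal{E}}$ fails only with probability $c\exp\{-c'\log k\}$, which supplies the additive exponential term; and (ii) that the three deterministic quantities $\Delta_1,\Delta_2,\Delta_3$ collapse to the single polynomial rate $\mathcal{O}\!\left(\|{\bome}^*_j\|_0^{3/2}\|{\ba}^*\|_0\frac{\log k}{\sqrt n}\right)$.

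First I would fix the tolerances entering $\boldsymbol{\mathcal{E}}$. The sub-Gaussian tail bound derived in the proof of \cref{probineqconst} shows that $\lambda_{\mathrm{lasso}}\in\mathcal{O}(\sqrt{\log k/n})$ makes the gradient condition $2\|\nabla_{\boldalpha}\hat{\mathcal{L}}(\boldalpha^*)\|_\infty\le\lambda_{\mathrm{lasso}}$ hold outside an event of probability $c\exp\{-c'\log k\}$; \cref{lemma:F5} does the same for the Hessian condition once $\lambda_j\in\mathcal{O}(\sqrt{\|{\bome}_j^*\|_0\log k/n})$, which is admissible because $\|{\bome}_j^*\|_0\ge 1$ forces $\lambda_j\ge c_0\sqrt{\log k/n}$; and \cref{lemma:cov} supplies $\delta_\sigma\in\mathcal{O}(\sqrt{\log k/n})$ with the same type of failure probability. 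On the intersection of the gradient and Hessian events, \cref{consi} together with \cref{REomega}-based \cref{lemma:invhen} convert these bounds into the \emph{deterministic} estimation rates
\begin{align*}
\delta_{\boldalpha}=\tfrac{6}{\kappa}\|{\ba}^*\|_0\lambda_{\mathrm{lasso}}\in\mathcal{O}\!\left(\|{\ba}^*\|_0\sqrt{\tfrac{\log k}{n}}\right),\quad \delta_w=\tfrac{6}{\kappa_j}\|{\bome}_j^*\|_0\lambda_j\in\mathcal{O}\!\left(\|{\bome}_j^*\|_0^{3/2}\sqrt{\tfrac{\log k}{n}}\right),
\end{align*}
so the two estimation-error conditions of $\boldsymbol{\mathcal{E}}$ are then automatic. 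A union bound over the three stochastic events yields $\epsilon\le c\exp\{-c'\log k\}$, exactly the exponential tail in the statement.

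Next I would substitute these rates into $\Delta_1,\Delta_2,\Delta_3$. Treating $\|{\bome}_j^*\|_1$ and $\sigma_j$ as bounded, $\Delta_1\in\mathcal{O}(n^{-1/2})$ is negligible. For $\Delta_2=\sqrt{n}(\lambda_j\delta_{\boldalpha}+\delta_w\lambda_{\mathrm{lasso}}+K\delta_{\boldalpha}\delta_w)/\sigma_j^2$, multiplying the three products by $\sqrt n$ produces terms of order $\|{\ba}^*\|_0\|{\bome}_j^*\|_0^{1/2}\frac{\log k}{\sqrt n}$, $\|{\bome}_j^*\|_0^{3/2}\frac{\log k}{\sqrt n}$ and $\|{\ba}^*\|_0\|{\bome}_j^*\|_0^{3/2}\frac{\log k}{\sqrt n}$; the cross term $K\delta_{\boldalpha}\delta_w$ dominates and pins down the claimed polynomial rate. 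Finally $\Delta_3=\delta_C/(1-\delta_C)$ with $\delta_C\in\mathcal{O}(\delta_{\boldalpha}+\delta_\sigma+\delta_w^2)$ (via \cref{lemma:var}) is of strictly smaller order than $\Delta_2$, and since $\delta_C\to0$ the denominator $1-\delta_C$ is bounded below, so $\Delta_3$ is absorbed. Summing $\Delta_1+\Delta_2+\Delta_3+\epsilon$ gives the bound.

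The main obstacle is the bookkeeping: I must propagate the two sparsity factors $\|{\ba}^*\|_0$ and $\|{\bome}_j^*\|_0$ through every product, confirm that the cross term $K\delta_{\boldalpha}\delta_w$ in $\Delta_2$ is the genuine bottleneck rather than the $\delta_C$-driven $\Delta_3$, and arrange the constants in the three tail bounds to depend only on $M$, $\|{\bome}_j^*\|_1$, $\kappa$ and $\kappa_j$ so that the union bound collapses cleanly into a single $c\exp\{-c'\log k\}$.
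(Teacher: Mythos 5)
Your proposal is correct and follows essentially the same route as the paper's proof: restrict to the three stochastic events (gradient, Hessian--inverse column, covariance), use Theorem \ref{consi} and Lemma \ref{lemma:invhen} to turn them into the deterministic rates $\delta_{\boldalpha}\in\mathcal{O}(s\sqrt{\log k/n})$ and $\delta_w\in\mathcal{O}(s_{\bome,j}^{3/2}\sqrt{\log k/n})$, apply a union bound with Lemmas \ref{lemma:Jexp}, \ref{lemma:F5}, \ref{lemma:cov} to get the $c\exp\{-c'\log k\}$ tail, and read off the dominant rate from $\Delta_2$. If anything, your bookkeeping is slightly more explicit than the paper's, which simply asserts that $\Delta_1$ and $\Delta_3$ are of smaller order, whereas you identify the cross term $K\delta_{\boldalpha}\delta_w$ as the bottleneck and justify why $\Delta_3$ is absorbed.
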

\begin{proof}
Consider the event $\boldsymbol{\mathcal{E}}^L $
    \begin{align}
        2||\nabla_{\boldsymbol{\alpha}}\hat{\mathcal{L}}(\boldsymbol{\alpha}^*)||_{\infty} &\leq  \lambda_{\mathrm{lasso}} \label{eq:92}
        \\
        2||\nabla^2_{\boldsymbol{\alpha}}\hat{\mathcal{L}}(\boldsymbol{\alpha}^*)\bome^*_j-\boldsymbol{e}_j||_{\infty} &\leq  \lambda_{j}\label{eq:93}
        \\
        ||\hat{\mb \Sigma}(\boldsymbol{\alpha}^*)-\boldsymbol{\Sigma}||_{\infty}&\leq \sqrt{s}\lambda_{\mathrm{lasso}}
    \end{align}
We have $\boldsymbol{\mathcal{E}}^L \subseteq \boldsymbol{\mathcal{E}}$ under \cref{ass:re} since the following:\\
First we have by \cref{consi} that from equation (\ref{eq:92}) with \cref{ass:re}
\begin{equation}
    \|\hat{\boldsymbol{\alpha}}-\boldsymbol{\alpha}^*\|_1 \leq \frac{6}{\kappa}\lambda_{\mathrm{lasso}}s \in \mathcal{O}\left(s\sqrt{\frac{\log k}{n}}\right)
\end{equation}
In addition, we have from \cref{lemma:invhen} that from equation (\ref{eq:93})
\begin{equation}
    \|\tilde{\boldsymbol{\omega}}_j-\boldsymbol{\omega}^*_j\|_1 \leq \frac{6}{\kappa}\lambda_{j}s_{\boldsymbol{\omega},j}\in\mathcal{O}\left(\sqrt{s_{\boldsymbol{\omega},j}^3 \frac{\log k}{n}}  \right)\label{eq:delta2O}
\end{equation}
therefore $\boldsymbol{\mathcal{E}}^L \subseteq \boldsymbol{\mathcal{E}}$ and we have 
\begin{equation}
    \Delta_2 \in \mathcal{O}\left(\sqrt{n}s^{3/2}_{\boldsymbol{\omega},j} s{\frac{\log k}{n}}\right)
\end{equation}
We ignore $\Delta_1$ and $\Delta_3$ since they are of smaller order.\\
Next we bound $\mathbb{P}\left(\{\boldsymbol{\mathcal{E}^{L}}\}^{c}\right)$. Let
\begin{align}
    \boldsymbol{\mathcal{E}}_1 &= \{2||\nabla_{\boldsymbol{\alpha}}\hat{\mathcal{L}}(\boldsymbol{\alpha}^*)||_{\infty} \leq \lambda_{\mathrm{lasso}}\}\\
    \boldsymbol{\mathcal{E}}_2 &= \{2||\nabla^2_{\boldsymbol{\alpha}}\hat{\mathcal{L}}(\boldsymbol{\alpha}^*)\bome^*_j-\boldsymbol{e}_j||_{\infty} \leq \lambda_{j}\}\\
    \boldsymbol{\mathcal{E}}_3 &= \{||\hat{\mb \Sigma}(\boldsymbol{\alpha}^*)-\boldsymbol{\Sigma}||_{\infty}\leq \sqrt{s}\lambda_{\mathrm{lasso}}\}
\end{align}
It is obvious that 
\begin{equation}
    \mathbb{P}\left(\boldsymbol{\{{\mathcal{E}}^{L}}\}^c\right)\leq \sum_{i=1}^3  \mathbb{P}\left(\boldsymbol{\mathcal{E}}^{c}_i\right)
\end{equation}
Under bounded condition, we have the following:
By \cref{lemma:Jexp} and \cref{lemma:F5}, there exist constants \(c_1\), \(c_1^\prime\), \(c_2\), and \(c_2^\prime\).
\begin{align}
    \mathbb{P}\left(\boldsymbol{\mathcal{E}}^{c}_1\right) &= \mathbb{P}\left(2||\nabla_{\boldsymbol{\alpha}}\hat{\mathcal{L}}(\boldsymbol{\alpha}^*)||_{\infty} > \lambda_{k}\right)\leq c_1\exp\{-c_1^\prime\log k\}\\
    \mathbb{P}\left(\boldsymbol{\mathcal{E}}^{c}_2\right) &= \mathbb{P}\left(2||\nabla^2_{\boldsymbol{\alpha}}\hat{\mathcal{L}}(\boldsymbol{\alpha}^*)-\boldsymbol{e}_k||_{\infty} > \lambda_{\mathrm{lasso}}\right)\leq c_2\exp\{-c_2^\prime\log k\}
\end{align}
and by \cref{lemma:cov} there exists $c_3,c_3^\prime$
\begin{equation}
     \mathbb{P}\left(\boldsymbol{\mathcal{E}}^{c}_3\right) = \mathbb{P}\left(||\hat{\mb \Sigma}(\boldsymbol{\alpha}^*)-\boldsymbol{\Sigma}||_{\infty}> \sqrt{s}\lambda_{\mathrm{lasso}}\right)\leq c_3\exp\{-c_3^\prime\log k\}\label{eq:107}
\end{equation}
Therefore there exists $c,c^\prime$
\begin{equation}\label{eq:108}
     \mathbb{P}\left(\boldsymbol{\{{\mathcal{E}}^{L}}\}^c\right) \leq c\exp\{-c^\prime \log k\}
\end{equation}
Finally we complete the proof with combining the bounds of (\ref{eq:108}) and (\ref{eq:delta2O}),
\begin{equation}
    \sup_{z\in\mathbb{R}} \left|\mathbb{P}\{\sqrt{n}(\tilde{\alpha}_j-\alpha^*_j)/\hat{\sigma}_j \leq z\}-\Phi(z)\right|\leq \mathcal{O}\left(\sqrt{n}s^{3/2}_{\boldsymbol{\omega},j} s{\frac{\log k}{n}}\right) + c\exp\{-c^\prime \log k\}
\end{equation}
\end{proof}

\section{Score Model, a Gaussian Example} \label{app:gaussian_score}

Let us group this example into three parts: fixed mean and time-dependent variance ($\mu$ and $\sigma_t^2$), time-dependent mean and fixed variance ($\mu_t$ and $\sigma^2$), and time-dependent mean and variance ($\mu_t$ and $\sigma_t^2$). Each one is detailed in distinct sections below.

Across all three cases we aim to verify the following equation holds 
\begin{equation}
\partial_t \log q_t = \langle \partialthetasmall, \mb{f}({x}) - \E_{q_t} [\mb{f}({y})] \rangle,
\label{eq:app:dlogqt}
\end{equation}
which is our formulation given by \cref{thm:score}. For the exponential family, the natural parameterisation of the Gaussian distribution is given by
\begin{equation}
\mb\theta = \begin{bmatrix}
    \frac{\mu}{\sigma^2} \\ -\frac{1}{2\sigma^2}
\end{bmatrix}
\label{eq:app:naturalparams}
\end{equation}
for a given $\mu$ and $\sigma^2$.

\paragraph{Fixed mean and time-dependent variance} 

Let the fixed mean be written as $\mu$ and the time-dependent variance be written as $\sigma_t^2$. 
Firstly, according to this Gaussian distribution, the LHS of \cref{eq:app:dlogqt} is given by
\begin{align}
\partial_t \log q_t &= \partial_t \br{\frac{-(x-\mu)^2}{2\sigma_t^2}} - \partial_t \log (\sqrt{2\pi}\sigma_t) \nonumber \\
&= - \partial_t \br{\frac{x^2}{2\sigma_t^2}} + \partial_t\br{\frac{x\mu}{\sigma_t}} - \partial_t\br{\frac{\mu^2}{2\sigma_t^2}} - \partial_t \br{\log \sigma_t} \nonumber \\
&= - x^2 \partial_t \br{\frac{1}{2\sigma_t^2}} + x\mu \partial_t\br{\frac{1}{\sigma_t}} - \mu^2 \partial_t\br{\frac{1}{2\sigma_t^2}} - \frac{\partial_t \sigma_t}{\sigma_t} \label{eq:app:fixed_mean_time_var}
\end{align}
We aim to show that when $\mb{f}(x) = [x, x^2]$, the RHS of \cref{eq:app:dlogqt} is equal to this. We first write
\[
\mb\theta_t = \begin{bmatrix}
    \frac{\mu}{\sigma_t^2} \\ -\frac{1}{2\sigma_t^2}
\end{bmatrix}, \qquad \partialthetasmall = \begin{bmatrix}
    \mu \partial_t \br{\frac{1}{\sigma_t^2}} \\ -\partial_t \br{\frac{1}{2\sigma_t^2}}
\end{bmatrix}.
\]
\begin{align*}
    \left\langle \partial_t \theta_t, \mb{f}({x}) - \E_{q_t} [\mb{f}({y})] \right\rangle &= \left\langle \begin{bmatrix}
        \partial_t \br{ \frac{\mu}{\sigma_t^2} } \\
        \partial_t \br{ -\frac{1}{2\sigma_t^2} }         
    \end{bmatrix}, \begin{bmatrix}
        x \\
        x^2 
    \end{bmatrix} - \begin{bmatrix}
        \Eqtsmall[y] \\
        \Eqtsmall[y^2] 
    \end{bmatrix} \right\rangle \\
    &= \left\langle \begin{bmatrix}
        \partial_t \br{ \frac{\mu}{\sigma_t^2} } \\
        \partial_t \br{ -\frac{1}{2\sigma_t^2} }         
    \end{bmatrix}, \begin{bmatrix}
        x \\
        x^2 
    \end{bmatrix} - \begin{bmatrix}
        \mu \\
        \sigma_t^2 + \mu^2 
    \end{bmatrix} \right\rangle \\
    &= \left\langle \begin{bmatrix}
        \partial_t \br{ \frac{\mu}{\sigma_t^2} } \\
        \partial_t \br{ -\frac{1}{2\sigma_t^2} }         
    \end{bmatrix}, \begin{bmatrix}
        x - \mu \\
        x^2 - \sigma_t^2 - \mu^2
    \end{bmatrix}\right\rangle \\
    &= \partial_t \br{ \frac{\mu}{\sigma_t^2} } (x - \mu) - \partial_t \br{\frac{1}{2\sigma_t^2} }(x^2 - \sigma_t^2 - \mu^2) \\
    &=x\mu \partial_t\br{ \frac{1}{\sigma_t^2} } - \mu^2 \partial_t\br{ \frac{1}{\sigma_t^2} } - x^2\partial_t\br{ \frac{1}{2\sigma_t^2} } + \sigma_t^2 \partial_t\br{ \frac{1}{2\sigma_t^2} } + \mu^2 \partial_t\br{ \frac{1}{2\sigma_t^2} } \\
    &=x\mu \partial_t\br{ \frac{1}{\sigma_t^2} } - \mu^2 \partial_t\br{ \frac{1}{2\sigma_t^2} } - x^2\partial_t\br{ \frac{1}{2\sigma_t^2} } + \sigma_t^2 \partial_t\br{ \frac{1}{2\sigma_t^2} }. 
\end{align*}
By the chain rule,
\[
\sigma_t^2\partial_t\br{\frac{1}{2\sigma_t^2}} = \frac{\sigma_t^2}{2}\partial_t\br{\sigma_t^{-2}} = -\frac{\sigma_t^2}{2} \frac{2}{\sigma_t^3}\partial_t \sigma_t = -\frac{\partial_t \sigma_t}{\sigma_t},
\]
which, substituted into the equation above, leaves
\[
x\mu \partial_t\br{ \frac{1}{\sigma_t^2} } - \mu^2 \partial_t\br{ \frac{1}{2\sigma_t^2} } - x^2\partial_t\br{ \frac{1}{2\sigma_t^2} } - \frac{\partial_t\sigma_t}{\sigma_t}
\]
for which all terms match the terms in \cref{eq:app:fixed_mean_time_var} as desired.

Since we are doing very similar operations across all examples, the following two derivations will be lighter on details but should be straightforward to follow.

\paragraph{Time-dependent mean and fixed variance}
Firstly, write $\mu_t$ and $\sigma^2$ as the mean and variance of this Gaussian distribution, respectively. The time score function, i.e. the LHS of \cref{eq:app:dlogqt} is given by
\begin{align}
    \partial_t \log q_t &= \partial_t \br{\frac{-(x-\mu_t)^2}{2\sigma^2}} - \partial_t \log (\sqrt{2\pi}\sigma) \nonumber \\
&= \frac{2x \partial_t \mu_t - \partial_t (\mu_t^2)}{2\sigma^2} .\label{eq:app:dlogqt_time_mean_fixed_var}
\end{align}
The natural parameters are given by
\[
\mb\theta_t = \begin{bmatrix}
    \frac{\mu_t}{\sigma^2} \\ -\frac{1}{2\sigma^2}
\end{bmatrix}, \qquad \partialthetasmall = \begin{bmatrix}
    \partial_t \br{\frac{\mu_t}{\sigma^2}} \\ \partial_t \br{-\frac{1}{2\sigma^2}}
\end{bmatrix} = \begin{bmatrix}
    \frac{\partial_t \mu_t}{\sigma^2} \\ 0
\end{bmatrix},
\]
and so we consider the first dimension only. The RHS of \cref{eq:app:dlogqt}, when $\mb{f}(x) = x$ is given by
\begin{align*}
    \left\langle \partial_t \theta_t, \mb{f}({x}) - \E_{q_t} [\mb{f}({y})] \right\rangle &= \brangle{ \frac{ \partial_t \mu_t}{\sigma^2}, x - \mu_t} \\
    &= \frac{x \partial_t \mu_t - \mu_t \partial_t \mu_t}{\sigma^2} \\
    &= \frac{x \partial_t \mu_t - \frac{\partial_t (\mu_t^2)}{2}}{\sigma^2} \\
    &= \frac{2x \partial_t \mu_t - \partial_t (\mu_t^2)}{2\sigma^2} 
\end{align*}
where the penultimate line is due to the chain rule. This matches \cref{eq:app:dlogqt_time_mean_fixed_var} as desired.

\paragraph{Time-dependent mean and variance}

Firstly, write $\mu_t$ and $\sigma_t^2$ as the mean and variance of this Gaussian distribution, respectively. The time score function, i.e. the LHS of \cref{eq:app:dlogqt} is given by
\begin{align}
    \partial_t \log q_t &= \partial_t \br{\frac{-(x-\mu_t)^2}{2\sigma_t^2}} - \partial_t \log (\sqrt{2\pi}\sigma_t) \nonumber \\
&= -x^2 \partial_t \br{\frac{1}{2\sigma_t^2}} + x\frac{\partial_t \mu_t}{\sigma_t^2} + x\mu_t \partial_t \br{\frac{1}{\sigma_t^2}} - \frac{\partial_t (\mu_t^2)}{2\sigma_t^2} - \mu_t^2\partial_t \br{\frac{1}{2\sigma_t^2}} - \frac{\partial_t \sigma_t}{\sigma_t}.
\label{eq:app:dlogqt_time_mean_time_var}
\end{align}
The natural parameters are given by
\[
\mb\theta_t = \begin{bmatrix}
    \frac{\mu_t}{\sigma_t^2} \\ -\frac{1}{2\sigma_t^2}
\end{bmatrix}, \qquad \partialthetasmall = \begin{bmatrix}
    \partial_t \br{\frac{\mu_t}{\sigma_t^2}} \\ \partial_t \br{-\frac{1}{2\sigma_t^2}}
\end{bmatrix} = \begin{bmatrix}
    \frac{\partial_t \mu_t}{\sigma_t^2} + \mu_t\partial_t \br{\frac{1}{\sigma_t^2}} \\ -\partial_t\br{\frac{1}{2\sigma_t^2}}
\end{bmatrix},
\]
The RHS of \cref{eq:app:dlogqt} when $\mb{f}(x) = [x, x^2]$ is given by
\begin{align*}
    \left\langle \partial_t \theta_t, \mb{f}({x}) - \E_{q_t} [\mb{f}({y})] \right\rangle &= \left\langle \begin{bmatrix}
        \frac{\partial_t \mu_t}{\sigma_t^2} + \mu_t\partial_t \br{\frac{1}{\sigma_t^2}} \\ -\partial_t\br{\frac{1}{2\sigma_t^2}}     
    \end{bmatrix}, \begin{bmatrix}
        x - \mu_t \\
        x^2 - \sigma_t^2 - \mu_t^2
    \end{bmatrix}\right\rangle \\
    &= \br{\frac{\partial_t \mu_t}{\sigma_t^2} + \mu_t\partial_t \br{\frac{1}{\sigma_t^2}}}( x - \mu_t) - \partial_t\br{\frac{1}{2\sigma_t^2}}( x^2 - \sigma_t^2 - \mu_t^2)\\
    &= x \frac{\partial_t \mu_t}{\sigma_t^2} + x\mu_t\partial_t \br{\frac{1}{\sigma_t^2}} - \mu_t\frac{\partial_t \mu_t}{\sigma_t^2}  - \mu_t^2\partial_t \br{\frac{1}{\sigma_t^2}} - x^2 \partial_t\br{\frac{1}{2\sigma_t^2}} + \sigma_t^2 \partial_t\br{\frac{1}{2\sigma_t^2}} + \mu_t^2 \partial_t\br{\frac{1}{2\sigma_t^2}} \\
    &= x \frac{\partial_t \mu_t}{\sigma_t^2} + x\mu_t\partial_t \br{\frac{1}{\sigma_t^2}} - \mu_t\frac{\partial_t \mu_t}{\sigma_t^2}  + \mu_t^2\partial_t \br{\frac{1}{2\sigma_t^2}} - x^2 \partial_t\br{\frac{1}{2\sigma_t^2}} + \sigma_t^2 \partial_t\br{\frac{1}{2\sigma_t^2}}\\
    &= x \frac{\partial_t \mu_t}{\sigma_t^2} + x\mu_t\partial_t \br{\frac{1}{\sigma_t^2}} - \mu_t\frac{\partial_t \mu_t}{\sigma_t^2}  + \mu_t^2\partial_t \br{\frac{1}{2\sigma_t^2}} - x^2 \partial_t\br{\frac{1}{2\sigma_t^2}} - \frac{\partial_t \sigma_t}{\sigma_t} \\
    &= x \frac{\partial_t \mu_t}{\sigma_t^2} + x\mu_t\partial_t \br{\frac{1}{\sigma_t^2}} - \frac{\partial_t (\mu_t^2)}{2\sigma_t^2}  - \mu_t^2\partial_t \br{\frac{1}{2\sigma_t^2}} - x^2 \partial_t\br{\frac{1}{2\sigma_t^2}} - \frac{\partial_t \sigma_t}{\sigma_t},
\end{align*}
where the last three equalities, in their respective order, are due to collecting like terms, the chain rule on the last term, and the chain rule again on the third term. This matches \cref{eq:app:dlogqt_time_mean_time_var}, completing the proof.

\section{Simulation Study Implementation Details}\label{app:simulation}

\subsection{Construction of Gaussian Graphical Models for Estimation}
\label{sec:spartsm}
In \cref{sec:diff.est}, we consider two Gaussian Graphical Models. 

\textbf{Random Sine Gaussian Graphical Models} refers to Gaussian Graphical Models whose edges that vary with time are sine functions of $t$ and the edges that depend on $t$ are randomly chosen with a Bernoulli distribution with probability 0.02. We set the diagonal element $2$ and off-diagonal elements are
\begin{equation}
    \Theta_{i,j}^\prime(t) = \Theta_{j,i}^\prime(t) = \left\{
    \begin{array}{l}
    0.5 \cdot \sin(10t) \text{ w.p. } 0.02 \\
    0 \text{ w.p. } 0.98
    \end{array}
    \right. \text{ for } i\in[20], j\neq i
\end{equation}

\textbf{Random Linear Gaussian Graphical Models} refers to Gaussian Graphical Models whose edges that vary with time are sine functions of $t$ and the edges that depend on $t$ are randomly chosen with a Bernoulli distribution with probability 0.023. We set the diagonal element $1$ and off-diagonal elements are
\begin{equation}
    \Theta_{i,j}^\prime(t) = \Theta_{j,i}^\prime(t) = \left\{
    \begin{array}{l}
    0.45t \text{ w.p. } 0.023 \\
    0 \text{ w.p. } 0.977
    \end{array}
    \right. \text{ for } i\in[40], j\neq i
\end{equation}

\subsubsection{Construction of $\Theta_{0}$}
To ensure the positive definiteness and symmetry of the matrix $\Theta_{0}$, our procedure for constructing $\Theta_{0}$ is as follows:
First, we sample a matrix $A \in \mathbb{R}^{k \times k}$, where each element $A_{i,j} \sim \mathcal{N}(0,1)$.
We then compute $A^\top A / d /2$.
Finally, we replace the diagonal entries of $A^\top A$ with 0 and obtain $\Theta_0$. 

\subsection{Construction of Gaussian Graphical Models for Inference}\label{app:expset}
In \cref{sec:spartsmplus}, we considered two different types of Gaussian Graphical Models. 

\textbf{Deterministic Gaussian Graphical Models }refers to a Gaussian Graphical Model whose edges vary linearly with time and those edges are manually chosen. In our experimental setting, we set the diagonal elements above and below the main diagonal of $\Theta^\prime(t)$ to vary with time linearly, as well as the edge between the first node and third and forth node, specifically
\begin{equation}
    \Theta^\prime_{i,i+1}(t) = \Theta^\prime_{i+1,i}(t) = t \text{ for } 1\leq i\leq19 \text{ and } \Theta^\prime_{1,i}(t) = \Theta^\prime_{i,1}(t) = t \text{ for } i = 3,4,5
\end{equation}
and remaining elements are set at 0.\\
\textbf{Random Gaussian Graphical Models }refers to a Gaussian Graphical Model whose edges vary with time linearly are random. In our experiment setting, we set the off-diagonal elements except edge of interest to follow a Bernoulli distribution with probability $0.2$, i.e.
\begin{equation}
    \Theta_{i,j}^\prime(t) = \Theta_{j,i}^\prime(t)  = \left\{
    \begin{array}{l}
    t \text{ w.p. } 0.2 \\
    0 \text{ w.p. } 0.8
    \end{array}
    \right. \text{ for } i\in[20], j\neq i
\end{equation}
We then refill the diagonal entries to 0.

\subsubsection{Construction of $\Theta_{0}$}
To ensure the positive definiteness and symmetry of the matrix $\Theta_{0}$, our procedure for constructing $\Theta_{0}$ is as follows:
First, we sample a matrix $A \in \mathbb{R}^{k \times k}$, where each element $A_{i,j} \sim U(0,1)$ are sampled uniformly.
We then compute $\delta A^\top A$, where we set $\delta = 0.01$.
Finally, we replace the diagonal entries of $\delta A^\top A$ with 12 and obtain $\Theta_0$, where the choice of 12 ensures the positive definiteness of ${\Theta(t)}^{-1}$ in the power test experiments.

\subsection{Weighting Function}
\label{sec:weighting.function}
In Section \ref{sec:estimating:obj}, we specify the weight function $g(t)$ with the condition that it equals zero at the boundaries of the time domain. 
For truncated score matching, \citet{song} propose a distance function as $g$, from which we take inspiration.
Let $t_{\text{start}}$ and $t_{\text{end}}$ denote the start and end of the time domain respectively. We propose
\begin{equation}
g(t) \coloneqq -(t - t_{\text{start}})(t - t_{\text{end}}),
    \label{eq:g}
\end{equation}
and consequently $\partial_t g_t = -(2t - t_{\text{start}} - t_{\text{end}})$. 
In experimental results, we have observed that the choice of $g$ does not have significant impact on the performance.

\subsection{Hyperparameter Choice}
In the coverage experiments, we set the regularization parameter for Lasso to be $\lambda_{\mathrm{lasso}} = \sqrt{2\frac{\log k}{n}}$, and we use the same value for the regularization parameter in the inverse Hessian estimation.

\subsection{Loggle for Testing Changing Edge}
\label{loggle_detail}
Loggle\citep{yang2018estimating}, built as an R package, is the main method we compared with in both estimation and inference task. We generate data and call the R package of Loggle from python. 

The main challenge we face when implementing the Loggle in comparison is how to turn $\Theta(t)$ obtained by Loggle into $\partial_t \Theta(t)$ which is related to the change of edge. Here we use a heuristic approach, permutation tests to find appropriate quantiles for deciding whether the edge should be considered changing. We shuffle the data matrix 100 times so that the dependency between $\mb x$ and $t$ are broken. Then we apply Loggle to the shuffled data and use least square linear regression to obtain the slope. The $2.5\%$ and $97.5\%$ quantiles of the set of slopes are set as thresholds; any slope from original data that falls outside the $95\%$ coverage indicates a changing edge. When calculating the coverage and power, we compute new quantiles for each trial individually.

\end{document}


%

%

\onecolumn
\aistatstitle{Instructions for Paper Submissions to AISTATS 2024: \\
Supplementary Materials}

\section{FORMATTING INSTRUCTIONS}

To prepare a supplementary pdf file, we ask the authors to use \texttt{aistats2024.sty} as a style file and to follow the same formatting instructions as in the main paper.
The only difference is that the supplementary material must be in a \emph{single-column} format.
You can use \texttt{supplement.tex} in our starter pack as a starting point, or append the supplementary content to the main paper and split the final PDF into two separate files.

Note that reviewers are under no obligation to examine your supplementary material.

\section{MISSING PROOFS}

The supplementary materials may contain detailed proofs of the results that are missing in the main paper.

\subsection{Proof of Lemma 3}

\textit{In this section, we present the detailed proof of Lemma 3 and then [ ... ]}

\section{ADDITIONAL EXPERIMENTS}

If you have additional experimental results, you may include them in the supplementary materials.

\subsection{The Effect of Regularization Parameter}

\textit{Our algorithm depends on the regularization parameter $\lambda$. Figure 1 below illustrates the effect of this parameter on the performance of our algorithm. As we can see, [ ... ]}

\vfill